\newtheorem{cor}{Corollary}[section]
\newtheorem{defn}{Definition}[section]
\newtheorem{thm}{Theorem}[section]
\newtheorem{lemma}{Lemma}[section]
\newtheorem{prop}{Proposition}[section]
\theoremstyle{definition}
\newtheorem{remark}{Remark}[section]
\newcommand{\ta}{\Tilde{a}}
\newcommand{\tb}{\Tilde{b}}
\newcommand{\tf}{\Tilde{f}}
\newcommand{\tw}{\Tilde{w}}
\newcommand{\tx}{\Tilde{x}}
\newcommand{\tz}{\Tilde{z}}
\newcommand{\ttheta}{\Tilde{\theta}}
\newcommand{\tsigma}{\Tilde{\sigma}}
\title{\textbf{Linear Independence of Generalized Neurons and Related Functions}}
\author{Leyang Zhang}
\date{}
\begin{document}

\maketitle

\begin{abstract}
    The linear independence of neurons plays a significant role in theoretical analysis of neural networks. Specifically, given neurons $H_1, ..., H_n: \bR^N \times \bR^d \to \bR$, we are interested in the following question: when are $\{H_1(\theta_1, \cdot), ..., H_n(\theta_n, \cdot)\}$ are linearly independent as the parameters $\theta_1, ..., \theta_n$ of these functions vary over $\bR^N$. Previous works give a complete characterization of two-layer neurons without bias, for generic smooth activation functions. In this paper, we study the problem for neurons with arbitrary layers and widths, giving a simple but complete characterization for generic analytic activation functions. 
\end{abstract}

\section{Introduction}\label{Section Intro}

The study of linear independence of neurons has played an important part in not only the detailed analysis of shallow neural networks (NN) -- e.g., two-layer networks \citep{LZhangCrit, LZhangGlobal}, but also some analysis of deep NN, such as Emebdding Principle proposed by \citet{YZhangEBDD, YZhangEBDD2}. In this paper we study the following problem: 

\begin{center}
   \textit{ Given parameter space $\bR^N$ and input space $\bR^n$. For any $m \in \bN$, under what conditions of parameters $\theta_1, ..., \theta_n \in \bR^N$ are $H(\theta_1, \cdot), ..., H(\theta_m, \cdot)$ linearly dependent? }
\end{center} 

The simplest possible cases (1) \label{simplest case of linear dependence} are as follows: 
\begin{itemize}
    \item [(a)] When $H(\theta_k, \cdot)$ is constant zero for some $1 \le k \le m$. 
    \item [(b)] When $H(\theta_k, \cdot), H(\theta_j, \cdot)$ are both constant for some distinct $k, j \in \{1, ..., m\}$. 
    \item [(c)] When $H(\theta_k, \cdot) = H(\theta_j, \cdot)$ for some distinct $k, j \in \{1, ..., n\}$. 
\end{itemize}
For two-layer neurons of the form $\sigma(wx)$, $w, x \in \bR^d$, it has been shown that for generic activations, these are the \textit{only} cases in which they linearly dependent \citep{KFukumizu, BSimsek, RSun, LZhangGlobal}. However, due to the non-linearity of activation function, it becomes challenging to analyze neurons of arbitrary width and depth. In this paper, we demonstrate that the same result for two-layer neurons holds for neurons with arbitrary width and depth, given generic activation functions. \\

Traditionally, the linear independence of a set of functions is often proved using their Wronskian (provided that these functions are sufficiently smooth). This is exactly the idea behind the mainstream methods for showing linear independence of two-layer neurons without bias, see e.g. \cite{BSimsek, YIto}. Unfortunately, due to the great complexity of deep NN, this method is almost impossible to perform on neurons with more layers -- in fact, it is even not obvious how it works for two-layer neurons with bias. Thus, we use alternative ways to solve this problem. \\

In this work we investigate the linearity of neurons in both general and specific settings. For the general case, we introduce the concept of generalized neurons and generalized neural networks (Definition \ref{Defn Genealized neurons and generalized NN}) which includes usual neurons and neural networks. For such objects, we characterize their linear independence for specific activation functions based on their growth rates (Proposition \ref{Prop Functions of ordered growth I} and \ref{Prop Functions of ordered growth II}). Then we parametrically deform the activation function to deduce the generality of such characterizations, with the help of analytic bump functions (Section \ref{Subsection Analytic bump functions}). By detailly studying the minimal zero set $\calZ$ (Definition \ref{Defn Minimal zero set}) of a neural network, we show that on any bounded subset of parameter space, for ``most" analytic activations, Embedding Principle fully characterize the linear independence. For the specific case, we study two-layer neurons (with and without bias) and three-layer neurons. We identify several classes of activations, which include commonly used activations (Sigmoid, Tanh, Swich, etc.) where, again, the linear independence is fully characterized by Embedding Principle. \\

Figure \ref{Fig Overview} gives an overview of content and structure of this paper.\\

\begin{figure}\label{Fig Overview}
    \centering
    \includegraphics[width = \textwidth]{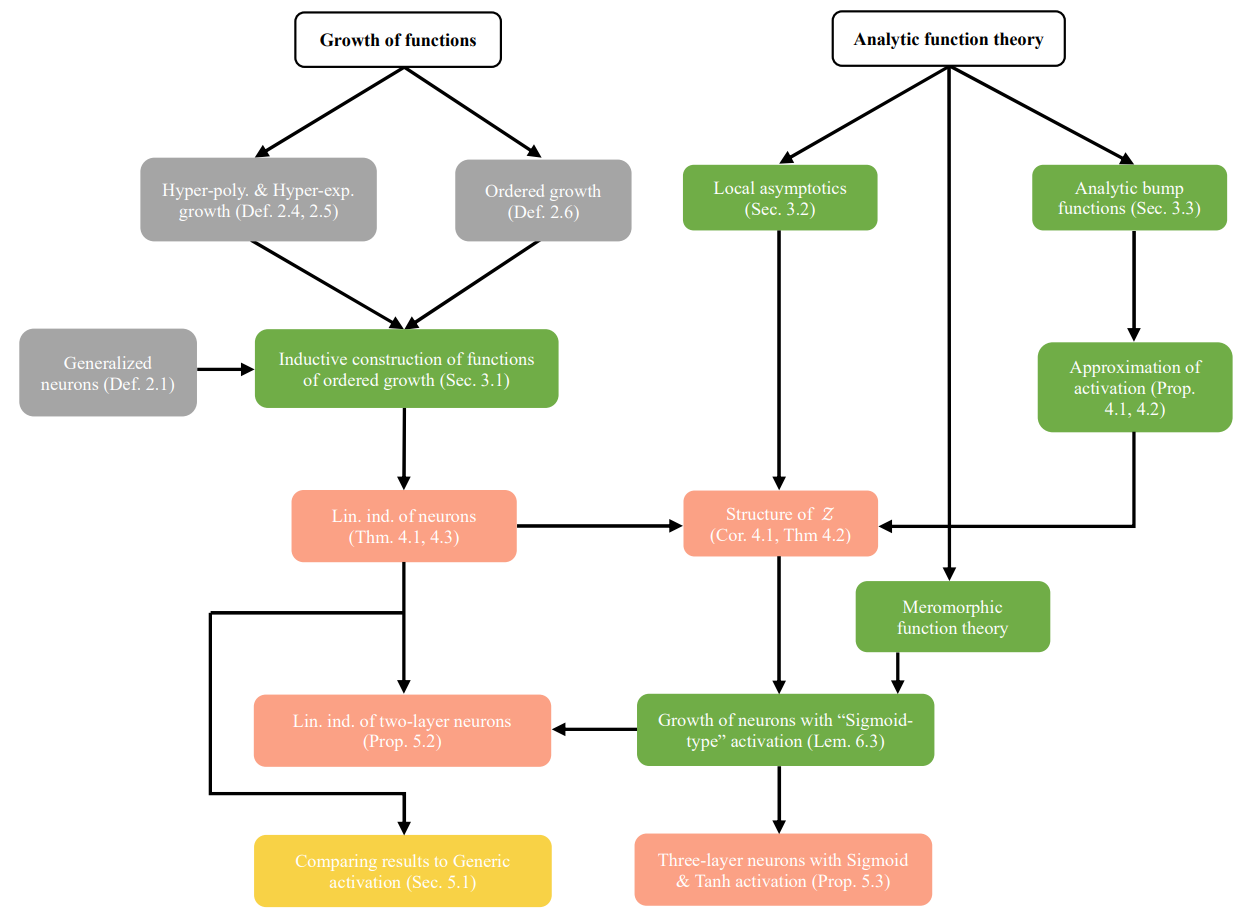}
    \caption{Overview and structure of this paper. }
    \label{Figure Overview of paper}
\end{figure}

\section{Notations and Assumptions}\label{Section Notations and assumptions}

In this section we define the notations and assumptions which will be used later. Let $\bN:= \{1, 2, 3, ...\}$ and let $\bF = \bR$ or $\bC$. We consider $\bF$-valued functions defined on $\bF^d := \Pi_{i=1}^d \bF$. Such a function $f$ is called \textit{analytic} if it is real analytic on an open subset of $\bR^d$ or complex analytic on an open subset of $\bC^d$. Given a multi-index $\alpha \in \bN^s$ for some $s \in \bN$, the $\alpha$-th derivative of $f$ is denoted by $\partial^\alpha f$ or $\partial_z^\alpha f(z)$ if we want to emphasize the variable $z$; in particular, the set of partial derivatives of $f$ can be written as 
\[
    \{\partial^\alpha f: \alpha \in \cup_{s=1}^\infty \{1, ..., d\}^s \}. 
\]
For a \textit{curve} we mean a smooth function $\gamma$ from an interval $I \siq \bR$ to $\bF^N$ for some $N \in \bN$, i.e. a smooth function $\gamma: I \to \bF^N$. We also use $\gamma$ to denote the image of this function. So for example we will simply write $\gamma \siq \bF^N$ instead of $\gamma(I) \siq \bF^N$. Then, given $I$ of the form $[\alpha, \beta]$ or $I = [\alpha, \infty)$ we define 
\[
    l_\gamma(t) := \int_\alpha^t |\gamma'(s)| ds
\]
as the length function of $\gamma$. Specifically, when $I = [\alpha, \infty)$ we write $l_\gamma(\infty) = \int_\alpha^\infty |\gamma'(s)| ds$ as the total length of $\gamma$. 
\\

Given an open set $\Omega \siq \bF$, we write $\sigma: \Omega \to \bF$ for our activation function, unless specified explicitly. When $\Omega$ is not specified, $\sigma$ is defined on the whole real line, and, whenever possible, on the whole $\bC$. For parameterized functions, we mean functions of the form $H: \Omega \times \bF^d \to \bF$, where $\Omega \siq \bF^N$ is open. Any $\theta \in \Omega$ is called a parameter of $H$, and any $z \in \bF^d$ the input of $H$. Two class of parameterized functions we are interested in are the generalized neurons and generalized neural networks (NN). 

\begin{defn}[Generalized neurons and generalized NN]\label{Defn Genealized neurons and generalized NN}
    Let $d, m, N \in \bN$ and $O \siq \bR^N$, $\Omega \siq \bF^d$ be open. Given parametrized functions $f_1, ..., f_m: O \times \Omega \to \bF^d$. A generalized neuron is a parametrized function having the form 
    \[
        \sigma\left( \sum_{k=1}^m w_k f_k(\theta', x) + b \right) 
    \]
    or 
    \[
        \partial_{\theta}^\alpha \sigma\left( \sum_{k=1}^m w_k f_k(\theta', x) + b \right), \quad \alpha \in \cup_{s=1}^\infty \bN^s
    \]
    or
    \[
        \partial_{\theta'}^\alpha \sigma\left( \sum_{k=1}^m w_k f_k(\theta', x) + b \right), \quad \alpha \in \cup_{s=1}^\infty \bN^s, 
    \]
    where its parameter is $\theta := (w_1, ..., w_m, b) \in \bF^{m+1}$ and $\theta'$, provided that $\theta$ is chosen such that it is well-defined. Any linear combination of generalized neurons is called an \textit{generalized neural networks}, or \textit{generalized NN}. 
\end{defn}
\begin{remark}
    Note that while we consider parametrized functions $f_1, ..., f_m$, the requirement actually includes the usual functions. Indeed, given any function $f: \Omega \to \bF$, it induces a trivial parametrized function $\tf:O \times \Omega \to \bF$ defined by $\tf(\theta', x) = f(x)$ for all $\theta' \in O$. In this case, $\partial_{\theta'}^\alpha \tf$'s are all constant-zero functions, which are clearly linear combinations of generalized neurons, whence generalized NN. \\
    
    The generalized NN form a large class of objects to study. In fact, as we show in the examples below, they include the ``classical" fully-connected neural networks (see Definition \ref{Defn fully-connected NN}) as well as linear combinations of the partial derivatives of them. In this paper we focus on the generalized NN of the form 
    \[
        \sigma\left( \sum_{k=1}^m w_k f_k(z) + b \right)
    \]
    and their linear combinations. Brief discussion will be made about how to extend our results to the generalized NN throughout the article. 
\end{remark}

\begin{defn}[fully-connected neural network]\label{Defn fully-connected NN}
    Given $L \in \bN$, $d=:m_0, m_1, ..., m_L=1 \in \bN$ and a function $\sigma: \bF \to \bF$, an $L$-layer fully-connected neural network with input dimension $d$, widths $\{m_l\}_{l=1}^L$ and activation $\sigma$ is a parameterized function $H: \bR^{\sum_{l=1}^L m_l(m_{l-1} + 1)} \times \bF\to \bF$ defined inductively on $l$ as follows: 
    \begin{equation}
    \begin{aligned}
        &H^{(0)}: \bR^{\sum_{l=1}^L m_l \times (m_{l-1} + 1)} \times \bF^d \to \bF^d, \,\,H^{(0)}(\theta, z) = z \\
        &\vdots \\ 
        &H^{(l)}: \bR^{\sum_{l=1}^L m_l \times (m_{l-1} + 1)} \times \bF^d \to \bF^{m_l}, \,\,H^{(l)}(\theta, z) = \left(\sigma\left(w_k^{(l)} H^{(l-1)}(\theta, z) + b_k^{(l)}\right) \right)_{k=1}^{m_1} \\
        &\vdots \\ 
        &H^{(L)}: \bR^{\sum_{l=1}^L m_l \times (m_{l-1} + 1)} \times \bF^d \to \bF, \,H^{(L)}(\theta, z) = \sum_{j=1}^{m_{L-1}} a_j H^{(L-1)}(\theta, z) + b \\ 
        &H(\theta, z) = H^{(L)}(\theta, z). 
    \end{aligned}
    \end{equation}
    Here $\theta := (\theta^{(l)})_{l=1}^L$, where 
    \begin{align*}
        \theta^{(L)} &= \left( (a_j)_{j=1}^{m_L}, b \right) \in \bR^{m_L} \times \bR \approx \bR^{m_L + 1}, \\
        \theta^{(l)} &= \left( (w_k^{(l)})_{k=1}^{m_l}, (b_k^{(l)})_{k=1}^{m_l} \right) \in \bR^{m_l m_{l-1}} \times \bR^{m_l} \approx \bR^{m_l (m_{l-1} + 1)} , \quad 1 \le l < L. 
    \end{align*}
    For simplicity, we say $H$ is a (fully-connected) NN with network structure $\{m_l\}_{l=1}^L$ and activation $\sigma$. Furthermore, we make the following definitions: 
    \begin{itemize}
        \item [(a)] We call $\{m_l\}_{l=1}^L$ the network structure of $H$, and for simplicity we denote $N := \sum_{l=1}^L m_l \times (m_{l-1} + 1)$ whenever we deal with a fully-connected NN. 

        \item [(b)] The $w_k^{(l)}$'s are often called weights, and the $b$ and $b_k^{(l)}$'s are often called biases. 
        
        \item [(b)] When $b$ and all the $b_k^{(l)}$'s are forced to be always zero, we say $H$ is an NN without bias. In this case, we will still write $\theta := (\theta^{(L)}, ..., \theta^{(1)})$, but 
        \begin{align*}
            \theta^{(L)} &= (a_j)_{j=1}^{m_L} \in \bR^{m_L}, \\ 
            \theta^{(l)} &= (w_k^{(l)})_{k=1}^{m_l} \in \bR^{m_l m_{l-1}}, \quad 1 \le l < L
        \end{align*}
        To distinguish it from a general fully-connected NN, we may sometimes call the latter one NN with bias. 
    \end{itemize}
\end{defn}
\begin{remark}\label{Rmk Comments on Defn NN}
     For each $0 \le l \le L$, $H^{(l)}$ is independent of $\theta^{(l')}$'s for all $l' > l$. Thus, as an abuse of notation we may also view it as a map $\bR^{\sum_{l'=1}^l m_{l'} \times (m_{l'-1} + 1)} \times \bF^d \to \bF^{m_l}$ but still write $H^{(l)}(\theta^{(l)}, ..., \theta^{(1)}, z) = H^{(l)}(\theta, z)$ as in the definition. Second, when we emphasize the dependence of $H$ on activation $\sigma$, we may write $H_\sigma$ or $H(\sigma, \cdot, \cdot)$, namely, given a set $\calF$ of functions from $\bF$ to $\bF$, we can view it as a map 
    \begin{equation}\label{eq for neuron structure}
        H: \calF \times \bR^{\sum_{l=1}^L m_l (m_{l-1} + 1)} \times \bF\to \bF. 
    \end{equation}
\end{remark}
~\\

The simplest non-trivial example of fully-connected NNs are perhaps two-layer ones. Using the notations in Definition \ref{Defn fully-connected NN}, such networks take the form 
\[
    H_\sigma: \bR^{(m_1 + 1) + m_1 (d+1)} \times \bF \to \bF, \quad H_\sigma(\theta) = \sum_{k=1}^{m_1} a_k \sigma\left(w_k^{(1)}z + b_k^{(1)}\right) + b, 
\]
and a two-layer NN without bias takes the form 
\[
    H_\sigma: \bR^{m_1 + m_1 d} \times \bF \to \bF, \quad H_\sigma(\theta) = \sum_{k=1}^{m_1} a_k \sigma\left(w_k^{(1)}z\right). \\
\]

Then we give some important examples to illustrate how the concept of generalized NN and classical neural networks (including fully-connected NNs) are related to one another. 
\begin{itemize}
    \item [(a)] A generalized neuron is a generalized NN.
    
    \item [(b)] A fully-connected two layer NN is a generalized NN. Indeed, given $\sigma: \bR \to \bR$ and $\theta = (w, b) \in \bR^d \times \bR = \bR^{d+1}$, a generalized neuron is an element in the set 
    \[
        \{\sigma(wx+b), \sigma^{(s)}(wx + b), \sigma^{(s)}(wx + b)\Pi_{l=1}^s x_{i_l} \}_{s=1}^\infty.  
    \]
    Therefore, using our notations in Definition \ref{Defn Genealized neurons and generalized NN}, we can set $m = d$ and $f_t(\theta', x) = x_t$, the trivially induced $t$-th coordinate map on $\bR^d$, $1 \le t \le d$. Then any two-layer NN is just 
    \[
        \sum_{k=1}^{m'} a_k \sigma\left(\sum_{t=1}^d (w_k)_t f_t(\theta', x) + b_k \right)
    \]
    for some $(w_1, b_1), ..., (w_m, b_m) \in \bR^{d+1}$. 
    \item [(c)] More generally, for any $L \in \bN$, any fully connected $L$-layer neuron is a generalized neuron. Indeed, we may set $\Omega = \bR^d$, $m = \text{its $(l-1)$-th layer width}$, and each $f_k$ to be an $(l-1)$-layer neuron of it. Similarly, the partial derivatives of the parameters of $L$-layer neurons are generalized neurons. Therefore, any linear combination of $L$-layer neurons, as well as the partial derivatives of its parameters, are generalized NN. 

    \item [(d)] A Res-Net neural network is also an generalized NN, because it can be viewed as a linear combination of an $L$-layer network and its lower-level layers. 
\end{itemize}

Given any $m \in \bN$ and generalized neurons $H_1, ..., H_m$, our goal in this paper is to investigate what parameters $\theta_1, ..., \theta_m$ will guarantee that $H_1(\theta_1, \cdot) , ..., H_m(\theta_m, \cdot)$ are linearly dependent/independent. Obviously, the answer to this questions depends on $\sigma$. For example, when $\sigma$ is constant, then such $\theta$ does not exist at all; on the other hand, as shown by \cite{BSimsek, LZhangCrit}, for generic analytic $\sigma: \bR \to \bR$ and $H_k(\theta_k, x)= \sigma(\theta_k x)$ where $\theta_k, x \in \bR^d$, this holds if and only if $\theta_1, ..., \theta_m$ are distinct. We will establish a similar result for generalized neurons with generic activations $\sigma$. To do this we will first construct special activations, then carefully deform it to other ones. The construction relies on the growth of functions which we define below. First we introduce the standard notations on function asymptotics. 

\begin{defn}[Function asymptotics]
    Let $\Omega \siq \bF$ be open and $\gamma: [0, \infty) \to \Omega$ a curve. Given $f, g: \Omega \to \bF$, we denote 
    \begin{itemize}
        \item [(a)] $f(\gamma(t)) = O(g(\gamma(t))$ as $t \to \infty$ if there is a $C > 0$ with $|f(\gamma(t))| \le C|g(\gamma(t))|$; 
        \item [(b)] $f(\gamma(t)) = o(\gamma(t))$ as $t \to \infty$ if there for any $C > 0$ we have $|f(\gamma(t))| \le C|g(\gamma(t))|$; 
        \item [(c)] $f(\gamma(t)) = \Omega(g(\gamma(t))$ as $t \to \infty$  if there is a $C > 0$ with $|f(\gamma(t))| \ge C|g(\gamma(t))|$; 
        \item [(d)] $f(\gamma(t)) = \Theta(g(\gamma(t))$ as $t \to \infty$ if $f(\gamma(t)) = O(g(\gamma(t)))$ and $f(\gamma(t)) = \Omega(g(\gamma(t))$. 
    \end{itemize}
    Specifically, for $f, g: \bR \to \bF$, we write $f(x) = O(g(x))$, $f(x) = o(g(x))$, $f(x) = \Omega(g(x))$ and $f(x) = \Theta(g(x))$ as $x \to \infty\,(-\infty, \pm\infty)$ when the curve is just $[0, \infty)$ (or $(-\infty, 0]$ or $\bR$). 
\end{defn}

\begin{defn}[Hyper-polynomial growth]\label{Defn Hyper-polynomial growth}
    Let $\Omega \siq \bF$ be open and $f: \Omega \to \bF$. Given a curve $\gamma: [0, \infty) \to \Omega$, we say $\limftyt \re{f(\gamma(t))} = \infty (-\infty)$ at hyper-polynomial rate (along $\gamma$), or simply \textit{$f$ grows at hyper-polynomial rate (along $\gamma$)} if $\re f(\gamma(t_1)) = o(\re f(\gamma(t_2)))$ whenever 
    \begin{itemize}
        \item [(a)] $\frac{1}{l_\gamma(t_2) - l_\gamma(\infty)} - \frac{1}{l_\gamma(t_1) - l_\gamma(\infty)} \to \infty$ as $t_1, t_2 \to \infty$ if $l_\gamma(\infty)$ is finite, or 
        \item [(b)] $ l_\gamma(t_2) - l_\gamma(t_1) \to \infty$ as $t_1, t_2 \to \infty$ if $l_\gamma(\infty)$ is infinite. 
    \end{itemize}
\end{defn}

\begin{defn}[Hyper-exponential growth]\label{Defn Hyper-exponential growth}
    Let $\Omega \siq \bF$ be open and $f: \Omega \to \bF$. Given a curve $\gamma: [0, \infty) \to \Omega$, we say $\limftyt \re{f(\gamma(t))} = \infty (-\infty)$ at hyper-exponential rate (along $\gamma$), or simply \textit{$f$ grows at hyper-exponential rate (along $\gamma$)} if $\re f(\gamma(t_1)) = o(\re f(\gamma(t_2))$ whenever $t_1, t_2 \to \infty$ such that $l_\gamma(t_2) - l_\gamma(t_1)$ bounded below by a strictly positive number for all large $t_1, t_2$.  
\end{defn}

Below are examples and counter-examples of functions which grow at hyper-polynomial or hyper-exponential rates. 
\begin{itemize}
    \item [(a)] $f(z) = e^z$ and $\gamma(t) = t + ic$ for any $c\in \bR$ with $\cos(b) \ne 0$. Indeed, $\limftyt |\re e^{t+ic}| = |\cos(b)| e^t = \infty$ and we have 
    \[
        \frac{\re e^{t_1 + ic}}{\re e^{t_2 + ic}} = e^{-(t_2 - t_1)} \to 0
    \] 
    as $t_2 - t_1 \to \infty$. Thus, $e^{t_1 + ic} = o(e^{t_2 + ic})$ as $t_1, t_2, (t_2 - t-1) \to \infty$ at the same time. Thus, $\limftyt \re f(\gamma(t)) = \infty$ at hyper-polynomial rate. However, since 
    \[
        \frac{\re e^{t + ic}}{\re e^{t + b + ic}} = e^{-b}
    \]
    for any $b \in \bR$, the exponential function does not grow at hyper-exponential rate along $\gamma$. 

    \item [(c)] $f(x) = e^{x^2}$, $x \in \bR$ and $\gamma(t) = t$. Then $\limftyt f(\gamma(t)) = e^{t^2} = \infty$ and we have 
    \[
        \frac{e^{t^2}}{e^{(t+b)^2}} = e^{-(2bt + b^2)} \to 0
    \]
    as $t \to \infty$, for any $b > 0$. Thus, $\limftyt f(\gamma(t)) = \infty$ at hyper-exponential rate. 

    \item [(d)] If $\limftyt \re f(\gamma(t)) = \infty$ at hyper-exponential rate then $\limftyt \re f(\gamma(t)) = \infty$ at hyper-polynomial rate. To illustrate this let's consider the case for $l_\gamma(\infty) = \infty$. Since $f$ grows at hyper-exponential rate, for any $n \in \bN$ we can find $t, t' \to \infty$ such that $\re f(\gamma(t)) = o(\re f(\gamma(t')))$ and $l_\gamma(t'{}^{(n)}) \ge l_\gamma(t) + n$. Here $t'{}^{(n)}$ can be viewed as a function in $t$. Thus, we can find a sequence $\{\delta_n\}_{n=1}^\infty$ decreasing to 0, and a sequence $\{M_n\}_{n=1}^\infty$ increasing to $\infty$, such that for any $n$, 
    \[
        |\re f(\gamma(t))| \le \delta_n |\re f(\gamma(t'{}^{(n)}))| 
    \]
    whenever $t \ge M_n$. Now define $t_1, t_2$ (as functions in $t$) by $t_1 = t$ and $t_2 = t'{}^{(n)}$ for $t \in [M_n, M_{n+1})$. Then it is clear that $t_1, t_2 \to \infty$, $l_\gamma(t_2) - l_\gamma(t_1) \to \infty$, and $\re f(\gamma(t_1)) = o(\re f(\gamma(t_2)))$. 

    \item [(e)] When $|\dot{\gamma}|$ is bounded below over $[0, \infty)$ (so in particular $l_\gamma(\infty) = \infty$), then any function $f$ which grows faster than any polynomial grows at hyper-polynomial rate along $\gamma$, i.e., if for any $s \in \bN$ there is some $T > 0$ such that $f(\gamma(t)) > l_\gamma(t)^s$ whenever $t \ge T$, then $f$ grows at hyper-polynomial rate. 
\end{itemize}

\begin{defn}[Ordered growth]\label{Defn Ordered growth}
    Let $\Omega \siq \bF$ be open and $\gamma: \bR \to \Omega$ a curve. Given a finite collection of functions $f_1, ..., f_m: \Omega \to \bR$, we say $f_1, ..., f_m$ have ordered growth (along $\gamma$) if there is a permutation $\pi \in S_m$ such that, as $t \to \infty$ we have $f_{\pi(k)}(\gamma(t)) = o(f_{\pi(k-1)}(\gamma(t)))$ for all $2 \le k \le m$. 
\end{defn}

Below are examples of functions which have ordered growth. 
\begin{itemize}
    \item [(a)] Given $w_1, ..., w_m \in \bR$ such that $w_1 > ... > w_m$, then $e^{w_1 x}, ..., e^{w_m x}$ have ordered growth along $\gamma(t) = t$. 

    \item [(b)] More generally, given any function $f: \Omega \to \bF$ and a curve $\varphi: [0, \infty) \to \Omega$ such that $f$ grows at hyper-polynomial rate along $\varphi$. For any $m \in \bN$ and any $\tau_1, ..., \tau_m: [0, \infty) \to \bR$ such that $\tau_k(t) - \tau_{k-1}(t) \to \infty$ as $t \to \infty$, the functions 
    \[
        f_k := f\circ \varphi \circ \tau_k, \quad 1 \le k \le m
    \] 
    have ordered growth along $\gamma(t) = t$. This follows from the definition of Hyper-polynomial growth of a function. 
\end{itemize}

\section{Preparing Lemmas and Propositions}\label{Section Preparing Lemmas and Propositions}

This section is the main part of preparation of our results, which can be found in Section \ref{Section Theory of general neurons}. An important part of our general theory is to order the growth of a finite collection of functions. To obtain such results, we will apply the preparations in \ref{Subsection Functions of Ordered Growth}, where various results to help inductively establish ordered growth of generalized neurons are presented. Another part is about approximating some given functions, which relies on the preparations in \ref{Subsection Local asymptotics of parameterized functions} and \ref{Subsection Analytic bump functions}. In \ref{Subsection Local asymptotics of parameterized functions}, we study how the asymptotics of a parameterized function depends on its parameters. In \ref{Subsection Analytic bump functions}, we introduce a class of analytic functions similar to bump functions; this will be used in the constructions in Section \ref{Section Theory of general neurons}. But first, we show that the study of linear independence of generalized neurons of arbitrary input dimension $d \in \bN$ can be reduced to input dimension $d = 1$. 

\begin{lemma}[dimension reduction]\label{Lem Dimension reduction}
    Fix $m \in \bN$. Given distinct $w_1, ..., w_m \in \bR^d$. Then there is a $v \in \partial B(0,1) \siq \bR^d$ such that $\< w_1, v\>, ..., \<w_m, v\>$ are distinct. Moreover, if $w_k, w_j$ are multiples to one another, then for any $v \in \partial B(0,1)$, $\<w_k, v\>, \<w_j, v\>$ are multiples to one another. 
\end{lemma}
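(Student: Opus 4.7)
The plan is to find $v$ by a standard measure/dimension argument on the unit sphere, ruling out the finitely many "bad" directions that fail to separate the $w_k$'s.

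First I would observe that the conclusion $\langle w_k,v\rangle\neq\langle w_j,v\rangle$ for a given pair $k\neq j$ is equivalent to $\langle w_k-w_j,v\rangle\neq 0$, and since $w_1,\dots,w_m$ are distinct this means $w_k-w_j\neq 0$. So the set
\[
    S_{kj}:=\{v\in\partial B(0,1):\langle w_k-w_j,v\rangle=0\}
\]
is the intersection of the unit sphere with the hyperplane $(w_k-w_j)^\perp$, a great subsphere of dimension $d-2$ (a pair of antipodal points when $d=1$, which is fine since for $d=1$ distinctness of the $w_k$'s automatically gives distinct inner products with either of the two unit vectors provided we pick the non-degenerate one --- but actually for $d=1$ the statement is trivial anyway).

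Next I would note that $S_{kj}$ has $(d-1)$-dimensional Hausdorff (spherical) measure zero, and the finite union
\[
    S:=\bigcup_{1\le k<j\le m} S_{kj}
\]
therefore also has measure zero. Since $\partial B(0,1)$ has positive measure, its complement $\partial B(0,1)\setminus S$ is non-empty; any $v$ in it satisfies $\langle w_k,v\rangle\neq\langle w_j,v\rangle$ for all $k\neq j$, which is exactly the required separation.

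For the ``moreover'' part, the argument is immediate: if $w_k=\lambda w_j$ for some $\lambda\in\bR$, then by linearity $\langle w_k,v\rangle=\lambda\langle w_j,v\rangle$ for every $v\in\partial B(0,1)$, so the two inner products are scalar multiples of one another. I do not expect any real obstacle here --- the only subtle point is making sure the finitely many hyperplane sections really are negligible on the sphere, which is handled by the codimension/measure-zero observation above; alternatively one could give a purely topological proof (finite union of nowhere-dense closed sets in a Baire space) that avoids measure theory entirely.
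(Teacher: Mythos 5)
Your argument is correct. The paper itself states this lemma without proof, so there is nothing to compare against; what you give is the standard argument the authors evidently had in mind. The key reduction $\langle w_k,v\rangle=\langle w_j,v\rangle \Longleftrightarrow \langle w_k-w_j,v\rangle=0$ together with distinctness of the $w_k$'s makes each bad set $S_{kj}$ a proper great subsphere, and a finite union of these cannot exhaust $\partial B(0,1)$ — whether you phrase that via Hausdorff measure, Baire category, or simply the fact that a nonzero polynomial (here $v\mapsto\prod_{k<j}\langle w_k-w_j,v\rangle$) cannot vanish identically on the sphere. Your handling of $d=1$ and the one-line linearity argument for the ``moreover'' clause are both fine.
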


\subsection{Functions of Ordered Growth}\label{Subsection Functions of Ordered Growth}

Now we consider functions of ordered growth. One important result is that these functions are linearly independent. This is exactly presented in the following proposition. 

\begin{prop}[functions of ordered growth are linearly independent]\label{Prop Ordered growth implies linear independence}
    Let $m \in \bN$ and $\Omega \siq \bF$ be open. Suppose that $f_1, ..., f_m: \Omega \to \bR$ have ordered growth along a curve $\gamma: [0, \infty) \to \Omega$ and none of $f_k \circ \gamma$ is eventually constant zero, i.e., $\sup\{t: f_k(\gamma(t)) \ne 0\} = \infty$. Then 
    \begin{itemize}
        \item [(a)] $f_1, ..., f_m$ are linearly independent. 

        \item [(b)] $\frac{1}{f_1}, ..., \frac{1}{f_m}$ are linearly independent. 
    \end{itemize}
    Thus, if $\sum_{k=1}^m a_k f_k$ or $\sum_{k=1}^m a_k \frac{1}{f_k}$ is constant zero, then $a_1 = ... = a_m = 0$. 
\end{prop}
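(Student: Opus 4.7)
The plan is to first relabel by the permutation $\pi$ so that $f_{k+1}(\gamma(t)) = o(f_k(\gamma(t)))$ as $t \to \infty$ for every $1 \le k \le m-1$; thus $f_1$ is the dominant function. Both parts then reduce to the same elementary mechanism: divide the purported linear relation by the dominant term and send $t \to \infty$ along a carefully chosen sequence, using that all non-dominant terms become arbitrarily small ratios.

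For (a), I would proceed by induction on $m$. The base case $m=1$ is immediate since $f_1 \circ \gamma$ is not eventually zero. For the inductive step, assume $\sum_{k=1}^m a_k f_k \equiv 0$. Chaining the asymptotic bounds shows that for every $\varepsilon > 0$ and all sufficiently large $t$, $|f_k(\gamma(t))| \le \varepsilon^{k-1} |f_1(\gamma(t))|$ for $2 \le k \le m$. Select a sequence $t_n \to \infty$ with $f_1(\gamma(t_n)) \ne 0$ (possible by the non-vanishing hypothesis), divide $\sum_k a_k f_k(\gamma(t_n)) = 0$ by $f_1(\gamma(t_n))$, and let $n \to \infty$: the tail collapses to zero, forcing $a_1 = 0$. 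The residue $\sum_{k=2}^m a_k f_k \equiv 0$ inherits the ordered growth and non-vanishing hypotheses, so the inductive hypothesis finishes the job.

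For (b), I would reduce to (a) applied to the reciprocals in the reversed order $1/f_m, 1/f_{m-1}, \ldots, 1/f_1$. These have ordered growth along $\gamma$ because $(1/f_{m-j})(\gamma(t)) / (1/f_m)(\gamma(t)) = f_m(\gamma(t))/f_{m-j}(\gamma(t)) \to 0$ whenever $f_m = o(f_{m-j})$. The subtlety is that a priori $1/f_k$ is only defined where $f_k$ does not vanish, so I must check that the test sequence can be chosen inside the common domain. The key observation is a zero-propagation lemma: the relation $f_{k+1}(\gamma(t)) = o(f_k(\gamma(t)))$ forces, for all sufficiently large $t$, the implication $f_k(\gamma(t)) = 0 \Rightarrow f_{k+1}(\gamma(t)) = 0$, since otherwise the bound $|f_{k+1}(\gamma(t))| \le \varepsilon|f_k(\gamma(t))|$ would fail at that point. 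Contrapositively, for large $t$, $f_m(\gamma(t)) \ne 0$ implies $f_k(\gamma(t)) \ne 0$ for every $k$; combined with the hypothesis that $f_m \circ \gamma$ is not eventually zero, this yields a sequence $t_n \to \infty$ along which all reciprocals are simultaneously finite. Multiplying the identity $\sum_k a_k/f_k(\gamma(t_n)) = 0$ through by $f_m(\gamma(t_n))$ and using $f_m/f_k \to 0$ for $k < m$ kills $a_m$, and induction downward handles the rest.

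The main obstacle I anticipate is precisely the handling of zeros of the $f_k \circ \gamma$: the $o(\cdot)$ notation only constrains ratios where the denominator is nonzero, so every application of the chained bounds must occur at points where the corresponding denominators are nonzero. The zero-propagation observation resolves this uniformly, and once it is in hand both parts reduce to the same one-line argument of dividing by the dominant term and passing to the limit.
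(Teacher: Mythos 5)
Your proposal is correct and follows essentially the same argument as the paper: divide the linear relation by the dominant term (or multiply by the most rapidly decaying reciprocal) along a sequence where that term is nonzero, and iterate. Your zero-propagation observation for part (b) is a small but welcome refinement that the paper leaves implicit when it forms the ratios $f_m(\gamma(t_j))/f_k(\gamma(t_j))$.
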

\begin{proof} 
By rearranging the indices if necessary, we can assume that in the proof for (a) and (b) below, $f_k(\gamma(t)) = o(f_{k-1}(\gamma(t)))$ as $t \to \infty$, for all $2 \le k \le m$. 

\begin{itemize}
    \item [(a)] Let $a_1, ..., a_m \in \bF$ be constants such that $\sum_{k=1}^m a_k f_k \equiv 0$. By our assumption, there is a sequence $\{t_j\}_{j=1}^\infty$ diverging to $\infty$ such that $f_1(\gamma(t_j)) \ne 0$. Moreover, we have $f_k(\gamma(t_j)) = o(f_1(\gamma(t_j)))$ as $j \to \infty$, for all $2 \le k \le m$. It follows that 
    \[
        a_1 = - \limftyj \sum_{k=2}^m a_k \frac{f_k(\gamma(t_j))}{f_1(\gamma(t_j))} = 0. 
    \]
    This means we can instead consider $\sum_{k=2}^m a_k f_k$, which is also constant zero. By repeating this argument, we will eventually show that $a_k = 0$ for all $k$, thus proving the linear independence of the functions $f_1, ..., f_m$. 

    \item [(b)] The proof is similar to (a). Let $a_1, ..., a_m \in \bF$ be constants such that $\sum_{k=1}^m \frac{a_k}{f_k} \equiv 0$. Let $\{t_j\}_{j=1}^\infty$ diverging to $\infty$ be such that $f_m(\gamma(t_j)) \ne 0$ for all $j \in \bN$. It follows that 
    \begin{align*}
        a_m 
        &= - \limftyj \sum_{k=1}^{m-1} a_k \frac{1 / f_k(\gamma(t_j)}{1 / f_m(\gamma(t_j)} \\ 
        &= - \limftyj \sum_{k=1}^{m-1} a_k \frac{f_m(\gamma(t_j))}{f_k(\gamma(t_j))} = 0. 
    \end{align*}
    This means we can instead consider $\sum_{k=1}^{m-1} \frac{a_k}{f_k}$, which is also constant zero. By repeating this argument we will eventually show that $a_k = 0$ for all $k$, thus proving the linear independence of the functions $\frac{1}{f_1}, ..., \frac{1}{f_m}$. 
\end{itemize}
\end{proof}
\begin{remark}
    In fact, our proof for (a) implies that as long as $f_k(\gamma(t)) = o(f_j(\gamma(t)))$ as $t \to \infty$ for all $k \ne j$, then the constant-zero function $\sum_{k=1}^m a_k f_k$ must give $a_j = 0$. This does not fully use the fact that $f_1, ..., f_m$ have ordered growth. The ordered growth property simply helps us restrict the functions to a single curve $\gamma$ to establish their linear independence. A similar remark holds for (b). 
\end{remark}

The next proposition works for $f_1, ..., f_m$'s with ordered growth. They will be used to deduce complete characterizations of linear independence of multi-layer neurons with certain types of activations. 

\begin{prop}[Functions of ordered growth -- I]\label{Prop Functions of ordered growth I}
    Let $f_1, ..., f_m: \Omega \to \bR$ be functions of ordered growth along a curve $\gamma: [0, \infty) \to \Omega$, such that $\limftyt f_k(\gamma(t)) = \infty$ for all $1 \le k \le m$. Suppose that $\sigma: \bR \to \bR$ satisfies 
    \begin{itemize}
        \item [(a)] $\lim_{z \to \infty} \sigma(z) = \infty$ and $\lim_{z \to -\infty} \sigma(z) = \infty$, both at hyper-polynomial rate. 
        \item [(b)] For all $w > 0$ and $\tw < 0$, either $\sigma(\tw f_{\pi(1)}(\gamma(t))) = o(\sigma(w f_{\pi(m)}(\gamma(t))))$ or $\sigma(w f_{\pi(1)}(\gamma(t))) = o(\sigma(\tw f_{\pi(m)}(\gamma(t))))$ as $t \to \infty$. Here $\pi$ is the permutation defined as in Definition \ref{Defn Ordered growth}. 
    \end{itemize}
    Given $n \in \bN$ and parameters $\{(w_{jk})_{k=1}^m\}_{j=1}^n \siq \bR^m$. Define for each $j$ a function 
    \begin{equation}
        H_j(z) := \sigma \left (\sum_{k=1}^m w_{jk} f_k(z) \right). 
    \end{equation}
    Then $H_1, ..., H_n: \Omega \to \bR$ have ordered growth (along $\gamma$) and $\limftyt H_j(\gamma(t)) = \infty$ for all $1 \le j \le n$ if and only if all the $w_j := (w_{jk})_{k=1}^m$ are distinct and $\ne 0$. 
\end{prop}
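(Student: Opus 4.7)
The plan is to prove both directions of the ``if and only if''.

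The necessity direction (ordered growth and divergence imply distinct nonzero $w_j$'s) is immediate: a zero $w_j$ makes $H_j \equiv \sigma(0)$, contradicting $H_j(\gamma(t)) \to \infty$; and if $w_j = w_{j'}$ for $j \ne j'$, then $H_j \equiv H_{j'}$ identically, so neither can be $o$ of the other while both diverge to infinity.

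For sufficiency, assume WLOG that $\pi$ in Definition \ref{Defn Ordered growth} is the identity, so $f_k(\gamma(t)) = o(f_{k-1}(\gamma(t)))$ and each $f_k(\gamma(t)) \to \infty$. For each $j$, let $k_j := \min\{k : w_{jk} \ne 0\}$. The ordered growth of the $f_k$'s immediately yields
\[ S_j(t) := \sum_{k=1}^m w_{jk} f_k(\gamma(t)) = w_{jk_j} f_{k_j}(\gamma(t))\bigl(1 + o(1)\bigr), \]
so $S_j(t) \to \mathrm{sign}(w_{jk_j}) \cdot \infty$, and hypothesis (a) yields $H_j(\gamma(t)) = \sigma(S_j(t)) \to \infty$. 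The remaining task is to exhibit the ordered growth: I will establish that for every $j \ne j'$, one of $H_j, H_{j'}$ dominates the other in the $o$-sense. Since both diverge to $\infty$, this relation is antisymmetric, and transitivity of $o$ is automatic, so pairwise comparability upgrades to a total order on $\{1, \ldots, n\}$ and delivers the required permutation.

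The pairwise comparison splits by signs of the leading coefficients. If $\mathrm{sign}(w_{jk_j}) = \mathrm{sign}(w_{j'k_{j'}})$, let $k^\star := \min\{k : w_{jk} \ne w_{j'k}\}$, which exists since $w_j \ne w_{j'}$. Then $S_j(t) - S_{j'}(t) \sim (w_{jk^\star} - w_{j'k^\star}) f_{k^\star}(\gamma(t))$ diverges with a fixed sign while $S_j, S_{j'}$ tend to the \emph{same} infinity, so hypothesis (a) directly dominates one $\sigma$-value by the other via Definition \ref{Defn Hyper-polynomial growth}. If the signs are opposite, say $w_{jk_j} > 0$ and $w_{j'k_{j'}} < 0$, I assume WLOG the first branch of hypothesis (b). I first prove a key intermediate lemma: for any $w > 0$, $\tw < 0$ and any $k, k' \in \{1, \ldots, m\}$, $\sigma(\tw f_k(\gamma(t))) = o(\sigma(w f_{k'}(\gamma(t))))$. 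This is obtained by chaining three $o$-estimates: from $\sigma(\tw f_k)$ pass to $\sigma(\tw' f_1)$ for some $\tw' < 0$ (both arguments $\to -\infty$, with $\tw' f_1$ more negative by a diverging amount), then to $\sigma(w'' f_m)$ via hypothesis (b), and finally to $\sigma(w f_{k'})$ via the $+\infty$-side of hypothesis (a). To descend from this lemma to the comparison of $\sigma(S_j)$ and $\sigma(S_{j'})$, I sandwich: for large $t$, $S_j$ lies between $\tfrac{1}{2}w_{jk_j} f_{k_j}$ and $2 w_{jk_j} f_{k_j}$ with both gaps diverging to $+\infty$, and analogously $S_{j'}$ is sandwiched on the negative side. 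Hypothesis (a) transfers the comparison from the sandwich endpoints to $\sigma(S_j)$ and $\sigma(S_{j'})$, and the intermediate lemma bridges across the sign change.

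The main obstacle I anticipate is this sandwich step. The subtlety is that $\sigma$ is not assumed monotone, so the fact that $S_j$ is close to its leading approximant $w_{jk_j} f_{k_j}$ in a relative ($o$) sense does not by itself imply $\sigma(S_j)$ is close to $\sigma(w_{jk_j} f_{k_j})$. The resolution uses the precise form of Definition \ref{Defn Hyper-polynomial growth}: since $S_j - \tfrac{1}{2} w_{jk_j} f_{k_j}$ and $2 w_{jk_j} f_{k_j} - S_j$ both diverge to $+\infty$, the hyper-polynomial comparison applies to each side of the sandwich and asymptotically bounds $\sigma(S_j)$ between the $\sigma$-values at the endpoints, after which the intermediate lemma delivers the cross-sign comparison cleanly.
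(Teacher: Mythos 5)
Your proposal is correct and follows essentially the same route as the paper's proof: trivial necessity, then sufficiency via the leading-term asymptotics $S_j \sim w_{jk_j} f_{k_j}$, a sign-based case split using the hyper-polynomial growth of $\sigma$ for same-sign pairs, and a chained $o$-comparison built from hypothesis (b) for cross-sign pairs. Your explicit sandwich argument merely spells out a bridging step (from $S_j \sim w_{jk_j}f_{k_j}$ to the comparison of the $\sigma$-values) that the paper leaves implicit, so it is a refinement of the same argument rather than a different approach.
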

\begin{proof}
    It is clear that if $w_{j_1} = w_{j_2}$, then $H_1, ..., H_m$ cannot have ordered growth along $\gamma$, and if $w_j = 0$ for some $j$, then $\limftyt H_j(\gamma(t)) = \sigma(0) < \infty$. So it remains to deal with the other direction. By rearranging the indices of $f_1, ..., f_m$, we may assume that $f_k(\gamma(t)) = o(f_{k-1}(\gamma(t)))$ as $t \to \infty$ for all $2 \le k \le m$. \\
    
    First, let's prove that $\limftyt H_j(\gamma(t)) = \infty$ whenever $w_j \ne 0$. So fix any $j$. Because $w_j \ne 0$, there is a smallest $k \in \{1, ..., m\}$ such that $w_{jk} \ne 0$. Then 
    \[
        \limftyt \left| w_{jk} f_k(\gamma(t)) \right| = |w_{jk}| \limftyt f_k(\gamma(t)) = \infty. 
    \]
    Moreover, our assumption implies that $f_{k'}(\gamma(t)) = o(f_k(\gamma(t)))$ for all $k' > k$. This, together with the fact that $w_{j1} = ... = w_{j(k-1)} = 0$, yields 
    \begin{equation}\label{eq 1 of Prop Functions of ordered growth I}
        \sum_{k'=1}^m w_{jk'} f_{k'}(\gamma(t)) \sim w_{jk} f_k(\gamma(t)) 
    \end{equation}
    as $t \to \infty$. In particular, 
    \[
        \limftyt \left| \sum_{k'=1}^m w_{jk'} f_{k'}(\gamma(t)) \right| = \infty. 
    \]
    It follows from property (a) of $\sigma$ that $\limftyt H_j(\gamma(t)) = \infty$. \\

    Then we investigate the ordered growth property these $H_j$'s. For this we assume without loss of generality that $\sigma(\tw f_1(\gamma(t))) = o( \sigma(w f_m(\gamma(t))))$ for any $w > 0$ and $\tw < 0$. We claim that this implies $\sigma(\tw f_k (\gamma(t))) = o(\sigma(w f_j(\gamma(t))))$ for all $k, j \in \{1, ..., m\}$, whenever $w > 0$ and $\tw < 0$. Indeed, given $k \ne 1$ and/or $j \ne m$, we have 
    \begin{align*}
        f_k(\gamma(t)) = o(f_1(\gamma(t))), \text{and/or } f_m(\gamma(t)) = o(f_j(\gamma(t))). 
    \end{align*}
    Thus,
    \begin{equation}\label{eq 2 of Prop Functions of ordered growth I}
    \begin{aligned}
        \tw f_k(\gamma(t)) = o(f_1(\gamma(t))), \text{and/or } w f_m(\gamma(t)) = o(f_j(\gamma(t))). 
    \end{aligned}
    \end{equation}
    Since $w$ and $\tw$ are non-zero, we also have 
    \begin{align*}
        \tw f_k(\gamma(t)), \tw f_1(\gamma(t)) &\to -\infty, \\
        w f_m(\gamma(t)), w f_j(\gamma(t)) &\to \infty
    \end{align*}
    as $t \to \infty$. Combining these with property (a) of $\sigma$, we can easily see that 
    \[
        \sigma(\tw f_k(\gamma(t)) \ll \sigma(\tw f_k(\gamma(t))) \ll \sigma(w f_m(\gamma(t)))) \ll \sigma(w f_j(\gamma(t))). 
    \]
    Similarly, property (a) of $\sigma$ also implies that $\sigma(w f_k(\gamma(t)) = o( \sigma(\tw f_j(\gamma(t))))$ whenever $k > j$ and $w, \tw$ are both (strictly) positive/negative. Moreover, note that for any distinct $j_1, j_2$, because $w_{j_1} \ne w_{j_2}$, there is a smallest $k \in \{1, ..., m\}$ with $w_{j_1 k} \ne w_{j_2 k}$. By our proof above, this implies that 
    \[
        \limftyt \left| \sum_{k'=1}^m (w_{j_1 k'} - w_{j_2 k'}) f_{k'}(\gamma(t)) \right| = \limftyt |w_{j_1 k} - w_{j_2 k}| f_k(\gamma(t)) = \infty. 
    \] 

    Now fix distinct $j_1, j_2 \in \{1, ..., n\}$. Let $k_1, k_2 \in \{1, ..., m\}$ be the smallest numbers with $w_{j_1 k_1}, w_{j_2 k_2} \ne 0$, respectively, and let $k_1', k_2' \in \{1, ..., m\}$ be the smallest numbers with $w_{j_1 k_1'} \ne w_{j_2 k_2'}$. Consider the following possible cases. If $w_{j_1 k_1} > 0$ and $w_{j_2 k_2} < 0$, then by (\ref{eq 2 of Prop Functions of ordered growth I}) above, we have $\sigma(w_{j_2 k_2} f_{k_2}(\gamma(t))) = o(\sigma(w_{j_1 k_1} f_{k_1}(\gamma(t)))$ as $t \to \infty$. By (\ref{eq 1 of Prop Functions of ordered growth I}), 
    \begin{align*}
        \sum_{k=1}^m w_{j_2 k} f_k(\gamma(t)) &\sim w_{j_2 k_2} f_{k_2}(\gamma(t)), \\
        \sum_{k=1}^m w_{j_1 k} f_k(\gamma(t)) &\sim w_{j_1 k} f_k(\gamma(t)). 
    \end{align*}
    It follows that $H_{j_2}(\gamma(t)) = o(H_{j_1}(\gamma(t)))$. Similarly, if $w_{j_1 k_1} < 0$ and $w_{j_2 k_2} > 0$ we can show that $H_{j_1}(\gamma(t)) = o(H_{j_2}(\gamma(t)))$. Then suppose $w_{j_1 k_1}$ and $w_{j_2 k_2}$ are both positive or negative. This means $\sum_{k=1}^m w_{j_1k} f_k(\gamma(t))$ and $\sum_{k=1}^m w_{j_2k} f_k(\gamma(t))$ both diverge to $\infty$ or $-\infty$. Moreover, if $w_{j_1 k_1'} - w_{j_2 k_2'} > 0$ then $\sum_{k=1}^m (w_{j_1 k} - w_{j_2 k}) f_k(\gamma(t)) \to \infty$, while $w_{j_1 k_1'} - w_{j_2 k_2'} < 0$ implies $\sum_{k=1}^m (w_{j_1 k} - w_{j_2 k}) f_k(\gamma(t)) \to -\infty$. Using property (a) of $\sigma$, for the first case, we have $H_{j_2}(\gamma(t)) = o(H_{j_1}(\gamma(t))$, while for the second case $H_{j_1}(\gamma(t)) = o(H_{j_2}(\gamma(t)))$. \\

    By dealing with the three cases above, we show that for any distinct $j_1, j_2$ we must have $H_{j_1}(\gamma(t)) = o(H_{j_2}(\gamma(t)))$ or vice versa. The proof is completed by noting that this result is equivalent to $H_1, ..., H_n$ having ordered growth along $\gamma$. 
\end{proof}

\begin{prop}[Functions of ordered growth -- II]\label{Prop Functions of ordered growth II}
    Let $f_1, ..., f_m: \Omega \to \bR$ be functions of ordered growth along a curve $\gamma: [0, \infty) \to \Omega$, such that $\limftyt f_k(\gamma(t)) = \infty$ for all $1 \le k \le m$. Suppose that $\sigma: \bR \to \bR$ satisfies 
    \begin{itemize}
        \item [(a)] $\limftyz \sigma(z) = \infty$ and $\lim_{z \to -\infty} \sigma(z) = \infty$, both at hyper-exponential rate. 
        \item [(b)] For all $w > 0$ and $\tw < 0$, either $\sigma(\tw f_{\pi(1)}(\gamma(t))) = o(\sigma(w f_{\pi(m)}(\gamma(t))))$ or $\sigma(w f_{\pi(1)}(\gamma(t))) = o(\sigma(\tw f_{\pi(m)}(\gamma(t))))$ as $t \to \infty$. Here $\pi$ is the permutation defined as in Definition \ref{Defn Ordered growth}. 
    \end{itemize}
    Given $n \in \bN$ and parameters $\left\{ \left( (w_{jk})_{k=1}^m, b_j\right) \right\} \siq \bR^{m+1}$. Define for each $j$ a function 
    \begin{equation}\label{eq defn H_j with bias}
        H_j(z) := \sigma\left( \sum_{k=1}^m w_{jk}f_k(z) + b_j \right). 
    \end{equation} 
    Then $H_1, ..., H_n: \Omega \to \bR$ have ordered growth (along $\gamma$) and $\limftyt H_j(\gamma(t)) = \infty$ for all $1 \le j \le n$ if and only if all the $(w_j, b_j)$'s are distinct and none of $w_j = 0$. 
\end{prop}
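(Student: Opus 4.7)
The plan is to mirror closely the structure of the proof of Proposition \ref{Prop Functions of ordered growth I}, with the only genuinely new ingredient being the treatment of pairs of indices $j_1 \ne j_2$ whose weight vectors $w_{j_1}, w_{j_2}$ agree but whose biases $b_{j_1}, b_{j_2}$ differ. This is precisely the situation in which the stronger hyper-exponential hypothesis on $\sigma$ (rather than the hyper-polynomial hypothesis used in Proposition \ref{Prop Functions of ordered growth I}) becomes essential.

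Necessity is immediate: if some $w_j = 0$ then $H_j(\gamma(t)) \equiv \sigma(b_j)$, which does not diverge to $\infty$; and if $(w_{j_1}, b_{j_1}) = (w_{j_2}, b_{j_2})$ for distinct $j_1, j_2$, then $H_{j_1} \equiv H_{j_2}$, which is incompatible with ordered growth together with divergence to $\infty$.

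For sufficiency, abbreviate $u_j(t) := \sum_{k=1}^m w_{jk} f_k(\gamma(t)) + b_j$ and assume as in Proposition \ref{Prop Functions of ordered growth I} that $f_k(\gamma(t)) = o(f_{k-1}(\gamma(t)))$. Taking $k_j$ to be the smallest index with $w_{j k_j} \ne 0$, one gets $u_j(t) \sim w_{j k_j} f_{k_j}(\gamma(t)) \to \pm\infty$, so property (a) yields $H_j(\gamma(t)) \to \infty$; the bounded $b_j$ is absorbed by the dominant leading term. For distinct $j_1, j_2$ I then split into two cases. If $w_{j_1} \ne w_{j_2}$, the Proposition \ref{Prop Functions of ordered growth I} argument transfers nearly verbatim: with $k'$ the smallest index where $w_{j_1 k'} \ne w_{j_2 k'}$, the difference $u_{j_1}(t) - u_{j_2}(t) \sim (w_{j_1 k'} - w_{j_2 k'}) f_{k'}(\gamma(t))$ dominates the constant $b_{j_1} - b_{j_2}$, and the sign analysis of Proposition \ref{Prop Functions of ordered growth I} together with property (b) yields the desired $o$-relation between $\sigma(u_{j_1}(t))$ and $\sigma(u_{j_2}(t))$. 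If instead $w_{j_1} = w_{j_2}$ but $b_{j_1} \ne b_{j_2}$, then $u_{j_1}(t) - u_{j_2}(t) = b_{j_1} - b_{j_2}$ is a nonzero constant while both $u_{j_1}(t), u_{j_2}(t)$ diverge to a common infinity; hyper-exponential growth of $\sigma$ then immediately forces an $o$-relation, with the ordering determined by the sign of $b_{j_1} - b_{j_2}$ together with the common direction of divergence.

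The main obstacle is really only conceptual: one must recognize that a constant shift of the argument is not enough to force an $o$-relation under mere hyper-polynomial growth, so the bias-difference case above would be unreachable with the Proposition \ref{Prop Functions of ordered growth I} hypothesis, and hyper-exponential growth is exactly the right strengthening. Once this is in place, the proof decomposes cleanly into the Proposition \ref{Prop Functions of ordered growth I} argument on the $w$-components plus a one-line application of hyper-exponential growth on the $b$-components, and pairwise $o$-comparability of the $H_j(\gamma(t))$ then produces a single sorting permutation, establishing ordered growth.
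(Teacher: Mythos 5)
Your proposal is correct and follows essentially the same route as the paper: necessity is handled identically, the case $w_{j_1}\ne w_{j_2}$ is delegated to the argument of Proposition \ref{Prop Functions of ordered growth I} with the bias absorbed by the dominant term, and the new case $w_{j_1}=w_{j_2}$, $b_{j_1}\ne b_{j_2}$ is resolved by the hyper-exponential hypothesis exactly as in the paper's four-way sign analysis. Your observation that hyper-polynomial growth would not suffice for a constant shift of the argument is precisely the point of the strengthened hypothesis (a).
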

\begin{proof}
    It is clear that if $w_{j_1} = w_{j_2}$, then $H_1, ..., H_m$ cannot have ordered growth along $\gamma$, and if $w_j = 0$ for some $j$, then $\limftyt H_j(\gamma(t)) = \sigma(b_j) < \infty$. So it remains to prove the other direction. Again, we will show that for any distinct $j_1, j_2 \in \{1, ..., m\}$, either $H_{j_1}(\gamma(t)) = o(H_{j_2}(\gamma(t)))$ or vice versa, as $t \to \infty$. The proof is identical to the one for Proposition \ref{Prop Functions of ordered growth I} above, except that we need to deal with one more case: $w_{j_1} = w_{j_2}$ and $b_{j_1} \ne b_{j_2}$ for some $j_1, j_2 \in \{1, ..., n\}$. Note that as $t \to \infty$, 
    \[
        \sum_{k=1}^m w_{j_1k} f_k(\gamma(t)) + b_{j_1},\, \sum_{k=1}^m w_{j_2k} f_k(\gamma(t)) + b_{j_2} \to \infty (-\infty) 
    \]
    simultaneously; in particular, they have the same signs. Thus, by property (a) of $\sigma$, as $t \to \infty$:  
    \begin{itemize}
        \item [(a)] If they are both positive (diverge to $\infty$) and $b_{j_1} > b_{j_2}$, then $H_{j_2}(\gamma(t)) = o(H_{j_1}(\gamma(t)))$. 
        \item [(b)] If they are both positive (diverge to $\infty$) and $b_{j_1} < b_{j_2}$, then $H_{j_1}(\gamma(t)) = o(H_{j_2}(\gamma(t)))$. 
        \item [(c)] If they are both negative (diverge to $-\infty$) and $b_{j_1} > b_{j_2}$, then $H_{j_1}(\gamma(t)) = o(H_{j_2}(\gamma(t)))$. 
        \item [(d)] If they are both positive (diverge to $-\infty$) and $b_{j_1} < b_{j_2}$, then $H_{j_2}(\gamma(t)) = o(H_{j_1}(\gamma(t)))$. 
    \end{itemize}
    This completes the proof. 
\end{proof}

Below we give some examples about $\sigma$'s which make Proposition \ref{Prop Functions of ordered growth I} and/or \ref{Prop Functions of ordered growth II} hold. The idea is to construct $\sigma$ as $\sigma(z) = a(z) + \ta(-z)$, where $a, \ta$ grows at different orders and $\lim_{z\to-\infty} a(z)$, $\lim_{z\to-\infty} \ta(z)$ are bounded. For simplicity, consider $m = 2$ and $f_1(x) = x^2$, $f_2(x) = x$, $x \in \bR$. Clearly, $\limftyx f_1(x) = \limftyt f_2(x) = \infty$ and $f_2(x) = o(f_1(x))$ as $x \to \infty$, i.e., they have ordered growth. 
\begin{itemize}
    \item [(a)] For Proposition \ref{Prop Functions of ordered growth I}, define $\sigma(z) = e^{z^3} + e^{-z}$. Then for any $w > 0$ and $\tw < 0$, we have $\sigma(\tw f_1(x)) \sim e^{\tw x^2}$, while $\sigma(w f_2(x)) \sim e^{w^3x^3}$. Thus, the hypotheses in Proposition \ref{Prop Functions of ordered growth I} are satisfied. 

    \item [(b)] For proposition \ref{Prop Functions of ordered growth II}, define $\sigma(z) = e^{z^7} + e^{-z^3}$. In the same way as (a) we can check that the hypotheses in Proposition \ref{Prop Functions of ordered growth II} are satisfied. Clearly this $\sigma$ also satisfies the hypotheses in Proposition \ref{Prop Functions of ordered growth I}. 
    \end{itemize} 
As we can see from these examples, the ``local behavior" of $\sigma$ is not important -- we care about the behavior of $\sigma(z)$ as $z \to \pm\infty$. In fact, the idea is to play with its growth. Note that this depends on the $f_k$'s. In general, if we are dealing with $L$-layer generalized neurons, we need to take every layer into consideration. Also note that the $\sigma$ we construct differs much from what is used in practice. However, as we shall see in Section \ref{Section Theory of general neurons}, we can carefully deform it to approximate a commonly used activation function very well. 

\begin{remark}
    Proposition \ref{Prop Functions of ordered growth I} and Proposition \ref{Prop Functions of ordered growth II} both deal with generalized neurons of the form $\sigma\left(\sum_{k=1}^m w_k f_k + b_k\right)$ (a.k.a. the $H_j$'s defined in (\ref{eq defn H_j with bias})). Obviously they focus on neurons of multiple layers. However, we can apply the idea of them to deal with more kinds of generalized neurons. The result below is an example about how to extend them. It deals with $H_j$'s and their partial derivatives against parameters. 
\end{remark}

\begin{prop}\label{Prop Functions of ordered growth III}
    Let $f_1, ..., f_m: \Omega \to \bR$ be functions of ordered growth along a curve $\gamma: [0, \infty) \to \Omega$, such that $\limftyt f_k(\gamma(t)) = \infty$ for all $1 \le k \le m$. Suppose that $\sigma: \bR \to \bR$ satisfies 
    \begin{itemize}
        \item [(a)] $\limftyz \sigma(z), \sigma'(z) = \infty$ and $\lim_{z\to-\infty} \sigma(z), \sigma'(z) = \infty$, such that $\sigma(z) = o(\sigma'(z))$ as $z \to \pm\infty$. 

        \item [(b)] We have $\sigma'(w f_{\pi(k)})f_{\pi(1)}(\gamma(t)) = o(\sigma(\tw f_{\pi(k-1)}(\gamma(t))))$ whenever $w, \tw$ are both strictly positive or negative. 

        \item [(c)] For all $w > 0$ and $\tw < 0$, either $\sigma'(\tw f_{\pi(1)}(\gamma(t)))f_{\pi(1)}(\gamma(t)) = o(\sigma(w f_{\pi(m)}(\gamma(t))))$, or\\ $\sigma'(w f_{\pi(1)}(\gamma(t)))f_{\pi(1)}(\gamma(t)) = o(\sigma(\tw f_{\pi(m)}(\gamma(t))))$ as $t \to \infty$. Here $\pi$ is the permutation defined as in Definition \ref{Defn Ordered growth}. 
    \end{itemize}
    Let $n \in \bN$ and for each $j \in \bN$ define $H_j$ as in formula (\ref{eq defn H_j with bias}). Then, if and only if the $(w_j, b_j)$'s are distinct and $w_j \ne 0$ for all $j$, the functions 
    \[
        \left\{ H_j, \partial_{w_{jk}} H_j, \partial_{b_j} H_j \right\}_{j,k=1}^{n,m} 
    \]
    have ordered growth along $\gamma$, and 
    \[
        \limftyt H_j(\gamma(t)) = \limftyt \partial_{w_{jk}} H_j(\gamma(t)) = \limftyt \partial_{b_j} H_j(\gamma(t)) = \infty 
    \]
    for every $1 \le j \le n$ and $1 \le k \le m$. 
\end{prop}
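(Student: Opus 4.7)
The proof plan is to mirror the structure of the proof of Proposition \ref{Prop Functions of ordered growth II}, treating $A_j(z) := \sum_{k=1}^m w_{jk} f_k(z) + b_j$ as the argument inside $\sigma$, and keeping track of the derivative terms $\partial_{b_j} H_j = \sigma'(A_j)$ and $\partial_{w_{jk}} H_j = \sigma'(A_j) f_k$ alongside $H_j = \sigma(A_j)$. After reindexing I assume the permutation $\pi$ is the identity, so $f_k(\gamma(t)) = o(f_{k-1}(\gamma(t)))$ for $k \ge 2$. The necessity direction is immediate: if $(w_{j_1}, b_{j_1}) = (w_{j_2}, b_{j_2})$ then the corresponding $H$'s and their partials coincide as functions, precluding ordered growth; and if some $w_j = 0$ then $A_j \equiv b_j$ is constant, so $H_j = \sigma(b_j)$ does not diverge.

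For sufficiency, let $k_j^* := \min\{k : w_{jk} \neq 0\}$. Because the $f_k$'s have ordered growth, $A_j(\gamma(t)) \sim w_{j k_j^*} f_{k_j^*}(\gamma(t)) \to \pm\infty$; then condition (a) gives $\sigma(A_j), \sigma'(A_j) \to \infty$, and since $f_k(\gamma(t)) \to \infty$ too, all of the $\sigma'(A_j) f_k$ diverge. This settles the divergence claim. Next I organize the $2m+1$ functions attached to a single block $j$ into a chain under $o(\cdot)$: condition (a) yields $\sigma(A_j) = o(\sigma'(A_j))$; divergence of $f_k$ yields $\sigma'(A_j) = o(\sigma'(A_j) f_k)$; and ordered growth of the $f_k$'s yields the total chain $\sigma(A_j) \ll \sigma'(A_j) \ll \sigma'(A_j) f_m \ll \cdots \ll \sigma'(A_j) f_1$.

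The heart of the proof is the between-block comparison. For distinct $j_1, j_2$ it suffices to show that the largest function in one block, $\sigma'(A_{j_\ell}) f_1$, is of order $o$ of the smallest function in the other block, $\sigma(A_{j_{\ell'}})$; combined with the within-block ordering this propagates to produce ordered growth on the whole family. The case analysis runs in parallel with Proposition \ref{Prop Functions of ordered growth II}: if the leading coefficients $w_{j_\ell k^*_{j_\ell}}$ have opposite signs, apply condition (c) to the asymptotic equivalents of $A_{j_1}$ and $A_{j_2}$; if they have the same sign but $w_{j_1} \neq w_{j_2}$, let $k^{**}$ be the least index on which the two $w$-vectors differ and use condition (b), iterated via the hyperexponential growth of $\sigma$ (which follows from $\sigma = o(\sigma')$ together with $\sigma \to \infty$), to cascade domination down from level $k^{**}$ to the leading level; finally, if $w_{j_1} = w_{j_2}$ but $b_{j_1} \neq b_{j_2}$ then $A_{j_1} - A_{j_2}$ is a nonzero constant and hyperexponential growth of $\sigma$ along the real line gives dominance of the block with the larger $b$.

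I expect the main obstacle to lie in the last sub-case, where no level-gap in the $f_k$-hierarchy is available to separate $A_{j_1}$ from $A_{j_2}$: only the constant shift $b_{j_1} - b_{j_2}$ does. Showing that even the derivative term $\sigma'(A_{j_\ell}) f_1$ is absorbed by $\sigma(A_{j_{\ell'}})$ requires combining hyperexponential growth of $\sigma$ with condition (b) applied at level $k = k^*_{j_\ell} + 1$ (or, in the boundary situation $k^*_{j_\ell} = m$, deferring to (c) together with the observation that $A_{j_\ell}$ sits at the deepest $f$-level). The bookkeeping here is the delicate part, and it is precisely where all three of (a), (b), (c) must be invoked together rather than separately.
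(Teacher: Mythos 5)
Your plan reproduces the paper's proof almost step for step: identity permutation after reindexing, the within-block chain $\sigma(A_j)\ll\sigma'(A_j)\ll\sigma'(A_j)f_m\ll\cdots\ll\sigma'(A_j)f_1$ from hypothesis (a) and the divergence of the $f_k$, and between-block comparisons split by the sign of the leading coefficients (condition (c)) and by the least index where the weight vectors differ (condition (b)). The paper assembles exactly these ingredients into its global ordering chain (\ref{eq 1 of Prop Functions of ordered growth III}) and then runs the same two cases, so on those cases your argument is the intended one.

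The one place your sketch does not close is the sub-case you yourself flag: $w_{j_1}=w_{j_2}$, $b_{j_1}\ne b_{j_2}$. Your proposed mechanism --- condition (b) ``applied at level $k^*_{j_\ell}+1$'' --- does not literally apply, because both $A_{j_1}$ and $A_{j_2}$ are asymptotic to $w_{j k^*}f_{k^*}$ at the \emph{same} level $k^*$, differing only by the constant $b_{j_1}-b_{j_2}$, whereas (b) only compares a derivative term at level $k$ against a function term at the strictly higher level $k-1$. What you actually need is $\sigma'(x)f_1(\gamma(t))=o(\sigma(x+c))$ for $x=A_{j_1}(\gamma(t))$ and a fixed $c>0$, and this does not follow from (a)--(c): hyper-exponential growth of $\sigma$ controls $\sigma(x)$ versus $\sigma(x+c)$ but says nothing about the extra factor $f_1(\gamma(t))$, which can dominate $x$ by an arbitrary amount (e.g.\ $f_1=e^{e^{f_{k^*}}}$ defeats $\sigma(z)=e^{z^7}+e^{-z^3}$ here). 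To be fair, the paper's own proof silently omits this sub-case entirely (its case analysis only covers opposite signs and differing weight vectors), so you have identified a real soft spot in the proposition rather than introduced a new error; but as written neither your sketch nor the quoted hypotheses suffice to order $\partial_{w_{j_1 1}}H_{j_1}$ against $\sigma(A_{j_2})$ in this situation, and an additional hypothesis (or a restriction to distinct $w_j$'s) would be needed to make the step rigorous.
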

\begin{proof}
    Without loss of generality, we assume that $f_k(\gamma(t)) = o(f_{k-1}(\gamma(t)))$ for all $2 \le k \le m$ and  $\sigma'(\tw f_1(\gamma(t)))f_1(\gamma(t)) = o(\sigma(w f_m(\gamma(t))))$ as $t \to \infty$, for all $w > 0$ and $\tw < 0$. Similar as in the proof for Proposition \ref{Prop Functions of ordered growth I}, for any $k,j \in \{1, ..., m\}$ and any $w > 0$ and $\tw < 0$, we have
    \[
        \sigma'(\tw f_k(\gamma(t)))f_k(\gamma(t)) = o(\sigma(w f_j(\gamma(t)))) 
    \]
    as $t \to \infty$. Combining this with property (a) and (b) of $\sigma$, we conclude that for any $w_1, ..., w_m > 0$ and $\tw_1, ..., \tw_m < 0$, we have the following ordering: 
    \begin{equation}\label{eq 1 of Prop Functions of ordered growth III}
    \begin{aligned}
        \sigma(\tw_m f_m(\gamma(t)) &\ll ...\\
        &\ll \sigma(\tw_1 f_1(\gamma(t))) \ll \sigma'(\tw_1 f_1(\gamma(t))) \ll \sigma'(\tw_1 f_1(\gamma(t)))f_1(\gamma(t)) \\
        &\ll \sigma(w_m f_m(\gamma(t))) \ll \sigma'(w_m f_m(\gamma(t))) \ll \sigma'(w_m f_m(\gamma(t))) f_1(\gamma(t)) \\
        &\ll ... \\
        &\ll \sigma'(w_1 f_1(\gamma(t))) f_1(\gamma(t)). 
    \end{aligned}
    \end{equation}
    
    Given $1 \le j \le n$, by our definition of $H_j$ we have 
    \begin{align*}
        \partial_{w_{jk}} H_j(z) &= \sigma'\left( \sum_{k'=1}^m w_{jk'} f_{k'}(z) + b_{k'} \right) f_k(z) \\ 
        \partial_{b_j} H_j(z) &= \sigma'\left( \sum_{k=1}^m w_{jk} f_k(z) + b_k \right). 
    \end{align*}
    These are clearly generalized neurons. As mentioned in previous proofs, for each $j$ there is some $k(j) \in \{1, ..., m\}$ such that  
    \[
        \sum_{k=1}^m w_{jk} f_k(\gamma(t)) \sim w_{j k(j)} f_{k(j)} (\gamma(t)) 
    \]
    as $t \to \infty$. Fix $j_1, j_2 \in \{1, ..., n\}$. We would like to establish an ordering of the functions $\left\{ H_{j_i}, \partial_{w_{j_i k}}, \partial_{b_{j_i}}H_{j_i} \right\}_{k,i=1}^{m,2}$. Consider the following cases. Note that in (\ref{eq 1 of Prop Functions of ordered growth III}), the $w_j$ and $\tw_j$'s are arbitrary. 
    \begin{itemize}
        \item [(a)] Suppose that $w_{j_1 k(j_1)}$ and $w_{j_2 k(j_2)}$ have different signs; without loss of generality assume that $w_{j_1 k(j_1)} > 0$ and $w_{j_2 k(j_2)} < 0$. Then by (\ref{eq 1 of Prop Functions of ordered growth III}), 
        \begin{align*}
            &\,\,\,\,\,\,\,H_{j_2}(\gamma(t)) \ll \partial_{b_{j_2}} H_{j_2}(\gamma(t)) \ll \partial_{w_{j_2 1}} H_{j_2}(\gamma(t)) \\
            &\ll H_{j_1}(\gamma(t)) \ll \partial_{b_{j_1}} H_{j_1}(\gamma(t)) \ll \partial_{w_{j_1 1}} H_{j_1}(\gamma(t)). 
        \end{align*}
        Moreover, using the ordered growth of $f_1, ..., f_m$ (recall the formulas for $\partial_{w_jk} H_j$'s), 
        \[
            \partial_{b_{j_2}} H_{j_2}(\gamma(t)) \ll \partial_{w_{j_2 m}} H_{j_2}(\gamma(t)) \ll ... \ll \partial_{w_{j_2 1}} H_{j_2}(\gamma(t))
        \]
        and a similar inequality holds for $j_1$. Thus, we establish an ordering for all the functions. 

        \item [(b)] Suppose that $w_{j_1 k(j_1)}$ and $w_{j_2 k(j_2)}$ are both positive or negative. Let $k' \in \{1, ..., m\}$ is the smallest number such that $w_{j_1 k'} \ne w_{j_2 k'}$. Then 
        \[
            \sum_{k=1}^m (w_{j_1 k} - w_{j_2 k}) f_k(\gamma(t)) \sim (w_{j_1 k'} - w_{j_2 k'}) f_k(\gamma(t)) 
        \]
        as $t \to \infty$. If $w_{j_2 k'} > w_{j_1 k'}$, then by property (a) and (b) of $\sigma$ or inequality (\ref{eq 1 of Prop Functions of ordered growth III}), we have 
        \begin{align*}
            H_{j_1}(\gamma(t)) &\ll \partial_{b_{j_1}} H_{j_1}(\gamma(t)) \ll \partial_{w_{j_1 1}} H_{j_1}(\gamma(t)) \\ 
                               &\ll \partial_{b_{j_2}} H_{j_2}(\gamma(t)) \ll \partial_{w_{j_2 1}} H_{j_1}(\gamma(t)). 
        \end{align*}
        If $w_{j_2 k'} < w_{j_1 k'}$ we simply argue in the same way to obtain 
        \begin{align*}
            H_{j_2}(\gamma(t)) &\ll \partial_{b_{j_2}} H_{j_2}(\gamma(t)) \ll \partial_{w_{j_2 1}} H_{j_1}(\gamma(t)) \\ 
                               &\ll \partial_{b_{j_1}} H_{j_1}(\gamma(t)) \ll \partial_{w_{j_1 1}} H_{j_1}(\gamma(t)). 
        \end{align*}
        Finally, using the ordered growth of $f_1, ..., f_m$ we also have 
        \[
            \partial_{b_{j_1}} H_{j_1}(\gamma(t)) \ll \partial_{w_{j_1 m}} H_{j_1}(\gamma(t)) \ll ... \ll \partial_{w_{j_1 1}} H_{j_1}(\gamma(t))
        \]
        and similarly for $j_2$. Thus, we can also establish an ordering for all the functions. 
    \end{itemize}
\end{proof}

Let's consider the example (b) above again. We claim that the activation function $\sigma(z) = e^{z^7} + e^{-z^3}$ satisfies the hypotheses in Proposition \ref{Prop Functions of ordered growth III}. First note that $\sigma'(z) = 7z^6 e^{z^7} - 3z^2 e^{-z^3}$, whence $\sigma'(z) \sim 7z^6 e^{z^7}$ as $z \to \infty$, and $\sigma'(z) \sim -3z^2 e^{-z^3}$ as $z \to -\infty$. Thus, $\sigma$ clearly have property (a). For (b), given $w > 0$ and $\tw < 0$, we have 
\begin{align*}
    \sigma'(\tw x^2) \sim -3\tw^2 x^4 e^{-\tw^3 x^6}, \quad \sigma'(w x) \sim 7w^6 x^6 e^{x^7}
\end{align*}
as $t \to \infty$. Finally, (c) is mentioned in Proposition \ref{Prop Functions of ordered growth II}, so we have proved that (c) holds as well. This shows that $\sigma$ satisfies the hypotheses in Proposition \ref{Prop Functions of ordered growth III}. 

\begin{remark}
    In the same way we can formalize and prove a similar result concerning finitely many $L$-layer neurons and all the partial derivatives against parameters of them. 
\end{remark}

\subsection{Local Asymptotics of Parameterized Functions}\label{Subsection Local asymptotics of parameterized functions}

Consider a parameterized function $H: \bF^N \times \bF \to \bF$, where $\theta \in \bF^N$ is called the parameters of $H$ and $z \in \bF$ the input of $H$ (see Section \ref{Section Notations and assumptions}). The level sets, the asymptotics of $H(\theta, \cdot)$ near any $x_0 \in \bF$ all depend on $\theta$: for example, when $H$ is analytic, we have $H(\theta, x) \sim c(x-x_0)^s$ for some $c$ and $s$ determined by $\theta$. Below we study how these properties depend on $\theta$. We will first prove the most general case. Then we will focus specifically on definable functions (Defintion \ref{Defn Definable function}) which gives us a stronger result. \\

Before we present the results, let's first make the following definition. 

\begin{defn}
    Define $\calA(\bF)$ as the set of real analytic functions $f: \bF \to \bF$. For each $S \in \bN \cup \{0\}$ define $C^S(\bF)$ the set of functions $f: \bF \to \bF$ whose $s$-th derivatives are all continuous, $0 \le s \le S$. Given such a function space $\calF := \calA(\bF)$ or $C^S(\bF)$, define the following topologies on it: 
    \begin{itemize}
        \item [(a)] (Uniform convergence) Given $E \siq \bF$, let $\norm{f}_{\infty, E} := \sup_{x \in E} |f(x)|$ for $f \in \calF$. Specifically, we write $\norm{f}_\infty$ when $E = \bF$. 

        \item [(b)] ($S$-order uniform convergence). Given $E \siq \bF$, let $\norm{f}_{S, E} := \sup_{x \in E} \sum_{s=0}^S |f^{(s)}(x)|$ for $f \in \calF$, where $f^{(s)}$ is the $s$-th derivative of $f$. Specifically, we write $\norm{f}_S$ when $E = \bF$. 

        \item [(b)] (local $S$-order uniform convergence) An open ball around some $f \in \calF$ has the form 
        \[
            \left\{ g \in \calF: \sum_{s=0}^S \norm{g^{(s)} - f^{(s)}}_{\infty, \overline{B(z^*, r)}} < \delta \right\} 
        \]
        where $\overline{B(z^*, r)} := \{z \in \bF: |z - z^*| \le r\}$. 
    \end{itemize}
\end{defn}
\begin{remark}
    When $S = 0$, we write $C(\bF)$ for $C^0(F)$; this is just the set of continuous functions on $\bF$. Also note that for $S, S' \in \bN$, the $S'$-order and local $S'$-order uniform convergence topologies on $C^S(\bF)$ are well-defined only when $S' \le S$. \\
\end{remark}

The following lemma is so straightforward that we omit the proof for it. 

\begin{lemma}\label{Lem Bound of approximating function}
    Let $E$ be a subset of $\bF^N$. Suppose that $\calL: \calA(\bF) \times \bF^N \to [0, \infty)$ is a function and there are constants $L, R > 0$ such that for any $\sigma, \tsigma \in \calA(\bF)$, we have $\sup_{\theta \in E} \norm{\calL_\sigma - \calL_{\tsigma}} < L \norm{\sigma - \tsigma}_{\infty, [-R,R]}$. Then for any $\sigma, \tsigma \in \calA(\bF)$, $\calL_{\tsigma}^{-1}(0)$ is contained in $\calL_{\sigma}^{-1}\left[0, L \norm{\sigma - \tsigma}_{\infty, [-R,R]} \right)$.
\end{lemma}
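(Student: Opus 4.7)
The plan is straightforward and rests entirely on combining the uniform Lipschitz-type hypothesis with the non-negativity of $\calL$. I read the statement as asserting that $\calL_{\tsigma}^{-1}(0) \cap E \subseteq \calL_{\sigma}^{-1}\bigl[0, L\norm{\sigma - \tsigma}_{\infty, [-R,R]}\bigr)$, since without restricting to $E$ the hypothesis provides no control on $\calL_\sigma - \calL_{\tsigma}$.

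First I would fix an arbitrary $\theta \in \calL_{\tsigma}^{-1}(0) \cap E$, so that $\calL_{\tsigma}(\theta) = 0$ by assumption. Next I would apply the hypothesis pointwise at this $\theta$: since $\theta \in E$, we have
\[
    |\calL_\sigma(\theta) - \calL_{\tsigma}(\theta)| \le \sup_{\theta' \in E} |\calL_\sigma(\theta') - \calL_{\tsigma}(\theta')| < L \norm{\sigma - \tsigma}_{\infty, [-R,R]}.
\]
Substituting $\calL_{\tsigma}(\theta) = 0$ gives $|\calL_\sigma(\theta)| < L\norm{\sigma - \tsigma}_{\infty, [-R,R]}$.

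Finally, I would invoke the fact that $\calL$ takes values in $[0, \infty)$ by definition, so $\calL_\sigma(\theta) \ge 0$, and hence $\calL_\sigma(\theta) \in \bigl[0,\, L\norm{\sigma - \tsigma}_{\infty, [-R,R]}\bigr)$, which is exactly the statement $\theta \in \calL_\sigma^{-1}\bigl[0, L\norm{\sigma - \tsigma}_{\infty, [-R,R]}\bigr)$. Since $\theta$ was arbitrary, this proves the inclusion. There is essentially no obstacle here: the only point worth flagging is the implicit restriction of the zero set to $E$, since the hypothesis only controls the functions over $E$; the rest is just the triangle inequality applied to one point together with the non-negativity of $\calL$.
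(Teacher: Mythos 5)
Your proof is correct; the paper itself omits the argument as "so straightforward" that no proof is given, and your one-point application of the sup bound over $E$ together with the non-negativity of $\calL$ is exactly the intended reasoning. Your observation that the conclusion should be read as restricted to $E$ (or that $E = \bF^N$ is implicitly intended) is a fair and accurate reading of a small imprecision in the statement.
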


Another way to state this lemma is that, given $\sigma$ and $\vep > 0$, for any $\tsigma \in \calA(\bF)$ with $\norm{\sigma - \tsigma}_{\infty, [-R,R]} < \frac{\vep}{L}$ we have $\calL_{\tsigma}^{-1}(0) \siq \calL_{\sigma}^{-1}[0, \vep)$, so locally $\calL_{\tsigma}^{-1}(0)$ and $\calL_{\sigma}^{-1}(0)$ ``looks" similar. We would like to first find a $\sigma$ and a function $\calL$ with good properties, then carefully change $\sigma$ to $\tsigma$. This operation is contained in the proof for Proposition \ref{Prop Structure of calC I}. Then we apply Lemma \ref{Lem Bound of approximating function} to show that $\calL_\sigma^{-1}(0)$ and $\calL_{\tsigma}^{-1}(0)$ ``looks similar", at least locally. \\

Under the setting of neural networks, this simple result says more. For example, consider $\calL(\sigma, \theta) := \int_0^1 \left( \sum_{k=1}^m a_k \sigma(w_k x) \right)^2 dx$. By the existing results linear independence of (unbiased) two-layer neurons \cite{BSimsek}, when $\sigma$ is a non-polynomial analytic function whose derivatives are all not odd or even, then $\calL(\sigma, \theta) = 0$ if and only if $\theta = (a_k, w_k)_{k=1}^m$ satisfies: 
\begin{itemize}
    \item [(a)] the $w_k$'s are not all distinct, 
    \item [(b)] the sum of $a_k$'s whose corresponding $w_k$'s are the same equals zero. 
\end{itemize}
Then Lemma \ref{Lem Bound of approximating function} implies that for $\tsigma \in \calA(\bR)$ sufficiently close to $\sigma$, $\calL(\tsigma, \theta) \ne 0$ whenever the $w_k$'s have a ``gap" to one another, i.e., for any $k \ne j$ we have $|w_k - w_j| > \vep$ for some $\vep > 0$. Since $\sum_{k=1}^m a_k \tsigma(w_k x)$ is analytic, these neurons $\{\tsigma(w_k x)\}_{k=1}^m$ are linearly independent. \\

The next result we will use frequently is Corollary \ref{Cor Limiting asymp of dim-1 parameterized function near 0}. To prove it we need Lemma \ref{Lem finite analytic functions kill zeros locally} and Lemma \ref{Lem finite definable functions kills zeros globally}, which can be deduced from Lemma \ref{Lem analytic function zero set stratification}. Note that it is a summary of Theorem 6.3.3 in \cite{SKrantz}. The original theorem gives a more detailed characterization of the zero set of an analytic function, but we do not need that much. 

\begin{lemma}[stratification of zero set]\label{Lem analytic function zero set stratification}
    Given $N \in \bN$. Let $\Omega \siq \bF^N$ be open. Then for any analytic function $f: \Omega \to \bF$, $f^{-1}(0)$ is a locally finite union of analytic submanifolds of $\Omega$ (or $\bF^N$), i.e., for any compact $K \siq \Omega$, $f^{-1}(0) \cap K$ is a union of finitely many manifolds.
\end{lemma}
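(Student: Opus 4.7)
The plan is to follow the classical route via the Weierstrass Preparation Theorem and induction on the ambient dimension $N$, which is exactly how Krantz establishes Theorem 6.3.3; I would include only a short sketch and defer the technical parts to that reference. The first step is to reduce the global assertion to a local one: it suffices to show that every point $p \in \Omega$ has an open neighborhood $U_p \siq \Omega$ on which $f^{-1}(0) \cap U_p$ is a finite union of analytic submanifolds. Local finiteness on an arbitrary compact $K \siq \Omega$ then follows by extracting a finite subcover from $\{U_p : p \in K\}$ and taking the union of the finitely many submanifolds produced in each piece.

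For the local claim I would induct on $N$. The base case $N=1$ is trivial: either $f \equiv 0$ near $p$, in which case a small ball is the sole (top-dimensional) submanifold, or the zeros of $f$ are isolated, yielding finitely many $0$-dimensional submanifolds in a small neighborhood. For the inductive step, after a linear change of coordinates making $f$ regular of some order $d$ in the last variable at $p$, the Weierstrass Preparation Theorem factors $f = u \cdot W$ on a polydisc, where $u$ is an analytic unit and $W$ is a Weierstrass polynomial in $z_N$ of degree $d$ with analytic coefficients in $(z_1,\ldots,z_{N-1})$. Since $u$ is nonvanishing, locally $f^{-1}(0) = W^{-1}(0)$.

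Next I would stratify $W^{-1}(0)$ using the discriminant $\Delta$ of $W$, an analytic function of $(z_1,\ldots,z_{N-1})$. On the open locus $\{\Delta \neq 0\}$ the roots of $W$ in $z_N$ are simple, so the implicit function theorem represents $W^{-1}(0)$ as at most $d$ graphs of analytic functions, each a codimension-$1$ analytic submanifold. On the closed set $\{\Delta = 0\}$ the inductive hypothesis applied to $\Delta$ gives a locally finite decomposition into analytic submanifolds of $\bF^{N-1}$, and over each such stratum $W$ restricts to an analytic function to which the inductive hypothesis applies again (viewing the fiber direction as a product). Gluing these pieces together gives the required local stratification.

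The main obstacle is not the idea but the bookkeeping: one must verify that the resulting stratification remains \emph{locally finite} and that each piece is a genuine embedded analytic submanifold rather than merely a set of manifold points. Both facts rest on the Noetherian property of the ring of convergent power series together with \L{}ojasiewicz's decomposition theorem for real analytic sets; because they are worked out completely in Chapter~6 of Krantz, the honest write-up in the paper is simply a citation of Theorem~6.3.3 with the above reduction noted for the reader.
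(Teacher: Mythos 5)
The paper offers no proof of this lemma at all: it is stated as a summary of Theorem 6.3.3 in Krantz's book, which is exactly the citation you arrive at in your final paragraph. Your sketch of the Weierstrass-preparation/discriminant induction is the standard argument behind that theorem (modulo the usual caveat that one must first pass to the reduced Weierstrass polynomial so the discriminant is not identically zero), so your proposal is consistent with, and slightly more informative than, the paper's treatment.
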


To prove Lemma \ref{Lem finite analytic functions kill zeros locally} and Lemma \ref{Lem finite definable functions kills zeros globally}, we first introduce the idea of o-minimal sets and definable functions \citep{AGhorbani, AWilkie}. 

\begin{defn}[o-minimal structure, rephrased from \cite{AWilkie}]\label{Defn Ominimal structure}
    We define a structure $\calS$ on $\bR$ to be a sequence $\{S_n\}_{n=0}^\infty$ satisfying the following axioms for each $n \in \bN \cup \{0\}$.
    \begin{itemize}
        \item [(a)] $S_n$ is an algebra of subsets of $\bR^n$. 
        \item [(b)] If $A \in S_n$, both $A \times \bR$ and $\bR \times A \in S_{n+1}$. 
        \item [(c)] All semi-algebraic sets in $\bR^n$ are in $S_n$. 
        \item [(d)] For any $E \siq S_{n+1}$, $Pr_n(E) \siq S_n$ where $Pr_n: \bR^{n+1} \to \bR^n$ is the projection map onto the first $n$ coordinates of elements in $\bR^{n+1}$. 
    \end{itemize}
    For any $n$ and any $E \siq S_n$, we call $E$ a definable set. A structure is said to be o-minimal if the definable sets in $S_1$ are exactly finite unions of points and intervals. 
\end{defn}

\begin{defn}[definable functions, Definition 3.2 from \cite{AGhorbani}]\label{Defn Definable function}
    Given definable sets $A \siq \bR^m$ and $B \siq \bR^n$. A function $f: A \to B$ is called definable, if its graph ${(x, f(x)): x\in A} \siq \bR^m \times \bR^n$ is definable.  
\end{defn}

In \cite{AGhorbani}, it is said that Wilkie proved that the smallest structure containing the graph of $\exp(\cdot)$ is an o-minimal structure, whence the algebraic operations on $\exp(\cdot)$ are all definable functions. In particular, this include
\begin{itemize}
    \item [(a)] All commonly seen analytic activations, such as Sigmoid, Tanh, Softmax and Swish activations. 

    \item [(b)] All functions of the form $\sigma(x) = e^{x^\alpha} + e^{x^\beta}$, where $\alpha, \beta \in \bN$. 

    \item [(c)] The generalized neurons and neural networks with activations in (a) and/or (b). 

    \item [(d)] The derivatives (against both input and parameter) of functions in (c). 
\end{itemize}

An important result about definable functions is as follows. It is useful to us in Section \ref{Subsection Zsigma for activations vanishing at 0} and in Corollary \ref{Cor Limiting asymp of dim-1 parameterized function near 0}. 

\begin{thm}\label{Thm Zero set of definable function has finite components}
    The zero set of a definable function has finitely many components; in particular the zero sets of functions in the examples (a), (b), (c) and (d) above all have finitely many connected components. 
\end{thm}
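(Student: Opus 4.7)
The plan is to first show that for a definable function $f: A \to B$ with $A \siq \bR^m$ and $B \siq \bR^n$, the preimage $f^{-1}(0)$ is itself a definable subset of $\bR^m$, and then invoke the fundamental structural theorem of o-minimal geometry (cell decomposition) to conclude that any definable set has only finitely many connected components. The theorem for the examples (a)--(d) will then follow as an application, once one verifies that all the listed functions land in the o-minimal structure generated by $\exp$.

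For the first step I would build $f^{-1}(0)$ from the graph $G_f$ using only the axioms (a)--(d) in Definition \ref{Defn Ominimal structure}. The origin $\{0\} \siq \bR^n$ is semi-algebraic and hence in $S_n$ by axiom (c); applying axiom (b) repeatedly places $\bR^m \times \{0\}$ in $S_{m+n}$. The graph $G_f$ belongs to $S_{m+n}$ by the hypothesis that $f$ is definable, and intersections stay in the algebra $S_{m+n}$. A final $n$-fold application of the projection axiom (d) yields
\[
    f^{-1}(0) \;=\; Pr_m\bigl(G_f \cap (\bR^m \times \{0\})\bigr) \;\in\; S_m.
\]

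For the main step I would rely on the classical result in o-minimal geometry that every definable subset of $\bR^m$ has only finitely many connected components. The $m = 1$ case is exactly the o-minimality axiom (a finite union of points and intervals has finitely many components). The general $m$ case is van den Dries' cell decomposition theorem, which partitions any definable set into finitely many definable, connected cells; counting cells then bounds the number of connected components of $f^{-1}(0)$. I would cite this rather than reprove it, since its proof is a delicate induction on dimension using the projection axiom together with uniform finiteness lemmas.

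Finally, for the examples (a)--(d), I would appeal to Wilkie's theorem (quoted in the paper just before the statement) that the smallest structure containing the graph of $\exp$ is o-minimal. Sigmoid, Tanh, Softmax and Swish are rational expressions in $\exp$, hence definable in this structure; the functions $e^{x^\alpha} + e^{x^\beta}$ and every generalized neuron or NN assembled from them are obtained by composition and algebraic operations on definable functions, which remain definable. For (d), I would additionally invoke the standard fact that derivatives (where they exist) of definable functions in an o-minimal expansion of the real field are again definable, which follows because the condition $y = f'(x)$ is expressible by a first-order formula in the structure. The main obstacle is really the cell decomposition step, which is the substantive content from o-minimality; everything else is routine bookkeeping with the axioms and the closure properties of the Wilkie structure.
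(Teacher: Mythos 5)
Your proposal is correct and follows essentially the same route as the paper: show $f^{-1}(0)$ is definable by intersecting the graph with the definable set where the output coordinate vanishes, then invoke the o-minimal fact that definable sets have finitely many connected components, and finally check the examples via closure properties of the Wilkie structure. Your version is in fact slightly more careful than the paper's in that you explicitly apply the projection axiom to land the zero set in $S_m$ rather than leaving it as a subset of $\bR^{m+n}$.
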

\begin{proof}
    By \cite{AGhorbani}, every definable set has finitely many connected components. Thus, it suffices to show that the zero set of a definable function $f: A \to B$ is a definable set, where $A \siq \bR^m$ and $B \siq \bR^n$ for some $m, n \in \bN$. This is because the functions in examples (a), (b), (c) and (d) are all definable. By Definition \ref{Defn Definable function}, the graph of $f$, $\{(x, f(x)): x \in A\}$ is definable. Let $p: \bR^n \times \bR^m \to \bR$ be the polynomial $p(x, y) = |y|^2$. Since
    \begin{align*}
        f^{-1}(0) = \{(x, f(x)): x \in A\} \cap \{(x, 0): x \in \bR^m \} = \{(x, f(x)): x \in A\} \cap p^{-1}(0), 
    \end{align*}
    $f^{-1}(0)$ is definable by Definition \ref{Defn Ominimal structure}. \\
\end{proof}

\begin{lemma}\label{Lem finite analytic functions kill zeros locally}
    Given $N \in \bN$. Let $\{c_s\}_{s=0}^\infty$ are (real or complex) analytic functions defined on an open subset $\Omega$ of $\bF^N$. Denote their common zero set by $\calZ$. Given a bounded open subset $\Omega'$ of $\Omega \cut \calZ$, there is some $S \in \bN$ such that for any $\theta \in \Omega'$, we can find some $s \in \{0,1,..., S\}$ with $c_s(\theta)\ne 0$. 
\end{lemma}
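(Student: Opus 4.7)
The plan is a short open-cover-and-compactness argument: reduce the question to extracting a finite subcover of the open cover of $\overline{\Omega'}$ formed by the non-vanishing loci of the individual $c_s$.

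First, for each integer $s \ge 0$, set $V_s := \{\theta \in \Omega : c_s(\theta) \ne 0\}$; each $V_s$ is open in $\Omega$ because the analytic function $c_s$ is continuous. By the definition of $\calZ$ as the common zero locus,
\[
    \bigcup_{s=0}^{\infty} V_s \;=\; \Omega \cut \calZ,
\]
which contains $\Omega'$ by hypothesis. Since $\Omega'$ is bounded, its closure $\overline{\Omega'}$ (taken in $\bF^N$) is compact; and since $\Omega' \siq \Omega \cut \calZ$ is a bounded open subset in the sense demanded by the hypothesis, $\overline{\Omega'}$ sits inside the open set $\Omega \cut \calZ$ on which all of the $c_s$ are defined and do not simultaneously vanish. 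Hence $\{V_s\}_{s\ge 0}$ is an open cover of the compact set $\overline{\Omega'}$.

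Second, compactness yields a finite subcover $V_{s_1}, \ldots, V_{s_n}$, and we set $S := \max\{s_1, \ldots, s_n\}$. For any $\theta \in \Omega' \siq \overline{\Omega'}$ there is some $i$ with $\theta \in V_{s_i}$, which is exactly $c_{s_i}(\theta) \ne 0$ with $s_i \in \{0, 1, \ldots, S\}$, so the same $S$ works uniformly over $\Omega'$.

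The only nontrivial ingredient is the reduction from the infinite open cover to a finite one, which rests on reading the hypothesis so that $\overline{\Omega'}$ is a compact subset of $\Omega \cut \calZ$. Analyticity of the $c_s$ enters only through their continuity, which gives openness of $V_s$; no deeper input, such as coherence of the analytic ideal sheaf $\langle c_s \rangle_{s \ge 0}$ or Lemma~\ref{Lem analytic function zero set stratification}, is required for this particular claim.
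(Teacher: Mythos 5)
Your argument breaks at the step where you assert that $\overline{\Omega'}$ ``sits inside the open set $\Omega \cut \calZ$.'' The hypothesis only says that $\Omega'$ itself is a bounded open subset of $\Omega \cut \calZ$; nothing prevents the closure $\overline{\Omega'}$ from meeting $\calZ$ (or $\partial\Omega$), and in the paper's intended application (Corollary \ref{Cor Limiting asymp of dim-1 parameterized function near 0}) the set playing the role of $\Omega'$ is essentially $U(\theta^*) \cut \calZ$, whose closure certainly contains points of $\calZ$. At such a limit point no $V_s$ applies, so $\{V_s\}_{s\ge 0}$ is an open cover of $\Omega'$ but not of $\overline{\Omega'}$, and compactness yields nothing. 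The entire content of the lemma is that one uniform finite $S$ works even for points of $\Omega'$ arbitrarily close to $\calZ$; if $\overline{\Omega'}$ were disjoint from $\calZ$, the statement would be a triviality of no use downstream.

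A quick sanity check confirms that something beyond continuity must be used: for merely continuous $c_s$ the statement is false. Take $N=1$, $c_s(\theta) = \max(0, |\theta| - 2^{-s})$; then $\calZ = \{0\}$ and $\Omega' = (-1,1)\cut\{0\}$ is a bounded open subset of $\bR\cut\calZ$, yet for every $S$ the point $\theta = 2^{-S-1} \in \Omega'$ satisfies $c_0(\theta) = \dots = c_S(\theta) = 0$. So your closing remark that analyticity ``enters only through continuity'' cannot be right. The paper's proof instead leans on Lemma \ref{Lem analytic function zero set stratification}: it intersects the zero sets one at a time, using the stratification of analytic zero sets to drop the dimension of $\bigl(\cap_{s} c_s^{-1}(0)\bigr) \cut \calZ$ by one at each stage, so that after finitely many steps only finitely many points outside $\calZ$ survive, each of which is then killed by one further $c_s$. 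Any correct proof must exploit rigidity of this kind rather than a bare covering argument.
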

\begin{proof}
    It suffices to prove that when $\calZ \ne \bF^N$, there is some $S \in \bN$ with $\left(\cap_{s=0}^S c_s^{-1}(0) \right) \cap B = \calZ \cap B$, where $B$ is any compact ball of $\bF^N$. For this we argue in an inductive way. First fix any $\theta_1 \in B\cut\calZ$. By hypothesis there is some $s_1 \in \bN$ with $c_{s_1}(\theta_1) \ne 0$. Thus, Lemma \ref{Lem analytic function zero set stratification} implies that $c_{s_1}^{-1}(0) \siq B$ is contained in $\calZ$ union a countable, locally finite collection of connected analytic submanifolds of $\bF^N$, each one (except $\calZ$) having dimension at most $N-1$; in particular, finitely many of them intersect $B$. \\

    Suppose that we have found integers $s_1, ..., s_{l(k)} \in \bN$ such that $c_{s_1}^{-1}(0) \cap ... \cap c_{s_{l(k)}}^{-1}(0)$ is $\calZ$ union a countable collection of connected analytic submanifolds of $\bF^N$ having dimension at most $N-k$, and finitely many of them intersect $B$, say
    \[
        c_{s_1}^{-1}(0) \cap ... \cap c_{s_{l(k)}}^{-1}(0) \cap B \siq \calZ \cup \left(\calM_1 \cup ... \cup \calM_{n(k)}\right)
    \]
    Here $n(k)$ and $l(k)$ are integers depending on $k$. We may assume that $\calM_j \ne \calZ$ for all $1 \le j \le n(k)$. For any $\calM_j$ and any $\theta \in \calM_j \cut \calZ$, there is a $s_{l(k) + j} \in \bN \cup\{0\}$ with $c_{s_{l(k)+j}}(\theta) \ne 0$. Since $\calM_j$ is an analytic manifold, this implies that $c_{s_{l(k)+j}}^{-1}(0) \cap \calM_j$ is contained in a countable union of submanifolds of $\calM_j$, each one having dimension $N-(k+1)$. Do this for each $1 \le j \le n(k)$ and set $l(k+1) = l(k) + n(k)$. Then clearly $c_{s_1}^{-1}(0) \cap ... \cap c_{s_{l(k+1)}}^{-1}(0)$ is contained in $\calZ$ union a countable collection of analytic submanifolds of $\bF^N$ having dimension at most $N - (k+1)$ . Define $f(\theta) = \sum_{j=1}^{l(k+1)} c_{s_j}(\theta)$, then 
    \[
        f^{-1}(0) = c_{s_1}^{-1}(0) \cap ... \cap c_{s_{l(k+1)}}^{-1}(0). 
    \]
    By Lemma \ref{Lem analytic function zero set stratification}, $f^{-1}(0)$ is a locally finite union of analytic submanifolds of $\bF^N$, whence our proof above simply says that each such manifold (except $\calZ$) in the union has dimension at most $N - (k+1)$, or finitely many intersect $B$. This completes the induction step. 
\end{proof}

This result can be strengthened when $\{c_s\}_{s=0}^\infty$ are definable functions. 

\begin{lemma}\label{Lem finite definable functions kills zeros globally}
    Given $N \in \bN$. Let $\{c_s\}_{s=0}^\infty$ are (real) definable functions on $\bF$. Denote their common zero set by $\calZ$. Then there is some $S \in \bN$ such that for any $\theta \in \bR^N \cut \calZ$, we can find some $s \in \{1, ..., S\}$ with $c_s(\theta) \ne 0$. 
\end{lemma}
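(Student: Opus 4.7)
The plan is to follow the template of Lemma \ref{Lem finite analytic functions kill zeros locally}, replacing the analytic stratification of Lemma \ref{Lem analytic function zero set stratification} by an o-minimal argument built on Theorem \ref{Thm Zero set of definable function has finite components}. The gain in conclusion -- namely that $S$ works over all of $\bR^N$ rather than only over a bounded $\Omega'$ -- comes precisely from the global (as opposed to merely locally) finite decomposition afforded by the o-minimal framework.

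First I would set $Z_S := \bigcap_{s=0}^S c_s^{-1}(0)$ and observe that each $Z_S$ is itself the zero set of a single definable function, e.g.\ $\sum_{s=0}^S c_s^2$, because finite algebraic combinations of definable functions are definable. Theorem \ref{Thm Zero set of definable function has finite components} therefore applies to $Z_S$, giving only finitely many connected components for each $S$. The sequence $(Z_S)$ is decreasing with intersection $\calZ$, and the lemma is equivalent to showing that $Z_S = \calZ$ for some finite $S$.

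Next I would imitate the inductive stratification used in Lemma \ref{Lem finite analytic functions kill zeros locally}. Starting from the finitely many components of $c_0^{-1}(0)$, for each component $\calM$ not contained in $\calZ$ pick a witness $\theta \in \calM \setminus \calZ$ and some index $s_1(\calM)$ with $c_{s_1}(\theta) \ne 0$; then $\calM \cap c_{s_1}^{-1}(0)$ is a proper definable subset of $\calM$, which Theorem \ref{Thm Zero set of definable function has finite components} again breaks into finitely many components. Iterating, at each stage only finitely many indices have been consumed and only finitely many components remain, so termination of the process produces the desired finite $S$.

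The main obstacle, exactly as in Lemma \ref{Lem finite analytic functions kill zeros locally}, is certifying termination. The clean formulation is a dimension drop: at each iteration the components that are not yet inside $\calZ$ must lose one unit of o-minimal dimension, so after at most $N+1$ iterations every remaining component is either empty or contained in $\calZ$. For that step one relies on the o-minimal principle that a proper closed definable subset of a cell -- once the cell has been refined (via cell decomposition) so that the relevant $c_s$ is continuous on it -- has strictly smaller dimension. This is where the \emph{global} finiteness of components in Theorem \ref{Thm Zero set of definable function has finite components} earns its keep: it lets us cumulate the finitely many witnesses $s_1, s_2, \ldots$ over all components at once, with no compact-ball bookkeeping, so the total number of indices used is finite and bounded by the complexity of the cell decomposition at the top dimension times $N$.
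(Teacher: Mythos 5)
The termination step is where your argument breaks, and it breaks in an essential way. The ``o-minimal principle'' you invoke --- that a proper closed definable subset of a cell on which the relevant $c_s$ is continuous has strictly smaller dimension --- is false: $(-\infty,0]$ is a proper closed definable subset of the cell $\bR$ of the same dimension, and it is precisely the zero set of the continuous definable function $x \mapsto \max(0,x)$. The correct o-minimal fact is only that a definable subset of full dimension must have nonempty interior in the ambient set; so intersecting a component $\calM$ with $c_{s_1}^{-1}(0)$ for a single witnessed index $s_1$ gives a proper subset but need not drop the dimension, and your induction need not terminate in $N+1$ steps. Worse, no purely definable argument can work, because the statement is false for general definable functions: take $N=1$ and $c_s(\theta) = \max\left(0,\, \theta - \tfrac{1}{s+1}\right)$, which is semialgebraic and hence definable. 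Then $\calZ = (-\infty,0]$, but for every $S$ the point $\theta = \tfrac{1}{S+2}$ lies outside $\calZ$ while $c_s(\theta) = 0$ for all $s \le S$. What rescues the lemma in its intended use is that the $c_s$ are also \emph{analytic} (in Corollary \ref{Cor Limiting asymp of dim-1 parameterized function near 0} they are Taylor coefficients of an analytic network), and analyticity is exactly what converts ``proper subset of a connected component'' into ``smaller dimension,'' via the identity theorem rather than via cell decomposition.

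The paper's proof is organized accordingly: it first runs the analytic stratification argument of Lemma \ref{Lem finite analytic functions kill zeros locally} to produce a finite $S'$ for which $\bigcap_{s=0}^{S'} c_s^{-1}(0) \setminus \calZ$ consists of isolated points, and only then invokes Theorem \ref{Thm Zero set of definable function has finite components}, applied to the single definable function $\sum_{s=0}^{S'} |c_s|^2$, to conclude that this residual discrete set is in fact finite, so finitely many further indices finish the job. Your outline has the right overall shape (iterated refinement plus global finiteness from o-minimality), but you must replace the cell-decomposition dimension drop by the analytic one, and you should make the analyticity of the $c_s$ an explicit hypothesis that your proof uses.
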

\begin{proof}
    We first note there is some $S'\in \bN$ such that $c_0^{-1}(0) \cap ... \cap c_{S'}^{-1}(0) \cut \calZ$ is a (possibly) empty collection of discrete points. Indeed, if this is not true then for any $S' \in \bN$, $c_0^{-1}(0) \cap ... \cap c_{S'}^{-1}(0) \cut \calZ$ contains an analytic set $\calM$ of dimension at least 1. In other words, $\calM \siq \cap_{s=0}^\infty c_s^{-1}(0)$, which implies that $\calM \siq \calZ$, a contradiction. \\

    Fix this $S'$. Define $c:\bR^N \to \bR$ by $c(\theta) = \sum_{s=0}^{S'} |c_s(\theta)|^2$. Since $c$ is obtained by composing a definable function with definable functions, it is also definable, and clearly 
    \[
        c^{-1}(0) = c_0^{-1}(0) \cap ... \cap c_{S'}^{-1}(0). 
    \]
    Thus, by Theorem \ref{Thm Zero set of definable function has finite components}, $c^{-1}(0)$ has finitely many components, implying that $c_0^{-1}(0) \cap c_{S'}^{-1}(0) \cut \calZ$ is a finite set which we may denote by $\{p_1, ..., p_l\}$. For any $1 \le i \le l$, there is some $S_i \in \bN$ with $c_{S_i}(p_i) \ne 0$. Thus, by setting 
    \[
        S := \max\{S', S_1, ..., S_l\}, 
    \]
    we can see that $\left(\cap_{s=0}^S c_s^{-1}(0)\right) \cut \calZ = \emptyset$. 
\end{proof}

This result has many immediate corollaries about analytic parameterized functions, which we list below. They will be used in the proof for Proposition \ref{Prop Structure of calC II}. 

\begin{cor}\label{Cor Limiting asymp of dim-1 parameterized function near 0}
    Let $f: \bF^N \times \bF \to \bF$ be an analytic function which is not constant zero. Let $\calZ \siq \bF^N$ denote the set of parameters $\theta \in \bF^N$ such that $f(\theta, \cdot) \equiv 0$. Then any point $\theta^* \in \bF^N$ has a neighborhood $U(\theta^*) \siq \bF^N$ and a $S \in \bN$ such that for any $\theta \in U(\theta^*) \cut \calZ$, there are some $s \in \{0,1,...,S\}$ and $c_s(\theta) \ne 0$ with $f(\theta, x) \sim c_s(\theta)x^s$ as $x \to 0$. In particular, for any bounded open subset $\Omega \siq \bF^N$ we can find some $S \in \bN$ making the statement hold, and if $f$ is a definable function such that its derivatives are all definable, this holds for $\Omega = \bF^N \cut \calZ$.
\end{cor}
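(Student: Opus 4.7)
The plan is to reduce this statement to the previous two lemmas about common zero sets of countably many analytic / definable functions by passing from $f$ to its Taylor coefficients in the input variable $x$.

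First, since $f: \bF^N \times \bF \to \bF$ is jointly analytic, for each $\theta$ the function $f(\theta, \cdot)$ admits a power series expansion about $x = 0$, namely
\[
    f(\theta, x) = \sum_{s=0}^\infty c_s(\theta)\, x^s, \qquad c_s(\theta) := \frac{1}{s!}\, \partial_x^s f(\theta, 0).
\]
Joint analyticity implies that every coefficient $c_s: \bF^N \to \bF$ is itself analytic. By uniqueness of Taylor series, $f(\theta, \cdot) \equiv 0$ holds exactly when $c_s(\theta) = 0$ for all $s \ge 0$; hence the set $\calZ$ defined in the corollary coincides with the common zero set $\bigcap_{s=0}^\infty c_s^{-1}(0)$ of the family $\{c_s\}$. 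If in addition $f$ is definable and all of its $x$-derivatives are definable, then each $c_s$ is a definable function on $\bF^N$.

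Now fix $\theta^* \in \bF^N$ and choose a bounded open neighborhood $U(\theta^*) \siq \bF^N$. Then $U(\theta^*) \cut \calZ$ is open (since $\calZ$ is closed) and bounded, so Lemma \ref{Lem finite analytic functions kill zeros locally} applied to the family $\{c_s\}$ with $\Omega = \bF^N$ and $\Omega' = U(\theta^*) \cut \calZ$ produces an integer $S \in \bN$ such that for every $\theta \in U(\theta^*) \cut \calZ$ at least one $c_s(\theta)$ with $s \in \{0, 1, \dots, S\}$ is nonzero. Let $s(\theta) \in \{0, \dots, S\}$ be the smallest such index; then by the Taylor expansion
\[
    f(\theta, x) = c_{s(\theta)}(\theta)\, x^{s(\theta)} + \sum_{s > s(\theta)} c_s(\theta)\, x^s = c_{s(\theta)}(\theta)\, x^{s(\theta)} \bigl(1 + O(x)\bigr)
\]
as $x \to 0$, which gives $f(\theta, x) \sim c_{s(\theta)}(\theta)\, x^{s(\theta)}$, as desired.

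The two "in particular" statements follow by the same mechanism: for a bounded open $\Omega \siq \bF^N$, apply Lemma \ref{Lem finite analytic functions kill zeros locally} directly with $\Omega' = \Omega \cut \calZ$; and in the definable setting, apply Lemma \ref{Lem finite definable functions kills zeros globally} to the globally definable family $\{c_s\}$ to obtain a single $S$ valid on all of $\bF^N \cut \calZ$. I do not anticipate a serious obstacle here — the only mild point to verify is that $c_s$ is globally analytic (resp.\ definable) on $\bF^N$, which is immediate from joint analyticity (resp.\ closure of the definable class under partial derivatives and evaluation at $x = 0$); the rest is just bookkeeping around which lemma supplies $S$ locally versus globally.
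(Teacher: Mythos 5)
Your proposal is correct and follows essentially the same route as the paper: expand $f(\theta,\cdot)$ in a Taylor series at $x=0$, observe that $\calZ$ is the common zero set of the analytic (resp.\ definable) coefficient functions $c_s$, and invoke Lemma \ref{Lem finite analytic functions kill zeros locally} (resp.\ Lemma \ref{Lem finite definable functions kills zeros globally}) to bound the first nonvanishing index by a uniform $S$. The only cosmetic difference is that for a bounded open $\Omega$ the paper covers $\overline{\Omega}$ by finitely many local neighborhoods and takes the maximum of the resulting $S$'s, whereas you apply the lemma directly to $\Omega\cut\calZ$; both are valid.
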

\begin{proof}
    Fix $\theta^* \in \bF^N$. Note that on a bounded product neighborhood $U(\theta^*) \times V(\theta^*) \in \bF^N \times \bF$ of $(\theta^*, 0)$ we can write $f(\theta, x)$ in its series expansion in $x$, namely 
    \[
        f(\theta, x) := \sum_{s=0}^\infty c_s(\theta) x^s, \quad (\theta, x) \in U(\theta^*) \times V(\theta^*). 
    \]
    Here the $c_s$'s are coefficients in the series expansion, each one being analytic in $\theta$. If $\theta \in \calZ$, $c_s(\theta) = 0$ for all $s \in \bN \cup \{0\}$; if $\theta \notin \calZ$, since $f(\theta, \cdot)$ is not constant zero, there must be some $s \in \bN \cup \{0\}$ with $c_s(\theta) \ne 0$. Therefore, the common zero set (restricted to $U(\theta^*)$) of $c_0, c_1, ...$ is just $\calZ \cap U(\theta^*)$. By Lemma \ref{Lem finite analytic functions kill zeros locally} above, we have some $S \in \bN$ such that for any $\theta \in U(\theta^*) \cut \calZ$, there is a smallest $s \in \{0,1,..., S\}$ with $c_s(\theta) \ne 0$. Then $f(\theta, x) \sim c_s(\theta)x^s$ as $x \to 0$. This proves the first part of the corollary. \\

    Now fix a bounded open $\Omega \siq \bF^N$. Note that $\overline{\Omega}$ is compact. For every point $\theta^* \in \overline{\Omega}$, we can apply the first part of this corollary to find a $U(\theta^*)$ and $S(\theta^*)$. Since finitely many $U(\theta^*)$'s cover $\overline{\Omega}$ (and thus $\Omega$), say $\overline{\Omega} \siq U(\theta_1^*) \cup ... \cup U(\theta_n^*)$. Set $S := \max\{S(\theta_1^*), ..., S(\theta_n^*)\}$. Given $\theta \in \Omega$, $\theta \in U(\theta_j^*)$ for some $1 \le j \le n$. Then by definition of $S(\theta_j^*)$, there are some $s \le S(\theta_j^*) \le S$ and $c_s(\theta) \ne 0$ with $f(\theta, x) \sim c_s(\theta) x^s$ as $x \to 0$. \\

    Finally, assume that $f$ and its derivatives are all definable functions. Define $c_s(\theta) := \frac{\partial^s f}{\partial x^s}(\theta, x) |_{x=0}$. Then $\{c_s\}_{s=1}^\infty$ are definable functions. Therefore, by Lemma \ref{Lem finite definable functions kills zeros globally} there is some $S \in \bN$ such that for any $\theta \in \bR^N \cut \calZ$, we can find some $s \in \{1, ..., S\}$ with $c_s(\theta) \ne 0$. But then we obtain again $f(\theta, x) \sim c_s(\theta)x^s$ as $x \to 0$. \\
\end{proof}

\subsection{Analytic Bump Functions}\label{Subsection Analytic bump functions}

Recall that a bump function $\xi: \bR \to [0, \infty)$ is a function such that $\xi = \text{const.}$ on an interval $[a,b]$ and $\xi = 0$ on $\bR\cut[a',b']$, where $[a',b']$ is some interval containing $[a,b]$ properly. Such functions can be made smooth \cite{JLeeSmooth}, but never analytic. Our goal in this part is to construct a $\xi$ such that $\xi \approx 1$ on $[a,b]$ and $\xi \approx 0$ on $\bR\cut[a',b']$, where the approximation can be arbitrarily precise. The key to this construction is the following lemma. 

\begin{lemma}\label{Lem Analytic Bump Function}
    Let $\rho: [0, \infty) \to (0, \infty)$ be a function which is continuous, non-decreasing and has finite intersection with any linear function on $\bR$. then there are $\lambda > 0$ and an interval $I \siq [0, \infty)$ containing the origin, such that the sequence of functions 
    \[
        \left\{ f_n: f_1 = 1 + \lambda \rho, f_n = 1 + \lambda \rho(f_{n-1}) \right\}_{n=1}^\infty
    \]
    satisfies the following properties: 
    \begin{itemize}
        \item [(a)] $\limftyn f_n(x)$ exists for every $x \in I$, and this limit is strictly greater than $1$. Moreover, if $\rho$ is strictly convex and continuously differentiable, there is some $L > 1$ such that $\{f_n\}_{n=1}^\infty$ converges uniformly to the constant function $x \mapsto L$ on $I$. 
        
        \item [(b)] Conversely, assume that $x = o(\rho(x))$ as $x \to \infty$. Then $\limftyn f_n(x) = \infty$ for all $x \in [0,\infty) \cut I$. Moreover, if $\rho$ is strictly increasing on $[0, \infty)$, $\{f_n\}_{n=1}^\infty$ diverges to $\infty$ uniformly on $[0,\infty)\cut I'$, where $I'$ is any interval properly containing $I$. 
    \end{itemize}
\end{lemma}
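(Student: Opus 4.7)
The plan is to recognize the iteration as the $n$-fold composition of a single scalar map: setting $T_\lambda(y) := 1 + \lambda\rho(y)$, one has $f_n(x) = T_\lambda^n(x)$, so the entire analysis reduces to a fixed-point study of $T_\lambda$ on $[0,\infty)$. The first step would be to choose $\lambda$ and a pivotal point $y^*$ as follows. The fixed points of $T_\lambda$ solve $y = 1 + \lambda\rho(y)$, equivalently $\lambda = (y-1)/\rho(y)$; geometrically they are intersections of the graph of $\rho$ with the line through $(1,0)$ of slope $1/\lambda$. The finite-intersection hypothesis forces $\rho$ to be strictly increasing (any plateau would intersect a horizontal line in an entire interval) and guarantees finitely many fixed points. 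I would take $\lambda^* := \sup_{y > 1}(y-1)/\rho(y)$, the reciprocal slope at which this line becomes tangent to the graph of $\rho$. Under strict convexity and $C^1$-regularity, this supremum is attained at a unique $y^* > 1$ characterised by $\rho(y^*) = (y^* - 1)\rho'(y^*)$, which I would verify by applying the intermediate value theorem to $\Phi(y) := (y-1)\rho'(y) - \rho(y)$ (negative at $y = 1$ since $\Phi(1) = -\rho(1) < 0$, and eventually positive under super-linearity). In the resulting tangential picture, $T_{\lambda^*}(y^*) = y^*$ while $T_{\lambda^*}(y) > y$ for every $y \neq y^*$. Set $I := [0, y^*]$ and $L := y^*$; note $y^* = 1 + \lambda^*\rho(y^*) > 1$.

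For part~(a), on $I$ the map $T_{\lambda^*}$ is non-decreasing with $T_{\lambda^*}(y) \geq y$, so the orbit $\{T_{\lambda^*}^n(x)\}$ is monotonically non-decreasing and bounded above by $y^*$; by continuity its limit is a fixed point of $T_{\lambda^*}$ in $[x, y^*]$, hence $y^*$ itself. Thus $\lim_n f_n(x) = y^* > 1$ pointwise on $I$. For the uniform-convergence refinement under strict convexity and $C^1$, the continuous functions $f_n$ increase monotonically to the continuous constant $L$ on the compact interval $I$, so Dini's theorem delivers uniform convergence to $L$.

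For part~(b), super-linearity $x = o(\rho(x))$ yields $T_{\lambda^*}(y) - y \to \infty$ as $y \to \infty$. For $x > y^*$, i.e.\ $x \in [0,\infty) \setminus I$, the tangential picture gives $T_{\lambda^*}(y) > y$ on $(y^*, \infty)$, so $\{T_{\lambda^*}^n(x)\}$ is strictly increasing and cannot remain bounded (otherwise its limit would be a second fixed point above $y^*$, contradicting uniqueness), hence diverges to $\infty$. For the uniform refinement under strict monotonicity, given $I' \supsetneq I$, pick $\delta > 0$ with $[0, y^* + \delta] \subseteq I'$: the continuous function $T_{\lambda^*} - \mathrm{id}$ is strictly positive on $[y^* + \delta, \infty)$ and tends to $+\infty$, so attains a positive minimum $\epsilon > 0$ there; since orbits starting in $[y^* + \delta, \infty)$ remain there (monotonicity of $T_{\lambda^*}$), telescoping gives $T_{\lambda^*}^n(x) \geq x + n\epsilon$ uniformly for $x \geq y^* + \delta$, which is uniform divergence on $[0,\infty) \setminus I'$.

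The principal obstacle will be the clean tangential selection in the first step: multiple or non-isolated tangencies would either make the limit in (a) depend on $x$ or leave an attracting basin outside $I$ that contradicts (b). Strict convexity and $C^1$-regularity of $\rho$, together with the finite-intersection hypothesis, are precisely what rule out these pathologies; absent strict convexity, one falls back on taking $y^*$ to be the smallest fixed point to obtain (a), and it is the super-linearity hypothesis of (b) that naturally restores the tangent picture at $\lambda = \lambda^*$.
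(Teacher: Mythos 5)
Your proposal follows the same strategy as the paper's proof: both reduce the problem to the one-dimensional fixed-point dynamics of the map $T_\lambda(y) = 1+\lambda\rho(y)$, both locate the fixed points as intersections of the graph of $\rho$ with lines through $(1,0)$ of slope $1/\lambda$, and both run monotone-orbit arguments inside and outside the resulting interval. Several of your steps are cleaner than the paper's: Dini's theorem for the uniform convergence in (a) replaces the paper's sandwich $f_n(0)\le f_n(x)\le L$, and the telescoping bound $T^n_{\lambda}(x)\ge x+n\epsilon$ on $[y^*+\delta,\infty)$ is a tidier route to uniform divergence than the paper's comparison with the orbit of $L'$.

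There is, however, one genuine gap. You anchor the whole construction on $\lambda^* := \sup_{y>1}(y-1)/\rho(y)$ being finite and attained, but the baseline hypotheses of the lemma (continuity, monotonicity, finite intersection with lines) do not guarantee either: if $\rho$ is bounded or sub-linear the supremum is $+\infty$, and even when finite it may fail to be attained (e.g.\ $\rho(y)=y+1-e^{-y}$, which is strictly convex and $C^1$ yet has $(y-1)/\rho(y)\nearrow 1$ with no maximiser and no tangent line through $(1,0)$). When the supremum is not attained, $T_{\lambda^*}$ has no fixed point at all, every orbit escapes to infinity, and claim (a) fails for that choice of $\lambda$. The paper sidesteps this by fixing an arbitrary $\lambda$ \emph{strictly below} the supremum for which the line meets the graph, letting $L_1<\cdots<L_n$ be the finitely many roots, taking $I=[0,L_n]$ with the \emph{largest} root, and doing a case analysis on the subintervals $[L_k,L_{k+1}]$ where $T_\lambda-\mathrm{id}$ may change sign (so orbits are monotone but possibly decreasing toward the lower root). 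Your fallback of ``the smallest fixed point'' is the wrong repair: with $I=[0,L_1]$ and further roots above $L_1$, points in $(L_1,L_n)$ converge to a root rather than diverge, so (b) would fail for that $I$. The fix consistent with your framework is: under the super-linearity hypothesis of (b), $(y-1)/\rho(y)\to 0$ at both ends of $(1,\infty)$, so $\lambda^*$ is finite and attained on a finite set, and you should take $y^*$ to be its \emph{largest} element (then $T_{\lambda^*}\ge\mathrm{id}$ everywhere and your argument goes through); when super-linearity fails, (b) is vacuous and (a) must instead be handled by the paper's multi-root case analysis at some $\lambda<\lambda^*$.
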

\begin{proof} We will first find intervals for (a) and (b) separately. As the intervals are the same, the desired result follows. 
\begin{itemize}
    \item [(a)] Fix $x^* \ge 0$. Suppose that the sequence $\{f_n(x*)\}_{n=1}^\infty$ converges, then using $\limftyn f_{n+1}(x^*) = \limftyn f_n(x^*)$ and the continuity of $\rho$, we obtain
    \[
        1 + \lambda \rho(\limftyn f_n(x^*)) = \limftyn f_n(x^*). 
    \]
    It follows that 
    \[
        \lambda = \frac{\limftyn f_n(x^*) - 1}{\rho(\limftyn f_n(x^*))}. 
    \]
    Thus, our goal is to find a $\lambda > 0$ and then a closed interval $I$ satisfying (a). Since $\rho$ is positive on $[0, \infty)$, the map $x \mapsto \frac{x-1}{\rho(x)}$ is well-defined on $[0, \infty)$, negative when $x < 1$ and positive when $x > 1$. For any positive number $\lambda < \sup_{x \ge 0}\frac{x-1}{\rho(x)}$ we have an equation in $x$
    \begin{equation}\label{eq 1 of Lem Analytic Bump Function}
         \frac{x-1}{\rho(x)} = \lambda \Longleftrightarrow \frac{1}{\lambda} (x - 1) = \rho(x). 
    \end{equation}
    Since $\rho$ has finite intersection with any linear function on $\bR$, there are finitely many roots for the equation above, say $L_1 < .... < L_n$. Clearly, $L_k > 1$ for all $k$. We would like to show that $\limftyn f_n(x^*)$ exists whenever $x^* \in [0, L_n]$. For this, consider the following cases: 
    \begin{itemize}
        \item [i)] $x^* \in [L_k, L_{k+1}]$ for some $1 \le k < n$. If $\frac{x^* - 1}{\rho(x^*)} \le \lambda$, then rearranging this inequality we obtain $x^* \le 1 + \lambda \rho(x^*)$. Moreover, since $\rho$ is non-decreasing, $\rho(x^*) \le \rho(L_{k+1})$ and thus (because $L_{k+1} > 1$)
        \[
            1 + \lambda \rho(x^*) = 1 + \frac{L_{k+1} - 1}{\rho(L_{k+1})} \rho(x^*) \le L_{k+1}. 
        \]
        Similarly, if $\frac{x^* - 1}{\rho(x^*)} \ge \lambda$, then $x^* \ge 1 + \lambda \rho(x^*)$ and $1 + \lambda \rho(x^*) \ge L_k$. By repeating this argument, we can immediately see that the sequence $\{f_n(x^*)\}_{n=1}^\infty$ is monotonic and thus converges. In fact, when $x^* \in (L_k, L_{k+1})$, our definition of $L_k$'s implies that all the inequalities above are strict, and in particular 
        \[
            \limftyn f_n(x^*) = \left\{
                                \begin{aligned}
                                    &L_{k+1}, &\frac{x^* - 1}{\rho(x^*)} < \lambda \\
                                    &L_k,     &\frac{x^* - 1}{\rho(x^*)} > \lambda
                                \end{aligned}\right..
        \]
        
        \item [ii)] $x^* \in (1, L_1)$. Then by definition of $L_1$ we must have $\frac{x^* - 1}{\rho(x^*)} < \lambda$. Argue in the same way as in i) we obtain $x^* < 1 + \lambda \rho(x^*)$ and $1 + \lambda \rho(x^*) < L_1$. Thus, by repeating this we see that $\{f_n(x^*)\}_{n=1}^\infty \siq (1, L_1)$ and is increasing. Therefore, it converges (to $L_1$). 

        \item [iii)]$x^* \in [0, 1]$. In ii) we have shown that $\{f_n(x^*)\}_{n=1}^\infty$ converges whenever $x \in (1, L_1]$. Therefore, as $\{f_n\}_{n=1}^\infty$ is defined inductively, it is enough to show $1 + \lambda \rho(x^*) \in (1, L_1]$. Since $\lambda > 0$ and $\rho(x^*) > 0$, we clearly have $1 + \lambda \rho(x^*) > 1$. On the other hand, since $\rho$ is increasing, $\rho(x^*) \le \rho(L_1)$ and thus 
        \[
            1 + \lambda \rho(x^*) = 1 + \frac{L_1 - 1}{\rho(L_1)}\rho(x^*) \le L_1. 
        \]
        This completes the proof. 
    \end{itemize}
    
    Now further assume that $\rho$ is strictly convex and continuously differentiable. Since $\rho'$ is strictly increasing and thus for some sufficiently small $\lambda$, the graph of $\frac{1}{\lambda}(x-1)$ is tangent to the graph of $\rho$. For this $\lambda$, equation (\ref{eq 1 of Lem Analytic Bump Function}) has precisely one root, say $L$. Then by ii) and iii) above, for any $x^* \in [0, L]$, we have $\limftyn f_n(x^*) = L$. We then show that $f_n(0) \le f_n(x)$ for all $x \in [0, L]$, so that the convergence of $\{f_n\}_{n=1}^\infty$ is uniform. This is clear for $n = 1$. Assume it holds for some $n \in \bN$, then 
    \begin{align*}
        f_{n+1}(x) 
        &= 1 + \lambda \rho(x) \\
        &\ge 1 + \lambda \rho(L') \\ 
        &= f_{n+1}(L'). 
    \end{align*}
    This completes the induction step. \\

    Therefore, given $\lambda$ such that equation (\ref{eq 1 of Lem Analytic Bump Function}) has roots, the interval $I$ for (a) can be $I = [0, L]$, where $L > 1$ is the largest root of (\ref{Lem Analytic Bump Function}). 

    \item [(b)] Let $\lambda$ and $L > 1$ be defined as in the last paragraph of the proof for (a). We would like to show that for any $x^* > L$, $\limftyn f_n(x^*) = \infty$. Since $x = o(\rho(x))$ as $x \to \infty$ and since $L$ is the largest root of equation (\ref{eq 1 of Lem Analytic Bump Function}), we must have $\limftyx \frac{x - 1}{\rho(x)} = 0$, which implies that $\frac{x-1}{\rho(x)} < \lambda$ for all $x > L$. Equivalently, similar as in (a)-i), 
    \[
        1 + \lambda \rho(x^*) > x^*. 
    \]
    By repeating this argument we can see that $\{f_n(x^*)\}_{n=1}^\infty$ is strictly increasing. Assume that its supremum is $c \in (L, \infty)$. Let 
    \[
        \delta := \frac{L-1}{\rho(L)} - \sup_{x \ge (L+c)/2} \frac{x - 1}{\rho(x)}
    \]
    and let 
    \[
        \vep := \inf_{x \ge 0} \rho(x). 
    \]
    By the continuity of $\rho$, both $\delta$ and $\vep$ are positive. For any large $n$ with $f_n(x^*) > \frac{L+c}{2}$ and $|f_n(x^*) - c| < \delta \vep$ we have 
    \begin{align*}
        f_{n+1}(x^*) - f_n(x^*) 
        &= 1 + \lambda \rho(f_n(x^*)) - f_n(x^*) \\
        &= \frac{L-1}{\rho(L)} \rho(f_n(x^*)) - (f_n(x^*) - 1) \\ 
        &= \left[ \frac{L-1}{\rho(L)} - \frac{f_n(x^*)-1}{\rho(f_n(x^*))} \right] \rho(f_n(x^*)) \\
        &\geq \delta \vep, 
    \end{align*}
    It follows that $f_{n+1}(x^*) > c$, a contradiction. Therefore, $\{f_n(x^*)\}_{n=1}^\infty$ must diverge to $\infty$ as $n \to \infty$. \\ 

    Now further assume that $\rho$ is increasing on $[0, \infty)$. 
    Arguing in the same way as in (a), we can see that for any $L' > L$, $f_n(x) \ge f_n(L')$ for all $x \ge L'$. Since $\limftyn f_n(L') = \infty$, $\{f_n\}_{n=1}^\infty$ must diverge to $\infty$ uniformly on $[L', \infty)$. \\

    Therefore, the interval $I = [0, L]$ (for prescribed $\lambda$) in (a) also works for (b). 
\end{itemize}
\end{proof}

It is easy to construct examples for Lemma \ref{Lem Analytic Bump Function}. We give two below. 
\begin{itemize}
    \item [(a)] Consider $\rho(x) = e^x$. Then clearly $\rho$ is continuous, increasing, strictly convex, continuously differentiable and satisfies $x = o(\rho(x))$ as $x \to \infty$. To find the interval $I$, we shall work with the derivative of $q(x) := \frac{x-1}{\rho(x)}$: we set 
    \[
        q'(x) = e^{-x}(2-x) = 0, \quad (x > 1). 
    \]
    Then $x = 2$. This suggests us to set $\lambda = e^{-2}$ and $I = [0, 2]$. We can also see from Figure \ref{Figure Illustration example (a): rho(x) = exp} that $\{f_n\}_{n=1}^\infty$ indeed converges on $I$ and diverges on $[0, \infty) \cut I$. 

    \item [(b)] Consider $\rho(x) = e^{x^2}$. Again, $\rho$ is continuous, increasing, strictly convex, continuously differentiable and satisfies $x = o(\rho(x))$ as $x \to \infty$. Following the same procedure as in (a) we can find 
    \[
        \lambda = \exp\left(-\frac{2+\sqrt{3}}{2}\right) \frac{\sqrt{3}-1}{2}, \quad I = \left[0, \frac{1 + \sqrt{3}}{2} \right]. 
    \]
\end{itemize}
Note that the sequence $\{f_n\}_{n=1}^\infty$ for both (a) and (b) can be extended to $\bR$, simply by setting $f_n(-x) = f_n(x)$ for all $x < 0$. But in this way, the extension for (a) is only analytic on $\bR\cut\{0\}$ and piecewise analytic on $\bR$, while the extension for (b) is analytic on the whole real line. In general, if $\rho$ an analytic function (i.e., it is analytic on a neighborhood of $[0, \infty)$) with these properties, we can define $\Tilde{\rho}: \bR \to \bR$ by $\Tilde{\rho}(x) = \rho(x^2)$. Then $\Tilde{\rho}$ is analytic on $\bR$ and also satisfies these properties. \\

\begin{figure}[H]
    \centering
    \includegraphics[width=0.65\textwidth]{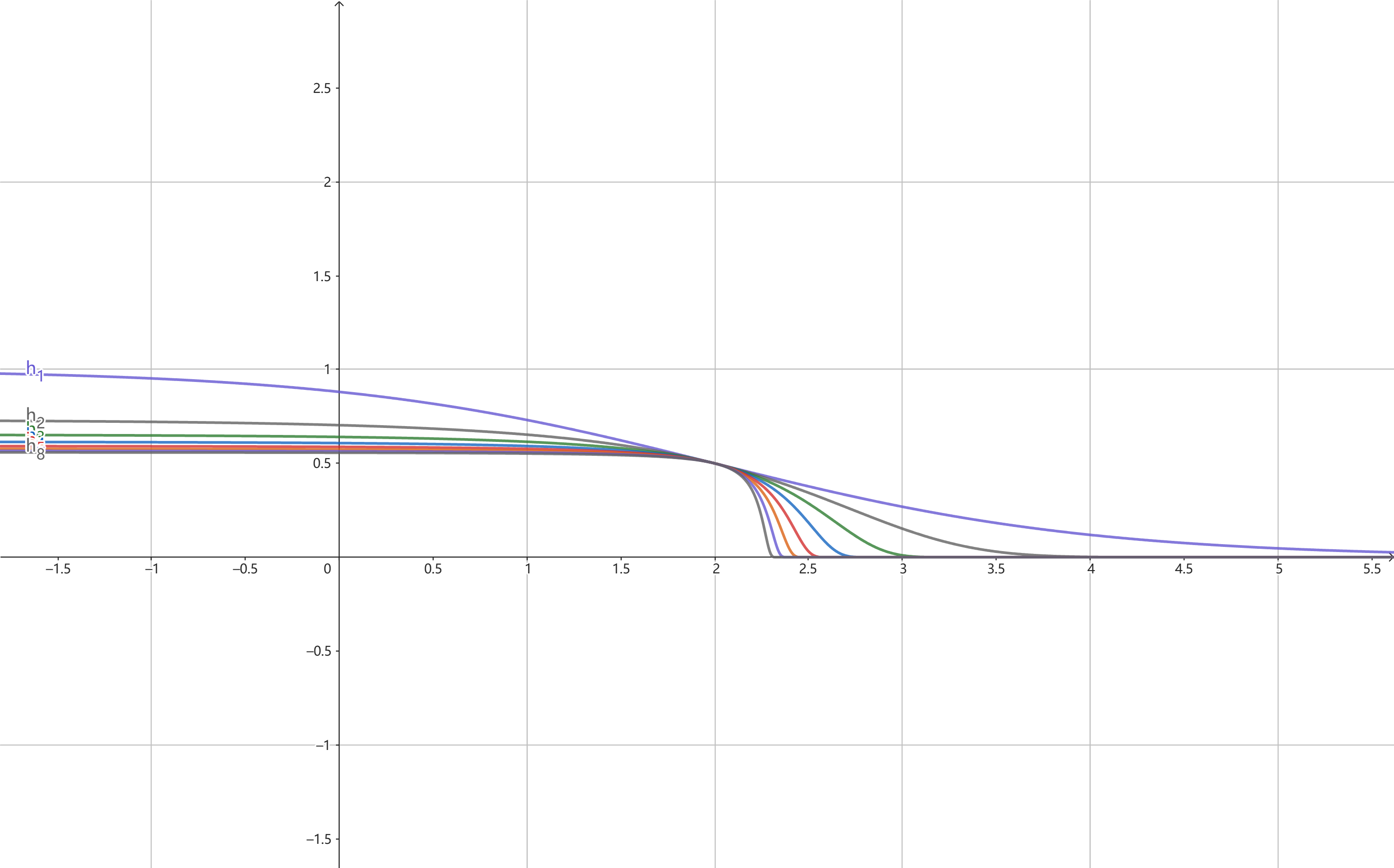}
    \caption{Illustration of example (a): how to construct the function sequence $\{f_n\}_{n=1}^\infty$ for $\rho(x) = e^x$.}
    \label{Figure Illustration example (a): rho(x) = exp}
\end{figure}

\begin{cor}[Analytic bump function]\label{Cor Analytic bump function}
    Given any $\vep > 0$ and $a, a', b, b' \in \bR$ with $a' < a < b < b'$, there is a positive $\xi \in \calA(\bR)$ such that $\norm{\xi - 1}_{\infty, [a, b]} < \vep$, $\norm{\xi}_{\infty, \bR\cut[a',b']} < \vep$ and $\norm{\xi}_\infty < 1 + \vep$. Moreover, for any $n \in \bN$, $\xi$ can be chosen such that 
    \[
        \xi^{(s)}(x) = o\left(\underbrace{\exp(-\exp(...\exp(|x|)...))}_{\text{$n$ ``$\exp$"'s}}\right) 
    \]
    as $x \to \pm\infty$, and 
    \[
        \norm{\xi^{(s)}}_{\infty, [a,b]} < \vep 
    \]
    for all $0 \le s \le S$. 
\end{cor}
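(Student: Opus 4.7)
The plan is to set
\[
    \xi(x) \;:=\; \frac{L}{f_N(T(x))},
\]
where $f_N$ is the $N$-th iterate furnished by Lemma~\ref{Lem Analytic Bump Function} applied to $\rho(x) = \tilde\rho(x^2)$ for a suitable $\tilde\rho$ analytic on $[0,\infty)$ (so that $f_N$ is automatically even and analytic on all of $\bR$), $L > 1$ is the fixed point that the lemma produces, $T$ is an affine rescaling, and $N$ is chosen large once $\vep$ and $n$ are fixed. The iterated-exponential decay at $\pm\infty$ is engineered purely by the choice of $\tilde\rho$: if $\tilde\rho$ is already an $(n-1)$-fold exponential tower, each additional application of the recursion $f_{k+1} = 1 + \lambda\rho(f_k)$ pastes one more exponential onto the tail, so $1/f_N$ decays faster than $\exp(-\exp(\cdots\exp(|x|)))$ with $n$ exponentials, as required.

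First I would set $c := (a+b)/2$, $T(x) := k(x-c)$, and verify that for every $L' > L$ sufficiently close to $L$ one can pick $k > 0$ satisfying both $T([a,b]) \siq [-L,L]$ and $T(\bR\cut[a',b']) \siq \bR\cut[-L',L']$. This reduces to the pair of inequalities
\[
    \frac{L'}{\min(b'-c,\,c-a')} \;\le\; k \;\le\; \frac{2L}{b-a},
\]
which are consistent as $L' \to L^+$ since $\min(b'-c,\,c-a') > (b-a)/2$. With $T$ fixed, $\xi$ is positive and real-analytic on $\bR$. To obtain the three sup-norm bounds I would then send $N \to \infty$: Lemma~\ref{Lem Analytic Bump Function}(a) gives $f_N \to L$ uniformly on $[-L,L]$, so $\norm{\xi - 1}_{\infty, [a,b]} < \vep$; since $L$ is a fixed point of $x \mapsto 1+\lambda\rho(x)$ and $\rho$ is non-decreasing, an induction gives $f_N \ge L$ on $\bR\cut[-L,L]$, so combined with the uniform convergence, $\inf_\bR f_N \ge L - \delta$ can be made as close to $L$ as desired and therefore $\norm{\xi}_\infty < 1+\vep$; and Lemma~\ref{Lem Analytic Bump Function}(b) gives $f_N \to \infty$ uniformly on $\bR\cut[-L',L']$, yielding $\norm{\xi}_{\infty, \bR\cut[a',b']} < \vep$. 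The corresponding bounds for $\xi^{(s)}$ at $\pm\infty$ then follow from Cauchy's estimates on complex disks of bounded radius about large real $x$, where $\xi$ extends analytically with tails inherited from $1/f_N$.

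The main obstacle is the small-derivatives estimate on $[a,b]$, i.e.\ forcing $\norm{\xi^{(s)}}_{\infty,[a,b]}$ small for $s \ge 1$ (the natural reading of the last displayed inequality, since $\xi \approx 1$ on $[a,b]$ precludes a direct bound on $\xi^{(0)}$ by $\vep$). A direct real-variable induction on the derivative recursion $f_{n+1}' = \lambda \rho'(f_n) f_n'$ is delicate because the tangency identity $\lambda\rho'(L) = 1$ built into Lemma~\ref{Lem Analytic Bump Function} means the multiplicative factor converges to $1$, so $f_n'$ only shrinks like $1/n$. The clean workaround is a complex-analytic detour: fix a complex neighborhood $U$ of $[-L,L]$ on which $\rho$ is analytic and bounded; verify that the iteration remains in a relatively compact subset of $U$; upgrade Lemma~\ref{Lem Analytic Bump Function}(a) using strict convexity of $\rho$ to uniform convergence $f_N \to L$ on $U$; and apply Cauchy's estimates on $U$ to obtain $\norm{f_N^{(s)}}_{\infty, [-L,L]} \to 0$ for each fixed $s$. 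Transferring this through $\xi = L/(f_N\circ T)$ by Fa\`a di Bruno then yields $\norm{\xi^{(s)}}_{\infty,[a,b]} < \vep$ uniformly in $s$ up to any prescribed order, completing the construction.
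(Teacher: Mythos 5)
Your construction is the paper's construction: take the iterates $f_N$ from Lemma \ref{Lem Analytic Bump Function} with $\rho(x)=\tilde\rho(x^2)$, rescale affinely, and set $\xi = L/(f_N\circ T)$; the verification of the three sup-norm bounds via parts (a) and (b) of the lemma and the monotonicity $f_N\ge L$ off $[-L,L]$ matches the paper and is correct. The gap is in how you handle the derivative estimates, where you abandon the real-variable route for a complex-analytic one that does not go through as stated. First, the claim that $f_N\to L$ uniformly on a complex neighborhood $U$ of $[-L,L]$ cannot be ``upgraded from strict convexity'': the tangency condition defining $\lambda$ forces $\lambda\rho'(L)=1$, so $L$ is a \emph{parabolic} fixed point of $\Phi(u)=1+\lambda\rho(u)$, with an attracting direction ($u<L$) and a repelling direction ($u>L$); establishing that the full complex orbit of $U$ stays in the attracting petal is a genuine piece of work (a Leau--Fatou-type argument), not a routine consequence of convexity. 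Second, the Cauchy estimates ``on complex disks of bounded radius about large real $x$'' fail for the tail bounds: already for $N\ge 2$, $f_N=1+\lambda\rho(f_{N-1})$ vanishes at the preimages under the hugely expanding map $f_{N-1}$ of the discrete set $\{\rho=-1/\lambda\}$, and these zeros accumulate toward the real axis as $|x|\to\infty$. Hence $\xi$ has poles arbitrarily close to large real $x$, the sup of $|\xi|$ on a fixed-radius disk is infinite, and Cauchy's inequality yields nothing there; the tail decay of $\xi^{(s)}$ has to be read off by differentiating the composition directly on $\bR$.

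Note also that the obstacle you invoke to justify the detour is not an obstacle: if the multiplicative factor $\lambda\rho'(f_n)$ tends to $1$ and $f_n'$ ``only shrinks like $1/n$,'' that still gives $\norm{f_n'}_{\infty,[-\tx^*,\tx^*]}\to 0$, which is all the statement requires (the bound is $<\vep$ for $N$ large, with no rate). This is exactly the paper's argument: each $f_n'$ is odd and increasing on $[0,\infty)$, so a uniform lower bound $f_{n_k}'\ge C$ on $[\tx^*,x^*]$ would force $\varliminf_k f_{n_k}(x^*)\ge L + C(x^*-\tx^*)>L$, contradicting $f_n(x^*)\to L$; hence $\norm{f_n'}_{\infty,[-\tx^*,\tx^*]}\to 0$, and similarly for higher $s$. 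You should revert to that elementary argument (or supply the petal analysis in full) before the derivative claims can be considered proved.
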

\begin{proof}
    By Lemma \ref{Lem Analytic Bump Function} and the remark above, there is some $\rho: \bR \to \bR$ which is analytic, non-decreasing, strictly convex, and $x = o(\rho(x))$ as $x \to \infty$. For this $\rho$ there are some $\lambda$ and some $L > 1$ such that
    \begin{itemize}
        \item [(a)] $\limftyn f_n = L$ uniformly on $[-L,L]$. 
        \item [(b)] $\limftyn f_n(x) = \infty$ for all $x \in \bR\cut[-L,L]$, and $\{f_n\}_{n=1}^\infty$ diverges to $\infty$ uniformly on $\bR\cut I'$, where $I'$ is any interval properly containing $[-L,L]$. 
        \item [(c)] For each $n \in \bN$, $f_n$ is bounded below by $f_n(0)$. 
    \end{itemize}
    Here the sequence $\{f_n\}_{n=1}^\infty$ is defined as in Lemma \ref{Lem Analytic Bump Function}: 
    \[
        f_1(x) = 1 + \lambda \rho(x), \quad f_{n+1}(x) = 1 + \lambda \rho(f_n(x))\,\,\,\, \forall\, n\in \bN. 
    \]
    Define a sequence of functions 
    \[
        g_n(x) := \frac{1}{L} f_n\left(\frac{b - a}{2L} x + a\right), \quad n \in \bN
    \]
    and then a sequence of functions 
    \[
        \xi_n(x) := \frac{1}{g_n(x)}, \quad n \in \bN. 
    \]
    Clearly, each $g_n$ is just obtained using a translation and a stretch from $f_n$. Thus, for $\{\xi_n\}_{n=1}^\infty$ we have 
    \begin{itemize}
        \item [(a)] $\limftyn \xi_n = 1$ uniformly on $[a, b]$. 
        \item [(b)] $\limftyn \xi_n(x) = 0$ for all $x \in \bR\cut[a,b]$, and $\{\xi_n\}_{n=1}^\infty$ converges uniformly on $\bR\cut[a', b']$. 
        \item [(c)] $\xi_n > 0$ for all $n \in \bN$. 
    \end{itemize}
    This means for sufficiently large $n$ we have, by setting $\xi := \xi_n$, both $\norm{\xi - 1}_{\infty, [a,b]} < \vep$ and $\norm{\xi}_{\infty, \bR\cut[a',b']} < \vep$, proving the first part of the statement. \\

    For the second part, we simply set $\rho(x) = e^{x^2}$. Then it is clear that for each $n \in \bN$ and $s \in \bN$, 
    \[
        \xi_n^{(s)}(x) = o\left(\underbrace{\exp(-\exp(...\exp(|x|)...))}_{\text{$n$ ``$\exp$"'s}}\right)
    \]
    as $x \to \pm \infty$, and by Lemma \ref{Lem Analytic Bump Function} (b), $\limftyn \norm{\xi_n^{(s)}}_{\infty, \bR\cut [a', b']} = 0$. Then we prove that for any $s \in \bN$ we have $\limftyn \norm{\xi_n^{(s)}}_{\infty, [a,b]} = 0$. By looking at our construction, it suffices to prove that (see also example (b) above)
    \[
        \limftyn \norm{f_n^{(s)}}_{\infty, [-\tx^*, \tx^*]} = 0, \quad \tx^* \in [0, x^*],
    \]
    where $x^* = \frac{1 + \sqrt{3}}{2}$. To illustrate this we only prove the case for $s = 1$; the general case can be proved in a similar way. Note that each $f_n'$ is odd and increasing on $[0, \infty)$, because each $f_n$ is even and 
    \[
        f_n'= \lambda \rho'(f_{n-1}) \cdot ... \cdot \lambda \rho'. 
    \]
    Fix $\tx^* \in [0, x^*]$ and assume that there is a sequence of indices $\{n_k\}_{k=1}^\infty$ such that $\limftyk f_{n_k}'(\tx^*) \ge C$ for some $C > 0$. Since each $f_{n_k}'$ is increasing on $[0, \infty)$, for any $x \in [\tx^*, x^*)$ we must have $\limftyk f_{n_k}'(x) \ge C$ as well. But then 
    \begin{align*}
        \varliminf_{k \to \infty} f_{n_k}(x^*) 
        &\ge \varliminf_{k \to \infty} f_{n_k}(\tx^*) + \varliminf_{k \to \infty} \int_{\tx^*}^{x^*} f_{n_k}'(x) dx \\
        &= L + C(x^* - \tx^*) > L, 
    \end{align*}
    contradicting $\limftyn f_n(x^*) = L$ by Lemma \ref{Lem Analytic Bump Function}. Therefore, $\limftyn f_n'(\tx^*) = 0$. Since each $f_n'$ is odd and increasing on $[0, \infty)$, $\limftyn \norm{f_n'}_{\infty, [-\tx^*, \tx^*]} = 0$ and thus for sufficiently large $n$, $\norm{\xi_n'}_{\infty, [a,b] \cup \bR\cut[a',b']} < \vep$ must hold. \\
\end{proof}

\begin{prop}\label{Prop function conca and S-order approx}
    Let $\sigma, \sigma_0 \in \calA(\bR)$ such that there are some $S, n \in \bN \cup \{0\}$ with 
    \[
        \sigma^{(s)}(x), \sigma_0^{(s)}(x) = o\left(\underbrace{\exp(\exp(...\exp(|x|)...))}_{\text{$n$ ``$\exp$"'s}}\right)
    \]
    for all $s \in \{0,1,...,S\}$, as $x \to \pm\infty$. Given real numbers  $a,a', b,b'$ with $a' < a < b < b'$, for any $\vep > 0$ there is an analytic function $\tsigma \in \calA(\bR)$ having the following properties: 
    \begin{itemize}
        \item [(a)] $\norm{\tsigma - \sigma}_{S [a,b]} < \vep$ and $\norm{\tsigma - \sigma_0}_{S, \bR\cut[a',b']} < \vep$ for all $s \in \{0,1,...,S\}$. 

        \item [(b)] Fix $s \in \{0,1,...,S\}$. If we further assume that $\norm{\sigma - \sigma_0}_{S, [a', b']} < \vep$, then there is some $C_s > 0$ such that $\norm{\sigma - \sigma_0}_S < C_s \vep$. \\
    \end{itemize}
\end{prop}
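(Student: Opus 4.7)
My plan is to glue $\sigma$ and $\sigma_0$ via an analytic bump: define
\[
\tsigma := \xi\sigma + (1-\xi)\sigma_0,
\]
where $\xi \in \calA(\bR)$ is furnished by Corollary \ref{Cor Analytic bump function} with $\xi \approx 1$ on $[a,b]$, $\xi \approx 0$ on $\bR\cut[a',b']$, and, via the ``moreover'' clause of that corollary with parameter strictly larger than $n$, with derivatives $\xi^{(k)}$ decaying at $\pm\infty$ faster than the reciprocal of a tower of $n+1$ exponentials. Since $\xi, \sigma, \sigma_0$ are all in $\calA(\bR)$, so is $\tsigma$. The point of this definition is that
\[
\tsigma - \sigma = (1-\xi)(\sigma_0 - \sigma), \qquad \tsigma - \sigma_0 = \xi(\sigma - \sigma_0),
\]
so the two approximation errors are built around the two ``flat'' regions of $\xi$.

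For part (a), I would apply Leibniz's rule to each of the two product expressions, bounding each term $\xi^{(k)}(\sigma - \sigma_0)^{(s-k)}$ (and the analogous one with $1-\xi$) for $0 \le s \le S$, $0 \le k \le s$. On the compact interval $[a,b]$, the derivatives of $\sigma, \sigma_0$ are bounded, and $\norm{1-\xi}_{\infty,[a,b]}$ and $\norm{\xi^{(k)}}_{\infty,[a,b]}$ can be made as small as desired by Corollary \ref{Cor Analytic bump function}, so $\norm{\tsigma - \sigma}_{S,[a,b]} < \vep$. For the bound on $\bR\cut[a',b']$, I would split it into a compact piece $[-M,M]\cut[a',b']$ and the tails $\bR\cut[-M,M]$: the compact piece uses the uniform smallness of $\xi^{(k)}$ on $\bR\cut[a',b']$ (established inside the proof of Corollary \ref{Cor Analytic bump function} via Lemma \ref{Lem Analytic Bump Function}(b)) combined with boundedness of $\sigma^{(j)}-\sigma_0^{(j)}$ on the compact set; on the tails, the faster tower-decay of $\xi^{(k)}$ dominates the assumed tower-growth of $\sigma^{(j)}, \sigma_0^{(j)}$, so each product tends to $0$ as $|x|\to\infty$, and a sufficiently large $M$ yields the required supremum bound.

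For part (b), the extra hypothesis $\norm{\sigma - \sigma_0}_{S,[a',b']} < \vep$ takes care of the transition region. Applying Leibniz to $\tsigma - \sigma_0 = \xi(\sigma - \sigma_0)$ on the compact $[a',b']$, where $\xi$ has some fixed $S$-order uniform norm $C$ (determined once $\xi$ is chosen, independent of $\vep$), one obtains $\norm{\tsigma - \sigma_0}_{S,[a',b']} \le C_s' \vep$; combining with the bound on $\bR\cut[a',b']$ from part (a) then yields $\norm{\tsigma - \sigma_0}_S < C_s \vep$ for a suitable $C_s > 0$.

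The main obstacle will be the uniform bound on $\bR\cut[a',b']$ in part (a): uniform smallness of $\xi^{(k)}$ there is not by itself sufficient, because $\sigma^{(j)}$ and $\sigma_0^{(j)}$ may grow at a tower-of-exponentials rate, so the compact/tail decomposition must be balanced carefully and the tower-rate decay of $\xi^{(k)}$ invoked from the ``moreover'' clause of Corollary \ref{Cor Analytic bump function}, with the corollary's parameter chosen strictly larger than $n$ so that each product $\xi^{(k)}(\sigma - \sigma_0)^{(s-k)}$ genuinely decays at infinity.
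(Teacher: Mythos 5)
Your proposal is essentially identical to the paper's proof: the paper also defines $\tsigma = \xi\sigma + (1-\xi)\sigma_0$ with $\xi$ from Corollary \ref{Cor Analytic bump function} (with tower-decay of order $n+1$), expands via Leibniz's rule, handles $[a,b]$ and the compact part of $\bR\cut[a',b']$ by the uniform smallness of $\xi^{(k)}$, handles the tails by the dominance of the $(n+1)$-fold exponential decay over the assumed $n$-fold exponential growth, and proves (b) by the same compactness argument on $[a',b']$. The approach and all key steps match.
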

\begin{proof}
\begin{itemize}
    \item [(a)] By Corollary \ref{Cor Analytic bump function}, there is an analytic bump function $\xi \in \calA(\bR)$ such that $\norm{\xi - 1}_{\infty, [a,b]} < \vep'$, $\norm{\xi}_{\infty, \bR\cut [a',b']} < \vep'$ and for any $s \in \{0,1,...,S\}$, $\norm{\xi^{(s)}}_{\infty, [a,b] \cup \bR\cut[a',b']} < \vep'$ and
    \[
        \xi^{(s)}(x) = o\left(\underbrace{\exp(-\exp(...\exp(|x|)...))}_{\text{$n+1$ ``$\exp$"'s}}\right), 
    \]
    where $\vep' > 0$ is a small number to be determined. Define $\tsigma$ by 
    \begin{equation}
        \tsigma(x) := \xi(x) \sigma(x) + (1 - \xi(x)) \sigma_0(x). 
    \end{equation}
    Fix $s \in \{0,1,...,S\}$. By inductively applying Leibniz rule for differentiation we have 
    \begin{align*}
        \tsigma^{(s)}(x) 
        &= \sum_{k=0}^s \binom{s}{k} \xi^{(k)}(x) \sigma^{(s-k)}(x) + \sum_{k=1}^s \binom{s}{k} (1 - \xi(x))^{(k)} \sigma_0^{(s-k)}(x) \\ 
        &= \xi(x) \sigma^{(s)}(x) + (1 - \xi(x)) \sigma_0^{(s)}(x) + \sum_{k=1}^s \binom{s}{k} \xi^{(k)}(x) \left[ \sigma^{(s-k)}(x) - \sigma_0^{(s-k)}(x) \right]. 
    \end{align*}
    For any $x \in \bR$ we have 
    \begin{align*}
        |\tsigma^{(s)}(x) - \sigma^{(s)}(x)| &\le |1 - \xi(x)| |\sigma^{(s)}(x) - \sigma_0^{(s)}(x)| + \sum_{k=1}^s \binom{s}{k} \xi^{(k)}(x) |\sigma^{(s-k)}(x) - \sigma_0^{(s-k)}(x)| \\
        |\tsigma^{(s)}(x) - \sigma_0^{(s)}(x)| &\le \xi(x) |\sigma^{(s)}(x) - \sigma_0^{(s)}(x)| + \sum_{k=1}^s \binom{s}{k} \xi^{(k)}(x) |\sigma^{(s-k)}(x) - \sigma_0^{(s-k)}(x)|. 
    \end{align*}
    Therefore, by our construction of $\xi$, we have, for any $x \in [a,b]$ 
    \begin{align*}
        |\tsigma^{(s)}(x) - \sigma^{(s)}(x)| 
        &< \vep' |\sigma^{(s)}(x) - \sigma_0^{(s)}(x)| + \vep' \sum_{k=1}^s \binom{s}{k} |\sigma^{(s-k)}(x) - \sigma_0^{(s-k)}(x)| \\ 
        &\le \vep' \norm{\sigma - \sigma_0}_{S, [a,b]} + \vep' 2^S \norm{\sigma - \sigma_0}_{S, [a,b]}. 
    \end{align*}
    Second, by hypothesis of $\sigma^{(s)}, \sigma_0^{(s)}$ and our construction of $\xi$ there is some $R > 0$ such that whenever $|x| > R$, 
    \begin{align*}
        |\tsigma^{(s)}(x) - \sigma_0^{(s)}(x)| 
        &< e^{-u} u + \sum_{k=1}^s \binom{s}{k} e^{-u} u \\ 
        &\le (1 + 2^S) e^{-u} u \\
        &< \vep, 
    \end{align*}
    where 
    \[
        u := \underbrace{\exp(\exp(...\exp(|x|)...))}_{\text{$n$ ``$\exp$"'s}},  
    \]
    and for any $x \in [-R,R] \cut [a',b']$, 
    \begin{align*}
        |\tsigma^{(s)}(x) - \sigma_0^{(s)}(x)| 
        &< \vep' \norm{\sigma - \sigma_0}_{S, [-R,R]} + \vep' 2^S \norm{\sigma - \sigma_0}_{S, [-R,R]}. 
    \end{align*}
    Thus, by setting $\vep' = \vep \left( S(1 + 2^S) \norm{\sigma - \sigma}_{S, [a, b] \cup [-R,R] } \right)^{-1}$, we obtain the desired estimates for both $\norm{\tsigma - \sigma}_{S, [a,b]}$ and $\norm{\tsigma - \sigma_0}_{S, \bR\cut[a',b']}$. 

    \item [(b)] Let $\xi$ be an analytic bump function defined in (a) and $\tsigma(x) := \xi(x) \sigma(x) + (1 - \xi(x)) \sigma_0(x)$. Fix $s \in \{0,1,...,S\}$. By (a), we need only investigate $|\tsigma^{(s)}(x) - \sigma_0^{(s)}(x)|$ for $x \in [a',b']$. Since the derivatives of$\xi$ are bounded on compact subsets of $\bR$, for any such $x$ we have 
    \begin{align*}
        |\sigma^{(s)} - \sigma_0^{(s)}| 
        &< \norm{\xi}_{\infty, [a', b']} \vep + \sum_{k=1}^s \binom{s}{k} \norm{\xi^{(k)}}_{\infty, [a', b']} \vep \\ 
        &\le (1 + 2^S) \norm{\xi}_{S, [a', b']} \vep. 
    \end{align*}
    This completes the proof. 
\end{itemize}
\end{proof}

\begin{figure}[H]
    \centering
    \includegraphics{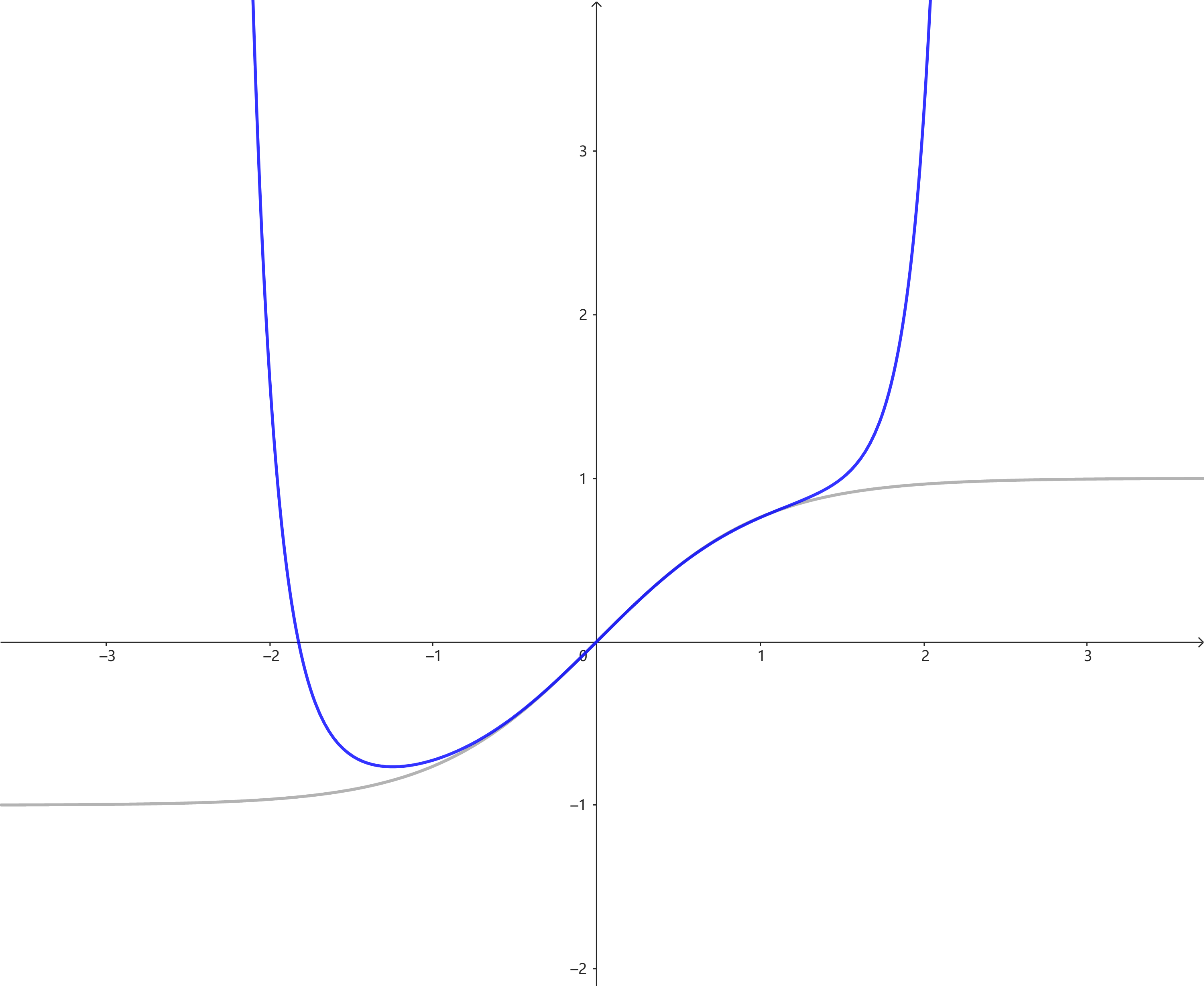}
    \caption{Construction of an analytic function $\tsigma$ that approximates Tanh activation on an interval around 0, following Proposition \ref{Prop function conca and S-order approx} (a). Here we use $\sigma(x) = e^{x^2}$ and $\zeta_4$ defined as in Corollary \ref{Cor Analytic bump function}, with ``base function" $f(x) = e^{|x|}$.}
    \label{Figure Good Analytic Function for Tanh}
\end{figure}

\begin{figure}[H]
    \centering
    \includegraphics{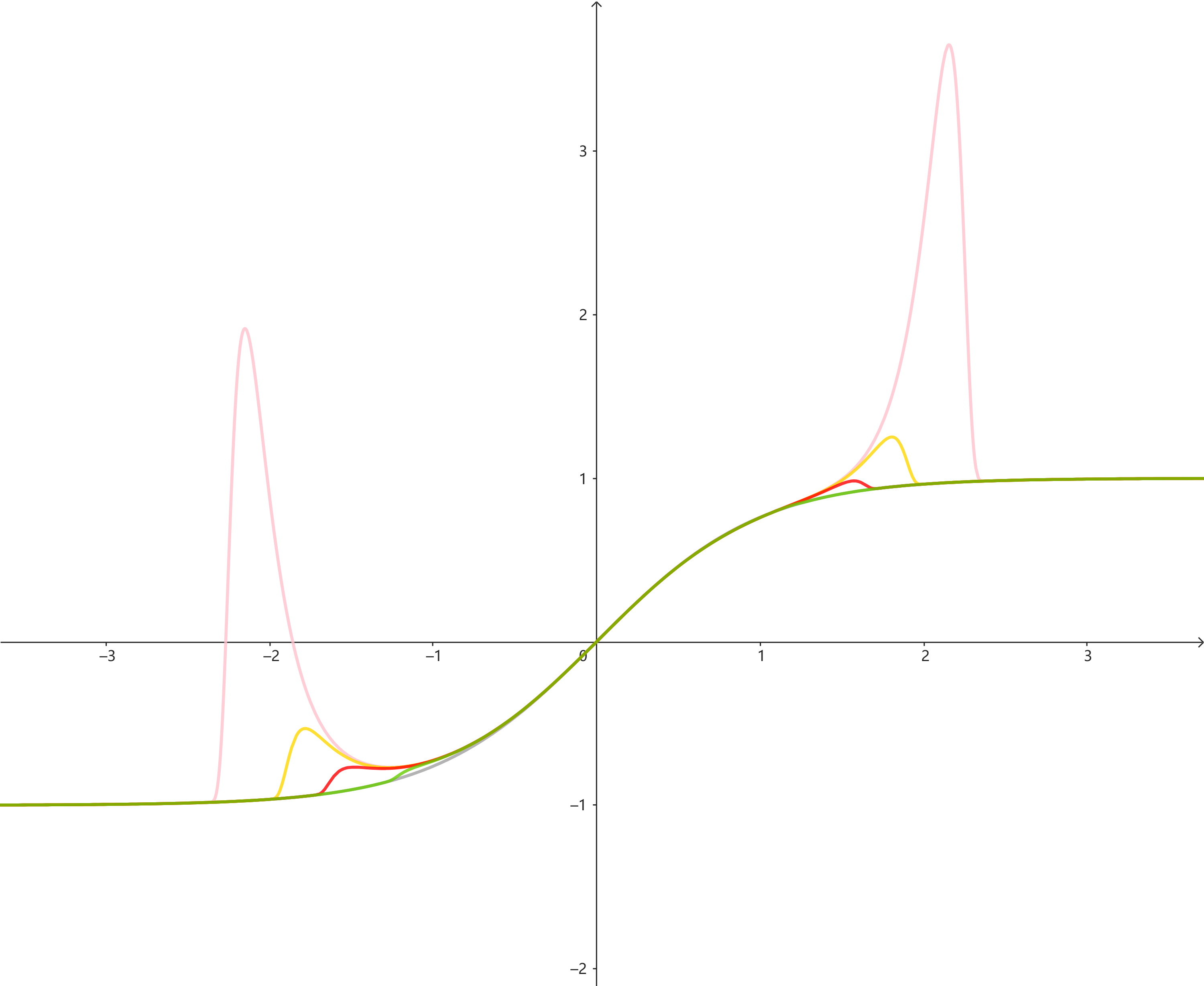}
    \caption{Construction of an analytic function $\tsigma$ that approximates Tanh activation globally on $\bR$, following Proposition \ref{Prop function conca and S-order approx} (b). In the construction, we use $\sigma$ constructed as in Figure \ref{Figure Good Analytic Function for Tanh}. $\zeta$ is a scaling of an analytic bump function $\zeta_5$ defined in Corollary \ref{Cor Analytic bump function} with ``base function" $f(x) = e^{|x|}$. Precisely, each function takes the form $\sigma(x) = \zeta_5(\alpha x) [\zeta_4(x) \tanh(x) + (1-\zeta_4(x))\tanh(x)] + (1 - \zeta_5(\alpha x)) e^{x^2}$, where $\alpha = 1.1, 1.3, 1.5, 2$ for the \cha{pink}, \textcolor{yellow}{yellow}, \textcolor{orange}{orange}, and \green{green} curves, respectively. As we can see, the approximation is almost indistinguishable from Tanh when $\alpha = 2$.}
    \label{Figure Good Approximate Function for Tanh}
\end{figure}

\section{Theory of General Neurons}\label{Section Theory of general neurons}

This section investigates the linear independence / linear dependence of neurons with general structures, i.e., arbitrary layer and widths. As mentioned in Section \ref{Section Intro}, this depends on both the parameters and choice of activation functions. We will first show the existence of activation function for which the set of parameters (which we denote by $\calZ_\sigma$, see Definition \ref{Defn Minimal zero set}) making an NN constant zero is ``minimal". In Corollary \ref{Cor Structure of Z I}, we analyze the geometric structure of $\calZ_\sigma$ for given $\sigma$ and network structure. In general, this set varies as the activation function $\sigma$ and network structure changes; however, they still admit some geometries which we characterize. in Corollary \ref{Cor Structure of Z I}. We also notice that the requirement $\sigma(0) = 0$ offers us extra good structure of $\calZ_\sigma$, so we further discuss networks with such activation functions in Section \ref{Subsection Zsigma for activations vanishing at 0}. \\

Recall that a fully-connected neural network $H$ with network structure $\{m_l\}_{l=1}^L$ has the form 
\begin{align*}
    H(\theta, z) = H^{(L)}(\theta, z) 
    &= \sum_{j=1}^{m_{L-1}} a_j H^{(L-1)}(\theta, z) + b \\ 
    &= \sum_{j=1}^{m_{L-1}} a_j \sigma\left( w_j^{(L-1)} H^{(L-2)}(\theta, z) + b_j^{(L-1)}\right) + b. 
\end{align*}
If we further assume that for some $m \le m_{L-2}$, $H_1^{(L-2)}(\theta, \cdot), ..., H_m^{(L-2)}(\theta, \cdot)$ are the non-constant components of $H^{(L-2)}$, then we can rewrite each component of $H^{(L-1)}$ as 
\begin{equation}\label{eq for rewritting neuron}
    H_j^{(L-1)}(\theta, z) = \sigma\left( \sum_{k=1}^m w_{jk}^{(L-1)} H_k^{(L-2)}(\theta, z) + \left[ \sum_{k>m} w_{jk}^{(L-1)} H_k^{(L-2)}(\theta, z) + b_j^{(L-1)} \right] \right). 
\end{equation}
For more notations and definitions about neural networks, see Section \ref{Section Notations and assumptions}. \\

\begin{thm}[Linear independence of fully-connected neurons]\label{Thm Lin ind of full-connected neurons I}
    There is a $\sigma \in \calA(\bR)$ with the following property: given $d, L \in \bN$ and $m_1, ..., m_L \in \bN$, consider the fully-connected NN, $H$, with network structure $\{m_l\}_{l=1}^L$. Then the generalized neurons $\{H_k^{(L-1)}(\theta,\cdot)\}_{k=1}^{m_{L-1}}$ are linearly independent if and only if 
    \begin{itemize}
        \item [(a)] The vectors 
        \[
            \left( \left(w_{jk}^{(L-1)} \right)_{k=1}^m, \sum_{k > m} w_{jk}^{(L-1)} H_k^{(L-2)}(\theta, z) + b_j^{(L-1)} \right), \quad 1 \le j \le m_{L-1}
        \]
        are distinct, and 
        \item [(b)] We have $\left( w_{jk}^{(L-1)} \right)_{k=1}^m = 0$ for at most one $j \in \{1, ..., m_{L-1}\}$. Moreover, for this $j$, $H_j^{(L-1)}(\theta, \cdot)$ is not constant zero. 
    \end{itemize}
    Furthermore, the same result holds for NN without bias. 
\end{thm}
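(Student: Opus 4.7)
The plan is to prove both directions of the biconditional, with the nontrivial content being the sufficiency (``if'') direction, and the existence of a single $\sigma$ that works \emph{uniformly} across all depths and widths.

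\textbf{Necessity.} I would first dispose of the easy ``only if'' direction. If (a) fails, say the $j_1$-th and $j_2$-th vectors in (a) coincide, then using the rewriting (\ref{eq for rewritting neuron}) one sees that $H_{j_1}^{(L-1)}(\theta,\cdot)$ and $H_{j_2}^{(L-1)}(\theta,\cdot)$ are identical functions, hence linearly dependent. If (b) fails with two weight vectors $(w_{jk}^{(L-1)})_{k=1}^m$ both zero, then both neurons reduce to constants and are linearly dependent (case (b) of the ``simplest possible cases'' listed in the introduction). The remaining subcase of (b) is the trivial constant-zero neuron.

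\textbf{Construction of $\sigma$.} I would choose $\sigma \in \calA(\bR)$ of the form $\sigma(z) = e^{p(z)} + e^{-q(z)}$ where $p, q$ are polynomials of sufficiently high (and suitably unbalanced) degree -- essentially the kind used in examples (a), (b) after Proposition \ref{Prop Functions of ordered growth II}. The critical requirement is that $\sigma$ satisfy the hypotheses of Proposition \ref{Prop Functions of ordered growth II} \emph{simultaneously} at every layer, i.e., after arbitrarily many compositions one still has hyper-exponential growth and the asymmetry condition (b) of that proposition. A single well-chosen $\sigma$ suffices because the hyper-exponential property is preserved under composition with polynomials and the degrees can be made to dominate the growth produced by any finite iteration.

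\textbf{Sufficiency by induction on $L$.} By Lemma \ref{Lem Dimension reduction}, I can reduce to input dimension $d=1$ and choose a curve $\gamma(t) = t$. I would then prove a stronger inductive statement: under (a) and (b) (applied at every preceding layer as well, which is how the theorem should be interpreted when unwinding deeper layers via the rewriting in (\ref{eq for rewritting neuron})), the non-constant components of $H^{(l)}(\theta,\cdot)$ have \emph{ordered growth} along $\gamma$ and each diverges to $\infty$ at hyper-exponential rate. The base case $l=1$ reduces to verifying that $\sigma(w_j z + b_j)$'s with distinct $(w_j, b_j)$ and at most one $w_j=0$ have ordered growth -- this is direct from the construction of $\sigma$ and Proposition \ref{Prop Functions of ordered growth II} applied to $f_1(z) = z$. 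For the inductive step, I apply Proposition \ref{Prop Functions of ordered growth II} with $f_1, \ldots, f_m$ the non-constant components from the previous layer (these are ordered-growth by IH) and with the parameter condition translated via the vector in (a): distinctness of those vectors is precisely the hypothesis of Proposition \ref{Prop Functions of ordered growth II}, once the constant parts $\sum_{k>m} w_{jk}^{(L-1)} H_k^{(L-2)}(\theta,\cdot) + b_j^{(L-1)}$ are absorbed into the bias $b_j$. Once ordered growth of $\{H_j^{(L-1)}\}$ is established, linear independence follows from Proposition \ref{Prop Ordered growth implies linear independence}, noting that none of the $H_j^{(L-1)}$ is eventually zero since each diverges.

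\textbf{Main obstacle.} The hard part is bookkeeping the inductive statement through the recursion: the set of neurons at layer $l-1$ naturally splits into constant ones (which contribute to an effective bias at the next layer) and non-constant ones (which play the role of the $f_k$'s in Proposition \ref{Prop Functions of ordered growth II}). One must verify that the distinctness hypothesis (a) for the $L$-th layer, together with the hypothesis applied at all shallower layers, is exactly what is needed so that the ``effective'' parameter vectors fed into Proposition \ref{Prop Functions of ordered growth II} remain distinct and nonzero. Roughly, the constant parts from previous layers do not collide in a pathological way because they themselves come from a linearly independent family of shallower neurons, ensured by the induction. Once this bookkeeping is clean, a single $\sigma$ chosen with fast enough exponential degrees handles all $L$ and all widths at once; the ``without bias'' version is an easy specialization, obtained by running the same argument with $b_j^{(L-1)} = 0$ and $b=0$ throughout.
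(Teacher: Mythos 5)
Your overall architecture (easy necessity; dimension reduction to $d=1$; induction on layers feeding the previous layer's non-constant components into Proposition \ref{Prop Functions of ordered growth II}, with the constant components absorbed into an effective bias; linear independence via Proposition \ref{Prop Ordered growth implies linear independence}) is the same as the paper's. But your construction of $\sigma$ has a genuine gap. You propose $\sigma(z) = e^{p(z)} + e^{-q(z)}$ with polynomials $p,q$ and assert that this satisfies the hypotheses of Proposition \ref{Prop Functions of ordered growth II} at every layer. It does not, starting at $L=3$. Hypothesis (b) of that proposition requires a uniform domination between $\sigma(\tw f_{\pi(1)})$ and $\sigma(w f_{\pi(m)})$ for all $w>0$, $\tw<0$, where $f_{\pi(1)} \gg f_{\pi(m)}$ are the extreme second-layer neurons. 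With $p(z)=z^7$, $q(z)=z^3$ and second-layer neurons $f \sim e^{(w^{(1)}x+b^{(1)})^7}$, one is comparing $(-\tw)^3 f_{\pi(1)}^3$ against $w^7 f_{\pi(m)}^7$, i.e.\ $3\log f_{\pi(1)}$ against $7\log f_{\pi(m)}$ up to additive constants; since $\log f_{\pi(1)}$ and $\log f_{\pi(m)}$ are polynomials of the \emph{same} degree differing only in leading coefficients, neither alternative of (b) holds for generic first-layer weights (e.g.\ $w_{\pi(1)}^{(1)}=2$, $w_{\pi(m)}^{(1)}=1$ gives $3\cdot 2^7 > 7\cdot 1$ yet $f_{\pi(1)}^7 \gg f_{\pi(m)}^3$ kills the other alternative). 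Raising the degrees of $p,q$ only moves the failure around; it does not remove it. This is why the paper takes $\sigma(z)=\Theta(\exp(e^{z^5}))$ as $z\to\infty$ and $\Theta(\exp(e^{-z^3}))$ as $z\to-\infty$: the extra exponential makes the relevant comparison one between $3\log f_{\pi(1)}$ and $5\log f_{\pi(m)}$ where now $\log f_{\pi(m)} = o(\log f_{\pi(1)})$, so the domination is unambiguous for every admissible weight configuration.

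This points to the second, related gap: your inductive invariant is too weak. Carrying only ``the non-constant components of $H^{(l)}$ diverge and have ordered growth'' does not let you verify hypothesis (b) at the next layer; the paper's induction carries the stronger statement that the \emph{logarithms} $\log H^{(l)}_k(\theta,\cdot)$ diverge to $\infty$ and themselves have ordered growth, and it is exactly this log-ordered growth (which the doubly-exponential $\sigma$ propagates from layer to layer, but a singly-exponential $\sigma$ does not even produce at layer $2$) that makes the cross-sign comparison in (b) go through. Your ``main obstacle'' paragraph correctly flags the bookkeeping of constant versus non-constant components, but the missing idea is the choice of invariant, not the bookkeeping. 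The rest of your outline (necessity, the role of condition (a) as distinctness of the effective parameter vectors, and the bias-free specialization) is consistent with the paper.
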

\begin{proof}
    For simplicity, for each $1 \le j \le m_{L-1}$ we introduce the notations 
    \[
        \tw_j = (\tw_{jk})_{k=1}^m := \left(w_{jk}^{(L-1)} \right)_{k=1}^m, \quad \tb_j := \sum_{k > m} w_{jk}^{(L-1)} H_k^{(L-2)}(\theta, z) + b_j^{(L-1)}. 
    \]
    With these notations, we can write each generalized neuron $H_j^{(L-2)}$ as 
    \[
        H_j^{(L-1)}(\theta, z) = \sigma\left( \sum_{k=1}^m \tw_{jk} H_k^{(L-2)}(\theta, z) + \tb_j \right). 
    \]
    This clearly relates us to the theory about functions of ordered growth. In fact, we will use Proposition \ref{Prop Functions of ordered growth II} to complete the proof, arguing in an inductive way. For this, consider any activation $\sigma \in \calA(\bR)$ with $\sigma(x) = \Theta\left( \exp(e^{x^5}) \right)$ as $x \to \infty$ and $\sigma(x) = \Theta\left( \exp(e^{-x^3}) \right)$ as $x \to -\infty$. Notice that $\sigma$ has the following properties: 
    \begin{itemize}
        \item [i)] $\limftyz \sigma(z) = \lim_{z \to -\infty} \sigma(z) = \infty$, both at hyper-exponential rate. Moreover, we also have $\limftyz \log \sigma(z) = \lim_{z\to-\infty} \log \sigma(z) = \infty$. 

        \item [ii)] Given $f_1, f_2: \bR \to [0, \infty)$ diverging to $\infty$ of ordered growth such that $f_2(x) = o(f_1(x))$ as $x \to \infty$, we have, for any $w > 0$ and $\tw < 0$, $\sigma\left(w e^{f_2(x)}\right) = o\left( \sigma\left( \tw e^{f_1(x)}\right) \right)$ as $x \to \infty$. Indeed, by the construction of $\sigma$, there are $C_1^+, C_2^+, C_1^- C_2^- > 0$ such that 
        \begin{align*}
            C_1^+ \exp(e^{z^5}) \le &\,\sigma(z) \le C_2^+ \exp(e^{z^5})\\
            C_1^- \exp(e^{-z^3}) \le &\,\sigma(z)\le C_2^- \exp(e^{-z^3})
        \end{align*}
        as $z \to \infty$ and $z \to -\infty$, respectively. Therefore, as $x \to \infty$
        \begin{align*}
            \sigma\left( w e^{f_2(x)}\right) 
            &\le C_2^+ \exp\left( \exp\left( w^5e^{5f_2(x)} \right)\right) \\
            &\ll  C_1^- \exp\left( \exp\left( -\tw^3 e^{3f_1(x)} \right)\right) \le \sigma\left( \tw e^{f_1(x)} \right), 
        \end{align*}
        where we used $\limftyx (3f_1(x) - 5f_2(x)) = \infty$ and (a) to do the approximations. 
    \end{itemize}

    First we show that given $(\tw_j, \tb_j)_{j=1}^{m_{L-1}}$ satisfying (a) and (b) of this theorem, the generalized neurons $H_1^{(L-1)}, ..., H_{m_{L-1}}^{(L-1)}$ are linearly independent. Without loss of generality, assume that $d = 1$. The case for $L=1$ is trivial. For $L = 2$, given distinct $(w_k, b_k)_{k=1}^{m_1}$ such that at most one of the $w_k$ is $0$, say $w_{m_1} = 0$. Then $\{\sigma(w_k x + b_k)\}_{k=1}^{m_1-1}$ all diverge to $\infty$ and have ordered growth (along the real line), while $\sigma(w_{m_1}x + b_{m_1}) \equiv \sigma(b_{m_1})$. To see this, fix any distinct $k_1, k_2 \in \{1, ..., m_1-1\}$ and consider the following cases: 
    \begin{itemize}
        \item [i)] $w_{k_1}, w_{k_2}$ are both positive or negative. If $w_{k_1} > w_{k_2} > 0$ then
        \begin{align*}
            \sigma(w_{k_2}x + b_{k_2}) 
            &\le C_2^+ \exp\left( e^{(w_{k_2}x + b_{k_2})^5} \right) \\
            &\ll  C_1^+ \exp\left( e^{(w_{k_1}x + b_{k_1})^5} \right) \le \sigma(w_{k_1}x + b_{k_1}), 
        \end{align*}
        or $\sigma(w_{k_2}x + b_{k_2}) = o\left( \sigma(w_{k_1}x + b_{k_1}) \right)$ as $x \to \infty$. If $w_{k_1} < w_{k_2} < 0$, then 
        \begin{align*}
            \sigma(w_{k_2}x + b_{k_2}) 
            &\le C_2^- \exp\left( e^{-(w_{k_2}x + b_{k_2})^3} \right) \\
            &\ll  C_1^-\exp\left( e^{-(w_{k_1}x + b_{k_1})^3} \right) \le \sigma(w_{k_1}x + b_{k_1}), 
        \end{align*}
        or $\sigma(w_{k_2}x + b_{k_2}) = o\left( \sigma(w_{k_1}x + b_{k_1}) \right)$ as $x \to \infty$. The same argument works for $w_{k_2} > w_{k_1} > 0$ and $w_{k_2} < w_{k_1} < 0$. 

        \item [ii)] $w_{k_1}, w_{k_2}$ have different signs. If $w_{k_1} > 0$ and $w_{k_2} < 0$, then 
        \begin{align*}
            \sigma(w_{k_2}x + b_{k_2}) 
            &\le C_2^- \exp\left( e^{-(w_{k_2}x + b_{k_2})^3} \right) \\
            &\ll  C_1^+ \exp\left( e^{(w_{k_1}x + b_{k_1})^5} \right) \le \sigma(w_{k_1}x + b_{k_1}), 
        \end{align*}
        or $\sigma(w_{k_2}x + b_{k_2}) = o\left( \sigma(w_{k_1}x + b_{k_1}) \right)$ as $x \to \infty$. The same argument works for $w_{k_1} < 0$ and $w_{k_2} > 0$. 

        \item [iii)] $w_{k_1} = w_{k_2}$. Then we must have $b_{k_1} \ne b_{k_2}$. Again, using property (a) of $\sigma$ above, we can easily deduce that these two neurons have ordered growth. 
    \end{itemize}
    This proves the case for $L=1$-layer neurons. Moreover, using the same argument we can show that the functions $\{\log\left( \sigma(w_k x+ b_k)\right)\}_{k=1}^{m_1-1}$ all diverge to $\infty$ and have ordered growth. \\

    Now consider $L \ge 3$. Suppose that $\log H_1^{(L-2)}(\theta, \cdot), ..., \log H_m^{(L-2)}(\theta, \cdot)$ all diverge to $\infty$ and have ordered growth such that $\log H_k^{(L-2)}(\theta, \cdot) = o(\log H_{k-1}^{(L-2)}(\theta, \cdot))$ for all $2 \le k \le m$, and the functions $\{ \log H_k^{(L-2)}(\theta, \cdot)\}_{k=m+1}^{m_{L-2}}$ are constant. Assume that $(\tw_j, \tb_j)$'s satisfy requirement (a) and (b) above, in particular, $(\tw_j, \tb_j)$'s are distinct. \\
    
    If $\tw_j$'s are all non-zero, then by applying property (b) of $\sigma$ to $f_1 = \log H_1^{(L-2)}(\theta, \cdot)$ and $f_2 = \log H_m^{(L-2)}(\theta, \cdot)$, we can see that both requirements (a) and (b) in Proposition \ref{Prop Functions of ordered growth II} are satisfied, thus the $H_j^{(L-1)}(\theta, \cdot)$'s, $1 \le j \le m_{L-1}$ all diverge to $\infty$ and have ordered growth, and so are the $\log H_j^{(L-1)}(\theta, \cdot)$'s. If there is one $\tw_j = 0$, we may, by rearranging the indices if necessary, assume that $\tw_{m_{L-1}} = 0$. Then $H_{m_{L-1}}^{(L-1)}(\theta, \cdot)$ and $\log H_{m_{L-1}}^{(L-1)}(\theta, \cdot)$ are constant functions, while by applying Proposition \ref{Prop Functions of ordered growth II} we see that the $H_j^{(L-1)}(\theta, \cdot)$'s, $1 \le j \le m_{L-1}-1$ all diverge to $\infty$ and have ordered growth, and so are the $\log H_j^{(L-1)}(\theta, \cdot)$'s. Then it follows from Proposition \ref{Prop Ordered growth implies linear independence} (and the fact that at most one $H_j^{(L-1)}(\theta, \cdot)$ is constant) that $H_j^{(L-1)}(\theta, \cdot), ..., H_{m_{L-1}-1}^{(L-1)}(\theta, \cdot)$ are linearly independent. \\

    Conversely, if $(\tw_j, \tb_j)_{j=1}^{m_{L-1}}$ are not distinct, we clearly have two identical generalized neurons, and if there are distinct $j_1, j_2 \in \{1, ..., m_{L-1}\}$ such that $\tw_{j_1} = \tw_{j_2} = 0$, then $H_{j_1}^{(L-1)}(\theta, \cdot)$ and $H_{j_2}^{(L-1)}(\theta, \cdot)$ are both constant functions. Neither cases give linearly independent neurons. This completes the proof. 
\end{proof}
\begin{remark}
    Several remarks on this result. First, note that if $\left(w_j^{(L-1)}, b_j^{(L-1)} \right)_{j=1}^{m_{L-1}}$ does not satisfy (a) or (b), then the $H_j^{(L-1)}$'s are \textit{linearly dependent for all $\sigma$}. Second, the choice of activation function can be extended to any $\sigma$ with 
    \[
        \sigma(z) = \Theta\left(\exp(e^{z^p})\right), \quad \sigma(z) = \Theta\left(\exp(e^{-z^q})\right)
    \]
    as $z \to \infty$ and $z \to -\infty$, respectively, where $p,q > 1$ are distinct odd numbers. Finally, this theorem actually characterizes the parameters which yield linearly independent neurons in an inductive way. As we can see, requirement (a) and (b) of $w_j^{(L-1)}$'s and $b_j^{(L-1)}$'s depend on $m$, which is determined by the parameters of $H^{(L-2}$. Thus, to fully characterize the set of parameters which yield linearly independent neurons, we must then work with $H^{(L-2)}$, then $H^{(L-3)}$, ..., by applying this theorem for $L-1$ times. It is then clear that for both NN with and without bias, the structure of this set of parameters depends on activation function $\sigma$ and network structure $\{m_l\}_{l=1}^L$. One important and special case is $\sigma(0) = 0$, as the set stabilizes over generic choices of such activations. To further our investigation of these topics, we introduce some notations below.\\
\end{remark}

\begin{defn}[Minimal zero set]\label{Defn Minimal zero set}
    Given $d, L \in \bN$ and $m_1, ..., m_L \in \bN$. Consider $H$ (with or without bias) with network structure $\{m_l\}_{l=1}^L$. Define a subset $\calZ(\{m_l\}_{l=1}^L, \sigma)$ of $\bR^N$ such that a point $\theta = ((a_j, b)_{j=1}^{m_{L-1}}, \theta^{(L-1)}, ..., \theta^{(1)}) \in \calZ(\{m_l\}_{l=1}^L, \sigma)$ if and only if
    \begin{itemize}
        \item [(a)] For any $1 \le l \le L-2$, $\theta^{(l)}$ is arbitrary. 

        \item [(b)] For $l = L-1$, either $H_k^{(L-2)}(\theta, \cdot) = \text{const.}$ for all $1 \le k \le m_{L-2}$, or $\theta^{(L-1)} = (w_j^{(L-1)}, b_j^{(L-1)})_{j=1}^{m_{L-1}}$ does not satisfy requirement (a) or (b) of Theorem \ref{Thm Lin ind of full-connected neurons I}. 

        \item [(c)] By rearranging the indices of $H_j^{(L-1)}(\theta, \cdot)$'s if necessary, we can find $0=n_0 < n_1 < ... < n_r \le m_{L-1}$ and group them as 
        \begin{align*}
            &H_{n_{j-1}+1}^{(L-1)}(\theta, \cdot) = ... = H_{n_j}^{(L-1)}(\theta, \cdot) \ne \text{const.}, &\forall\, 1 \le j \le r, \\ 
            &H_{n_j}^{(L-1)}(\theta, \cdot) \ne H_{n_{j+1}}^{(L-1)}(\theta, \cdot),                         &\forall\, 1\le j < r, \\
            &H_{n_r+1}^{(L-1)}(\theta, \cdot), ..., H_{m_{L-1}}^{(L-1)}(\theta, \cdot) = \text{const.}. 
        \end{align*}
        Then, after rearrangement of indices, $\sum_{t=n_{j-1}+1}^{n_j} a_t = 0$ for all $1 \le j \le r$, and if $H$ is an NN with bias we require 
        \[
            \sum_{t = n_r+1}^{m_{L-1}} a_t H_t^{(L-1)}(\theta, \cdot) + b = 0; 
        \]
        if $H$ is an NN without bias we require 
        \[
            \sum_{t = n_r+1}^{m_{L-1}} a_t H_t^{(L-1)}(\theta, \cdot) = 0. 
        \]
    \end{itemize}
    When the network structure or/and activation function is clear from context, then we simply write $\calZ_\sigma, \calZ(\{m_l\}_{l=1}^L)$ or just $\calZ$, respectively. 
\end{defn}

\begin{defn}
    Given $d, L \in \bN$ and $m_1, ..., m_L \in \bN$. Consider $H$ (with or without bias) with network structure $\{m_l\}_{l=1}^L$. Define $\Tilde{\calZ}(\{m_l\}_{l=1}^L)$ as the set of parameters $\theta$ such that $H(\theta, \cdot) \equiv 0$, i.e., 
    \[
        \Tilde{\calZ}(\{m_l\}_{l=1}^L) = \bigcap_{z\in\bF} H(\cdot, z)^{-1}(0). 
    \]
    For simplicity, we will write $\Tilde{\calZ}_\sigma$ or just $\Tilde{\calZ}$ when the network structure or/and activation function is clear from context. 
\end{defn}

\begin{remark}
    $\calZ$ is ``minimal" in the sense that for any $\sigma$ and any $\theta \in \calZ$, $H_\sigma(\theta,\cdot)$ must be constant zero. Thus, $\calZ \siq \Tilde{\calZ}$. Theorem \ref{Thm Lin ind of full-connected neurons I} says that there is some $\sigma \in \calA(\bR)$ such that $\calZ = \Tilde{\calZ}$. 
\end{remark}

\begin{cor}[Structure of $\calZ$]\label{Cor Structure of Z I}
    Given $d, L \in \bN$ and $m_1, ..., m_L \in \bN$. Consider any NN with network structure $\{m_l\}_{l=1}^L$ and activation $\sigma \in \calA(\bR)$. Given $\bar{\theta}^{(L-2)}, ..., \bar{\theta}^{(1)}$, the intersection of the set $E$ with $\calZ$, where 
    \[
        E := \left\{\theta \in \bR^N: \theta^{(l)} = \bar{\theta}^{(l)}\, \forall\, 1\le l \le L-2\right\}, 
    \]
    is a finite union of sets of the form $\calM \times \calV$. Here $\calV$ is a finite union of linear spaces and $\calM$ a finite union of analytic submanifolds of $\bR^{m_{L-1}+1}$. In particular, $\calZ$ is closed. The same holds for NN without bias, except that we may have countably many union of such sets. 
\end{cor}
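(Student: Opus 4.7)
The plan is to decompose $E\cap\calZ$ according to the finitely many \emph{combinatorial types} that $\theta^{(L-1)}$ can realize once the lower parameters are frozen. Because $\bar\theta^{(1)},\ldots,\bar\theta^{(L-2)}$ are fixed, the functions $H_k^{(L-2)}(\bar\theta,\cdot)$ are concrete; let $m\le m_{L-2}$ count the non-constant ones among them. Following the rewriting (\ref{eq for rewritting neuron}), introduce the effective parameters $\tw_j:=\bigl(w_{jk}^{(L-1)}\bigr)_{k=1}^{m}$ and $\tb_j:=\sum_{k>m}w_{jk}^{(L-1)}H_k^{(L-2)}+b_j^{(L-1)}$ (truly a scalar, since only constant lower neurons enter it). The assignment $\theta^{(L-1)}\mapsto\bigl((\tw_j,\tb_j)\bigr)_{j=1}^{m_{L-1}}$ is an affine linear map, so affine equalities and Zariski-open inequalities on the target pull back to affine/open conditions on $\theta^{(L-1)}$.

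I would enumerate combinatorial types $T$ as partitions $\{C_0,C_1,\ldots,C_r\}$ of $\{1,\ldots,m_{L-1}\}$ with $C_0$ a distinguished (possibly empty) ``constant'' class; only finitely many such $T$ exist. For each $T$, let $\calV_T$ be the affine subspace of $\theta^{(L-1)}$-space cut out by the linear equalities ``$\tw_j=0$ for $j\in C_0$'' together with ``$(\tw_j,\tb_j)=(\tw_{j'},\tb_{j'})$ whenever $j,j'$ share a non-constant class'', and let $\calM_T\subseteq\bR^{m_{L-1}+1}$ be the linear subspace in $(a,b)$-space defined by $\sum_{j\in C_s}a_j=0$ for $s=1,\ldots,r$. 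Translating Definition \ref{Defn Minimal zero set}(c), a point lies in the type-$T$ stratum of $E\cap\calZ$ iff $\theta^{(L-1)}\in\calV_T$ satisfies the strict conditions $\tw_j\ne 0$ for $j\notin C_0$ and distinct $(\tw_j,\tb_j)$ across classes, $(a,b)\in\calM_T$, and the constant-class relation $\sum_{j\in C_0}a_j\sigma(\tb_j)+b=0$ (resp.\ without the $+b$ term in the biasless case) holds.

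In the biased case, the last relation always solves uniquely for $b$ as an analytic function of $(\theta^{(L-1)},(a_j)_{j\in C_0})$, so the type-$T$ stratum is the graph of this map over (an open subset of) $\calV_T\times\calM_T$, analytically diffeomorphic to that open submanifold of the product. The inequalities removed from $\calV_T$ only excise pieces already accounted for by the strata of coarser types, hence the total $E\cap\calZ$ is the union, over the finite set of types, of pieces of the form $\calV_T\times\calM_T$ as required. In the biasless case, the constraint $\sum_{j\in C_0}a_j\sigma(\tb_j)=0$ is a linear equation in $(a_j)_{j\in C_0}$ whose coefficient vector $(\sigma(\tb_j))_{j\in C_0}$ varies analytically over $\calV_T$, so I further stratify $\calV_T$ by the kernel dimension of this coefficient row. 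Lemma \ref{Lem analytic function zero set stratification}, applied to the common zero sets of the relevant analytic minors, yields at most countably many analytic strata on each of which $\calM_T$ is a fixed linear subspace, producing the countable-union version of the statement.

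Closure of $\calZ$ follows by a limit argument: if $\theta_n\in\calZ$ has type $T_n$ and $\theta_n\to\theta$, pass to a subsequence so that $T_n\equiv T$ stabilizes; in the limit some strict inequalities of $T$ may fail, yielding a coarser type $T'$ in which classes merge and some non-constant classes (with $\tw_j\to 0$) are absorbed into $C_0$. The $T'$-constraints are implied by the $T$-constraints by additivity: merged class sums are sums of zero sums, and each absorbed class $C_s$ contributes $\sigma(\tb)\sum_{j\in C_s}a_j=0$ to the enlarged constant-class equation, so $\theta\in\calZ$. The main technical obstacle I anticipate is the biasless stratification: one must confirm that each kernel-dimension locus of the coefficient row $(\sigma(\tb_j))_{j\in C_0}$ is genuinely the common zero set of finitely many analytic functions (the minors of the associated matrix) so that Lemma \ref{Lem analytic function zero set stratification} actually applies and yields a truly analytic, at-most-countable stratification on which $\calM_T$ is constant.
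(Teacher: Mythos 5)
Your proof is correct and follows essentially the same route as the paper's: decompose $E\cap\calZ$ by the finitely many combinatorial types of $\theta^{(L-1)}$ (partitions with a distinguished constant class), observe that each type confines $\theta^{(L-1)}$ to a finite union of linear subspaces and $(a_j)_{j=1}^{m_{L-1}}$ to linear subspaces, solve for $b$ as an analytic function in the biased case, and invoke Lemma \ref{Lem analytic function zero set stratification} to obtain the countable analytic stratification in the biasless case. Your explicit limit argument for the closedness of $\calZ$, and your care with the strict inequalities being absorbed into coarser types, fill in details that the paper's proof leaves implicit.
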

\begin{proof}
    Given $\bar{\theta}^{(L-2)}, ..., \bar{\theta}^{(1)}$, there is some $0 \le m \le L-2$ such that $H_k^{(L-2)}(\theta, \cdot)$ is non-constant for all $1 \le k \le m$ and $H_k^{(L-2)}(\theta, \cdot)$ is constant for all $m < k \le m_{L-2}$. Fix any $\theta = (\theta^{(l)})_{l=1}^L \in E \cap \calZ$. For $\theta^{(L-1)}$ we must be able to find distinct $j_1, j_2 \in \{1, ..., m_{L-1}\}$ such that 
    \begin{align*}
        &\,\,\,\,\,\,\,\left( \left(w_{j_1 k}^{(L-1)} \right)_{k=1}^m, \sum_{k > m} w_{j_1 k}^{(L-1)} H_k^{(L-2)}(\theta, z) + b_{j_1}^{(L-1)} \right) \\
        &= \left( \left(w_{j_2 k}^{(L-1)} \right)_{k=1}^m, \sum_{k > m} w_{j_2 k}^{(L-1)} H_k^{(L-2)}(\theta, z) + b_{j_2}^{(L-1)} \right)
    \end{align*}
    or  
    \[
        \left( w_{j_1 k}^{(L-1)} \right)_{k=1}^m = \left( w_{j_2 k}^{(L-1)} \right)_{k=1}^m = 0,
    \]
    Therefore, $\theta^{(L-1)} = (w_j^{(L-1)}, b_j^{(L-1)})_{j=1}^{m_{L-1}}$ must be taken from the union of $2 \binom{m_{L-1}}{2}$ linear subspaces: 
    \begin{align*}
        E' 
        = &\,\bigcup_{j_1 \ne j_2} \left[ \left\{ \theta^{(L-1)}:  \left(w_{j_1 k}^{(L-1)} - w_{j_1 k}^{(L-1)} \right)_{k=1}^m = 0, \sum_{k>m} c_k ( w_{j_1 k}^{(L-1)} - w_{j_2 k}^{(L-1)}) + b_{j_1}^{(L-1)} - b_{j_2}^{(L-1)} = 0 \right\} \right. \\
          &\, \left. \cup \left\{ \theta^{(L-1)}: (w_{j_1 k})_{k=1}^m = (w_{j_2 k})_{k=1}^m = 0 \right\} \right],  
    \end{align*}
    where $c_k = H_k^{(L-2)}(\theta, 0)$ (or actually $H_k^{(L-2)}(\theta, x)$ for any $x$, because the neuron is constant) for every $m < k \le m_{L-2}$. So in particular, $\theta^{(L-1)} \in \cup_{m=0}^{m_{L-2}} E'$. Conversely, any $\theta$ with $\theta^{(L-1)} \in \cup_{m=0}^{m_{L-2}} E'$ must be in $\calZ$. \\
    
    Now consider $l = L$. By Definition \ref{Defn Minimal zero set}, for any $\theta$ with $\theta^{(L-1)} \in E'$, there must be a partition $0=n_0 < n_1 < ... < n_r \le m_{L-1}$ and a permutation $\pi$ on $\{1, ..., m_{L-1}\}$ such that 
    \begin{equation}\label{eq 1 of Cor Structure of Z I}
        \sum_{t=n_{j-1}+1}^{n_j} a_{\pi(t)} = 0, \quad \forall\,1 \le j \le r
    \end{equation}
    and 
    \begin{equation}\label{eq 2 of Cor Structure of Z I}
        b = b\left( (a_t)_{t=1}^{m_{L-1}}, \theta^{(l)}\right) = - \sum_{t=n_r+1}^{m_{L-1}} c_t\left(\theta^{(l)}\right) a_{\pi(t)} 
    \end{equation}
    for some functions $\{c_t\}_{t=n_r+1}^{m_{L-1}}$ determined by constant neurons among $\{H_j^{(L-1)}(\theta, \cdot)\}_{j=1}^{m_{L-1}}$. Since the $c_t$'s are analytic on each linear subspaces in $E$, it follows that (for fixed $r$) equation (\ref{eq 2 of Cor Structure of Z I}) consists of a finite union $\calM_r$ of submanifolds in $\bR^{m_{L-1}+1} \times \bR^{m_{L-1}(m_{L-2}+1)}$, by taking union over all permutations $\pi$ and all appropriate (subsets of) linear spaces in $E'$. Meanwhile, equations (\ref{eq 1 of Cor Structure of Z I}) implies that $(a_j)_{j=1}^{n_r}$ consists of a finite union $\calV_r$ of linear subspaces of $\bR^{n_r}$, by taking union over all possible $(n_0, ..., n_r)$'s. As we can see, $\calM_r$ and $\calV_r$ both depend only on $r$, whence 
    \[
        E \cap \calZ = \cup_r \calM_r \times \calV_r,  
    \]
    completing the proof for NN with bias. \\ 

    The proof for NN with structure $\{m_l\}_{l=1}^{L}$ without bias is quite similar. Indeed, using the notations above, we now have
    \begin{itemize}
        \item [(a)] The set $E'$ is still a finite union of linear spaces, as 
        \begin{align*}
            E' 
            = &\,\bigcup_{j_1 \ne j_2} \left[ \left\{ \theta^{(L-1)}:  \left(w_{j_1 k}^{(L-1)} - w_{j_1 k}^{(L-1)} \right)_{k=1}^m = 0, \sum_{k>m} c_k ( w_{j_1 k}^{(L-1)} - w_{j_2 k}^{(L-1)}) = 0 \right\} \right. \\
            &\, \left. \cup \left\{ \theta^{(L-1)}: (w_{j_1 k})_{k=1}^m = (w_{j_2 k})_{k=1}^m = 0 \right\} \right].  
        \end{align*}

        \item [(b)] Equations (\ref{eq 1 of Cor Structure of Z I}) does not change. 

        \item [(c)] Equation (\ref{eq 2 of Cor Structure of Z I}) becomes $\sum_{t=n_r+1}^{m_{L-1}} c_t\left(\theta^{(l)}\right) a_t = 0$. 
    \end{itemize}
    Suggested by the proof for NN with bias, our proof for NN without bias would be completed if we can show that $((a_j, b)_{j=n_r+1}^{m_{L-1}}, \theta^{(l)})$ satisfying (c) consists of a countable union of analytic submanifolds in $\bR^{m_{L-1}} \times \bR^{m_{L-1} m_{L-2}}$. By (c), the set of these parameters lies in the zero set of the analytic function $\sum_{t=n_r+1}^{m_{L-1}} c_t\left(\theta^{(l)}\right) a_t$, which, by Lemma \ref{Lem analytic function zero set stratification}, is a countable union of analytic submanifolds in its domain, so we are done. \\ 
\end{proof}

To illustrate the structure of $\calZ$, consider a three-neuron two-layer NN $H(\theta, x) = \sum_{k=1}^3 a_k \sigma(w_k x + b_k) + b$. With the notations in Corollary \ref{Cor Structure of Z I}, we have $E' = E_1' \cup E_2'$, where 
\begin{align*}
    E_1' &= \{(w_1, b_1) = (w_2, b_2)\} \cup \{(w_1, b_1) = (w_3, b_3)\} \cup \{(w_2, b_2) = (w_3, b_3)\} \\ 
    E_2' &= \{w_1 = w_2 = 0\} \cup \{w_1 = w_3 = 0\} \cup \{w_2 = w_3 = 0\}. 
\end{align*}
Therefore, $E_1', E_2'$ and thus $E'$ are finite union of linear spaces. If $\theta \in \calZ$ and $\theta^{(1)} = (w_k, b_k)_{k=1}^3 \in \{(w_1, b_1) = (w_2, b_2)\} \cut E_2'$ then we must have 
\[
    a_1 + a_2 = 0, \quad a_3 = 0, \quad b = 0. 
\]
If $\theta^{(1)} \in \{w_1 = w_2 = 0\}$ then we must have 
\[
    b = - a_1\sigma(b_1) - a_2\sigma(b_2), \quad a_3 = 0
\]
when $w_3 \ne 0$, and 
\[
    b = - a_1\sigma(b_1) - a_2\sigma(b_2) - a_3\sigma(b_3)
\]
when $w_3 = 0$. The other cases can be analyzed in a similar way. It follows that the $\calZ$ for this network is a finite union of sets of the form $\calM \times \calV$. \\

The following result is just a restatement of Embedding Principle \cite{YZhangEBDD, YZhangEBDD2} using our notations. It gives a hierarchical description (in terms of NN width) of the parameter space, including $\calZ$ and $\Tilde{\calZ}$. 

\begin{thm}[Embedding Principle]\label{Thm Embedding Principle}
    Given $d, L \in \bN$. Let $m_1, m_1', ..., m_L, m_L' \in \bN$ be such that $m_l' \le m_l$ for all $1 \le l \le L$. Denote $N = \sum_{l=1}^L m_lm_{l-1}$ and $N' = \sum_{l=1}^L m_l' m_{l-1}'$. Denote $H, H'$ as the NN with network structure $\{m_l\}_{l=1}^L$ and $\{m_l'\}_{l=1}^L$, respectively. Then there are $M := \Pi_{l=1}^L \left[\binom{m_l-1}{m_l' - 1}!\right]$ full-rank linear maps $\varphi_1, ..., \varphi_M$ from $\bR^{N'}$ to $\bR^N$ such that for any $\sigma: \bR \to \bR$ and any $1 \le i \le M$, $H_\sigma' = H_\sigma\circ(\varphi_i, id)$. In particular, $\varphi_i(\calZ(\{m_l'\}_{l=1}^L)) \siq \varphi_i(\calZ(\{m_l\}_{l=1}^L))$ and $\varphi_i(\Tilde{\calZ}(\{m_l'\}_{l=1}^L)) \siq \varphi_i(\Tilde{\calZ}(\{m_l\})_{l=1}^L))$.  
\end{thm}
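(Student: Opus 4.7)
The plan is to construct the $\varphi_i$ explicitly by \emph{widening} the narrow network one layer at a time. For each hidden-layer index $l \in \{1, \ldots, L-1\}$ I choose a surjection $\pi_l : \{1, \ldots, m_l\} \to \{1, \ldots, m_l'\}$---equivalently, an ordered partition $\{1, \ldots, m_l\} = S_1^{(l)} \sqcup \cdots \sqcup S_{m_l'}^{(l)}$ into non-empty blocks, with $n_k^{(l)} := |S_k^{(l)}|$. Since $m_0 = m_0' = d$ and $m_L = m_L' = 1$, layers $0$ and $L$ admit only the trivial labelling.

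Given $\{\pi_l\}_{l=1}^{L-1}$, I define $\varphi : \bR^{N'} \to \bR^N$ coordinate-wise by
\[
b_j^{(l)} := b_{\pi_l(j)}^{\prime (l)}, \qquad w_{j,i}^{(l)} := \frac{1}{n_{\pi_{l-1}(i)}^{(l-1)}} \cdot w_{\pi_l(j), \pi_{l-1}(i)}^{\prime (l)}
\]
for $1 \le l \le L-1$; and for the output layer, $a_j := a_{\pi_{L-1}(j)}' / n_{\pi_{L-1}(j)}^{(L-1)}$ and $b := b'$. Linearity is immediate from the formulas, and $\varphi$ is full-rank because each narrow scalar parameter contributes non-trivially and exclusively to a dedicated non-empty block of wide coordinates, so the induced matrix has linearly independent columns.

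The key verification is $H_\sigma' = H_\sigma \circ (\varphi, \mathrm{id})$ for every $\sigma$. I prove by induction on $l$ the stronger invariant $H^{(l)}(\varphi(\theta'), z)_j = H^{\prime (l)}(\theta', z)_{\pi_l(j)}$ for all $j, \theta', z$. The inductive step is a direct computation: for any $j$ with $\pi_l(j) = k$,
\begin{align*}
\sum_{i=1}^{m_{l-1}} w_{j,i}^{(l)} H^{(l-1)}(\varphi(\theta'), z)_i + b_j^{(l)}
&= \sum_{p=1}^{m_{l-1}'} \sum_{i \in S_p^{(l-1)}} \frac{w_{k,p}^{\prime (l)}}{n_p^{(l-1)}} H^{\prime (l-1)}(\theta', z)_p + b_{k}^{\prime (l)} \\
&= \sum_{p=1}^{m_{l-1}'} w_{k,p}^{\prime (l)} H^{\prime (l-1)}(\theta', z)_p + b_{k}^{\prime (l)},
\end{align*}
and applying $\sigma$ closes the step; the output layer is treated identically without the outer $\sigma$. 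The containment $\varphi(\Tilde{\calZ}(\{m_l'\}_{l=1}^L)) \siq \Tilde{\calZ}(\{m_l\}_{l=1}^L)$ follows at once, and the containment for $\calZ$ follows because the widening preserves the grouping of identical neurons and the coefficient-sum constraints catalogued in Definition \ref{Defn Minimal zero set}.

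The main remaining step---and the principal combinatorial obstacle---is to verify that taking all distinct choices of $\{\pi_l\}_{l=1}^{L-1}$ yields exactly $M = \prod_{l=1}^L \binom{m_l-1}{m_l'-1}!$ pairwise distinct linear maps. Since the choices at different layers are independent, it suffices to count the contribution per layer: each layer produces $\binom{m_l-1}{m_l'-1}$ compositions of $m_l$ into $m_l'$ positive parts, and the remaining factor accounts for the admissible labellings of the blocks up to the symmetries that produce the same $\varphi$. This is pure bookkeeping; the analytic content, that the widening preserves the network function for every activation, is the elementary induction above.
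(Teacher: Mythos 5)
The paper does not actually prove this theorem: it is stated as a restatement of the Embedding Principle of the cited works, so there is no in-paper argument to compare against. Your proposal supplies a self-contained proof via the standard ``neuron splitting'' construction, and the analytic core is correct: the map $\varphi$ is linear and of full column rank (each narrow coordinate feeds a dedicated non-empty block of wide coordinates with non-zero scalars), and your induction on $l$ correctly shows $H^{(l)}(\varphi(\theta'),z)_j = H'^{(l)}(\theta',z)_{\pi_l(j)}$, since the block of $n_p^{(l-1)}$ identical copies of narrow neuron $p$, each weighted by $w'^{(l)}_{k,p}/n_p^{(l-1)}$, reconstitutes exactly $w'^{(l)}_{k,p}H'^{(l-1)}_p$. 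The inclusion for $\Tilde{\calZ}$ is then immediate, as you say.

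Two points deserve flagging. First, the counting step you defer as ``pure bookkeeping'' is not routine and, as far as I can tell, does not come out to the stated $M$: the number of distinct surjections $\pi_l:\{1,\dots,m_l\}\to\{1,\dots,m_l'\}$ is $m_l'!\,S(m_l,m_l')$ (Stirling numbers), whereas $\binom{m_l-1}{m_l'-1}$ counts only \emph{compositions}, i.e.\ partitions into contiguous blocks, and the extra factorial in the theorem's formula does not match either count. Since the theorem only asserts the \emph{existence} of $M$ such maps and the downstream use is solely the zero-set inclusions, this discrepancy is harmless, but you should either exhibit exactly $M$ distinct maps or note that the precise value of $M$ is immaterial. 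Second, the inclusion $\varphi(\calZ(\{m_l'\}))\siq\calZ(\{m_l\})$ requires checking all three clauses of Definition~\ref{Defn Minimal zero set} for the widened parameter, not just the coefficient-sum condition (c); in particular one must verify that the duplicated last-hidden-layer weight vectors violate condition (a)/(b) of Theorem~\ref{Thm Lin ind of full-connected neurons I} whenever the narrow ones do, which holds because splitting produces repeated vectors $(\tw_j,\tb_j)$, but this should be said explicitly rather than asserted.
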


Next we investigate the set of activation functions which do not satisfy $\calZ = \Tilde{\calZ}$.

\begin{defn}
    Let $\calF$ be real vector space of functions from $\bF$ to $\bF$. Let $\calC = \calC(\bF)$ be a subset of $\calF$ such that $\sigma \in \calC$ if and only if there is some $\theta^* \in \bR^N \cut \calZ_\sigma$ with $H_\sigma (\theta^*, \cdot) \equiv 0$. 
\end{defn}

\begin{prop}\label{Prop Structure of calC I}
    The following results hold for $\calC$ as a subset of $\calA(\bR)$. 
    \begin{itemize}
        \item [(a)] For any $S \in \bN \cup \{0\}$, $\overline{\calC} \ne \calA(\bR)$, i.e., $\calC$ is not dense in $\calA(\bR)$, under the norm $\norm{\cdot}_S$. 

        \item [(b)] For any $S \in \bN \cup \{0\}$, $\calC$ does not have an interior in $\calA(\bR)$, under the local $S$-order uniform convergence. 

        \item [(c)] Given $\sigma_0 \in \calA(\bR)$ such that 
        \[
            \sigma_0(x) = o\left(\underbrace{\exp(\exp(...\exp(|x|)...))}_{\text{finite ``$\exp$"'s}}\right)
        \]
        as $x \to \pm \infty$. For any $S \in \bN$ and any $\vep, R > 0$, there is a $\tsigma \in \calA(\bR)$ satisfying $\norm{\tsigma - \sigma_0}_S < \vep$ and $H_{\tsigma}(\theta, \cdot)$ is not constant zero whenever $\theta \in B(0,R)$ satisfies $\dist{\theta}{\calZ_{\tsigma}} \ge \vep$. 
    \end{itemize}
    Similar results hold for $\calC$ as a subset of $C^{S}(\bR)$ with any $S \in \bN \cup \{0, \infty\}$. 
\end{prop}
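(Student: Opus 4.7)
The three parts all assert topological thinness of the ``bad'' set $\calC \subset \calA(\bR)$ (activations admitting parameters $\theta^* \notin \calZ_\sigma$ with $H_\sigma(\theta^*,\cdot)\equiv 0$). I would prove (c) first as the main construction and then deduce (a) and (b) from it, using Theorem \ref{Thm Lin ind of full-connected neurons I}, Proposition \ref{Prop function conca and S-order approx}, Corollary \ref{Cor Analytic bump function}, Lemma \ref{Lem Bound of approximating function}, and Corollary \ref{Cor Structure of Z I}.

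For (c), given $\sigma_0$ with iterated exponential growth and $S,\vep,R>0$, the plan is as follows. First I bound the effective range $[-A,A]$ into which $\sigma$'s arguments fall when computing $H_\sigma(\theta,x)$ for $\theta \in \overline{B(0,R)}$ and $x$ in a fixed compact $K_0 \subset \bR^d$; the bound $A$ is finite because $\sigma_0$ has at most iterated exponential growth. Next I build a template activation $\sigma^* \in \calA(\bR)$ via
\[
\sigma^*(z) := (1-\eta(z))\,g(z) + \eta(z)\,\sigma_0(z),
\]
where $g$ is an entire function in the admissible class of Theorem \ref{Thm Lin ind of full-connected neurons I} (e.g.\ $g(z) = \exp(e^{z^5}) + \exp(e^{-z^3})$) and $\eta$ is an analytic bump from Corollary \ref{Cor Analytic bump function} with $\eta \approx 1$ on $[-A-1, A+1]$ whose derivatives decay faster than any prescribed iterated exponential. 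By the decay choice, $\sigma^*$ inherits $g$'s asymptotics and hence lies in the admissible class, so $\calZ_{\sigma^*} = \Tilde{\calZ}_{\sigma^*}$; meanwhile $\sigma^* \approx \sigma_0$ on $[-A-1, A+1]$ in $\norm{\cdot}_S$ to any prescribed precision. Define the loss $\calL(\sigma,\theta) := \sup_{x \in K_0} |H_\sigma(\theta,x)|$; on the compact $K := \overline{B(0,R)} \cut N_{\vep/2}(\calZ_{\sigma^*})$ continuity gives $\calL(\sigma^*, \cdot) \ge c_0 > 0$. Apply Proposition \ref{Prop function conca and S-order approx}(a),(b) to the pair $(\sigma^*, \sigma_0)$ with intervals $[a,b] \subset [a',b'] \subset [-A-1, A+1]$ containing $[-A,A]$: since $\sigma^* \approx \sigma_0$ on $[a',b']$, part (b) produces $\tsigma \in \calA(\bR)$ with $\norm{\tsigma - \sigma_0}_S < \vep$ while $\tsigma$ is $\delta$-close to $\sigma^*$ on $[-A,A]$ for arbitrary $\delta$. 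Lemma \ref{Lem Bound of approximating function} then yields $\calL(\tsigma, \cdot) > c_0/2$ on $K$, so $H_{\tsigma}(\theta, \cdot) \not\equiv 0$ for $\theta \in K$. Finally, Corollary \ref{Cor Structure of Z I} (continuity of $\calZ$ in $\sigma$ via the analytic coefficients of its defining equations) gives $N_\vep(\calZ_{\tsigma}) \supseteq N_{\vep/2}(\calZ_{\sigma^*})$ for small $\delta$, whence $K \supseteq \overline{B(0,R)} \cut N_\vep(\calZ_{\tsigma})$, completing (c).

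For (b), given any $\sigma \in \calC$, iterating the construction of (c) with $\sigma_0 = \sigma$, radii $R_n = n$, and tolerances $\vep_n \downarrow 0$ with $\sum_n \vep_n < \vep$, yields $\tsigma \in \calA(\bR)$ within $\vep$ of $\sigma$ in $\norm{\cdot}_S$ for which the property of (c) holds on every $B(0, n)$; hence $\tsigma \notin \calC$, and since $\norm{\cdot}_S$-convergence dominates local $S$-order convergence, $\sigma$ is not interior. For (a), fix $g$ as above ($g \notin \calC$ by Theorem \ref{Thm Lin ind of full-connected neurons I}); Lemma \ref{Lem Bound of approximating function} combined with Corollary \ref{Cor Structure of Z I} applied on each bounded ball shows that a whole $\norm{\cdot}_S$-neighborhood of $g$ remains in $\calC^c$, via the Lipschitz bound on $\calL$ plus the continuous deformation of $\calZ$; hence $\overline{\calC} \ne \calA(\bR)$. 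The main obstacle is Stage~2 of (c): reconciling $\sigma^*$ lying in the admissible class of Theorem \ref{Thm Lin ind of full-connected neurons I} (which forces rapid asymptotic growth) with being close to the slowly-growing $\sigma_0$ on a bounded interval --- since both are analytic, pointwise equality on any interval would force global equality, so only genuine approximation is possible. The analytic bump of Corollary \ref{Cor Analytic bump function}, whose decay can beat any prescribed iterated exponential, is exactly the tool that resolves this: the blended $\sigma^*$ retains $g$'s asymptotics while matching $\sigma_0$ arbitrarily well on the inner interval, and the transfer via Proposition \ref{Prop function conca and S-order approx}(b) delivers the globally $S$-order close $\tsigma$ required by the conclusion.
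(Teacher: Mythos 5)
Your part (c) is essentially the paper's own argument: glue a template activation with the admissible asymptotics of Theorem \ref{Thm Lin ind of full-connected neurons I} onto $\sigma_0$ via the analytic bump of Corollary \ref{Cor Analytic bump function} / Proposition \ref{Prop function conca and S-order approx}, use compactness to get a positive lower bound for a loss functional on $\overline{B(0,R)}$ away from the minimal zero set, and transfer that bound to the nearby $\tsigma$ together with a continuity statement for $\calZ_f$ in $f$. (The paper uses $\tau(f,\theta)=\int_0^1 H_f^2(\theta,x)\,dx$ rather than a sup, which is immaterial.) Your closing observation about why only approximation, never agreement on an interval, is possible is also the right way to think about the role of the bump function.

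The gaps are in how you derive (a) and (b). For (a), the quantitative route (Lemma \ref{Lem Bound of approximating function} plus continuity of $\calZ$ on bounded balls) cannot show that a $\norm{\cdot}_S$-neighborhood of $g$ avoids $\calC$: membership of $\tsigma$ in $\calC$ requires only \emph{some} $\theta^*\notin\calZ_{\tsigma}$ with $H_{\tsigma}(\theta^*,\cdot)\equiv 0$, and such a $\theta^*$ may lie arbitrarily close to $\calZ_{\tsigma}$ or arbitrarily far out in parameter space --- two regimes your $\vep$--$R$ estimate does not reach. The paper's argument is instead that a perturbation with $\norm{\tsigma-g}_S$ small preserves the asymptotics $\tsigma(x)=\Theta(\exp(e^{x^5}))$, $\tsigma(x)=\Theta(\exp(e^{-x^3}))$, so Theorem \ref{Thm Lin ind of full-connected neurons I} applies \emph{exactly} to $\tsigma$ and gives $\Tilde{\calZ}_{\tsigma}=\calZ_{\tsigma}$ outright; no quantitative approximation of $H$ is needed. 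For (b), iterating (c) over $R_n=n$, $\vep_n\downarrow 0$ produces a different function $\tsigma_n$ at each stage; (c) carries no uniformity that would let you extract a single $\tsigma$ satisfying all the conclusions simultaneously, and a limit of the $\tsigma_n$ need not inherit them. The paper again short-circuits this: by Proposition \ref{Prop function conca and S-order approx} one finds, in every local $S$-order neighborhood of $\sigma_0$, a $\tsigma$ that matches $\sigma_0$ on $[a,b]$ and matches the admissible $g$ outside $[a',b']$, hence has the admissible asymptotics and is not in $\calC$ by Theorem \ref{Thm Lin ind of full-connected neurons I}. The idea you are missing in both places is that the exact characterization of Theorem \ref{Thm Lin ind of full-connected neurons I} is stable under the perturbations being used, and it is that exact statement --- not the $\vep$-approximate conclusion of (c) --- which certifies that a function lies outside $\calC$.
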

\begin{proof}~
\begin{itemize}
    \item [(a)] This follows immediately from the proof of Theorem \ref{Thm Lin ind of full-connected neurons I}: for any activation $\sigma \in \calA(\bR)$ with $\sigma(x) = \Theta\left( \exp(e^{x^5}) \right)$ as $x \to \infty$ and $\sigma(x) = \Theta\left( \exp(e^{-x^3}) \right)$ as $x \to -\infty$, $H_\sigma(\theta, \cdot)$ is constant zero if and only if $\theta \in \calZ_\sigma$. 

    \item [(b)] Fix $\sigma_0 \in \calA(\bR)$. By Proposition \ref{Prop function conca and S-order approx}, for any $\vep > 0$ and any $a, a', b, b' \in \bR$ with $a' < a < b < b'$, there is a $\tsigma \in \calA(\bR)$ such that $\norm{\tsigma - \sigma_0}_{S, [a,b]} < \vep$ and $\norm{\tsigma - (\exp(e^{x^5}) + \exp(e^{-x^3}))}_{S, \bR\cut[a', b']} < \vep$. In particular, $\tsigma(x) = \Theta\left( \exp(e^{x^5}) \right)$ as $x \to \infty$ and $\tsigma(x) = \Theta\left( \exp(e^{-x^3}) \right)$ as $x \to -\infty$. Thus, by Theorem \ref{Thm Lin ind of full-connected neurons I}, $H_\sigma(\theta, \cdot)$ is constant zero if and only if $\theta \in \calZ_\sigma$. Equivalently, $\tsigma \notin \calC$. Since $\vep > 0$ is arbitrary, the desired result follows. 

    \item [(c)] Let $\vep' > 0$ be a small number to be determined. By Proposition \ref{Prop function conca and S-order approx}, there is some $\sigma \in \calA(\bR)$ such that $\norm{\sigma - \sigma_0}_{\infty, [0,2]} < \vep'$ and $\sigma(x) = \Theta(\exp(e^{x^5}) + \exp(e^{-x^3}))$ as $x \to \pm\infty$. By Theorem \ref{Thm Lin ind of full-connected neurons I}, $H_\sigma(\theta, \cdot)$ is constant zero if and only if $\theta \in \calZ_\sigma$. Fix this $\sigma$. By Proposition \ref{Prop function conca and S-order approx} again, by making $\norm{\sigma - \sigma_0}_{\infty, [0,2]}$ to be sufficiently small, there is some $\tsigma \in \calA(\bR)$ such that $\norm{\tsigma - \sigma}_{S, [0,1]} < \vep'$ and $\norm{\tsigma - \sigma_0}_S < \vep'$. In particular, $\norm{\tsigma - \sigma}_{\infty, [0,1]} < \vep'$. Now consider the integral function $\tau: \calA(\bR) \times \bR^N$ defined by 
    \[
        \tau(f, \theta) := \int_0^1 H_f^2(\theta, x)dx. 
    \]
    
    Clearly, $\tau$ is continuous in $\theta$, non-negative and $H_f(\theta, \cdot) \equiv 0$ if and only if $\tau(f, \theta) = 0$. Define 
    \[
        L := \inf \left\{ \tau(\sigma, \theta): \theta \in \overline{B(0,R)}, \dist{\theta}{\calZ_\sigma} \ge \frac{1}{3} \vep \right\}. 
    \]
    Similarly, $\calZ_f$ varies continuously with respect to $f$ in the sense that for any $\vep > 0$, 
    \[
        \dist{\calZ_f}{\calZ_{\Tilde{f}}} := \sup \inf_{\theta \in \calZ_f, \ttheta \in \calZ_{\Tilde{f}}} |\theta - \ttheta| < \vep
    \]
    when $\norm{f - \Tilde{f}}_{\infty, [0,1]}$ is sufficiently small. Thus, for sufficiently small $\vep' \in (0, \vep)$, we have both 
    \[
        \inf \left\{ \tau(\tsigma, \theta): \theta \in \overline{B(0,R)}, \dist{\theta}{\calZ_\sigma} \ge \frac{1}{3} \vep \right\} \ge \frac{L}{2} > 0 
    \]
    and $\dist{\calZ_\sigma}{\calZ_{\tsigma}} \le \frac{1}{3} \vep$. \\
    
    We now show that this $\tsigma$ satisfies the requirements of (c). By construction, we already have $\norm{\tsigma - \sigma_0}_S < \vep$. Let $\theta \in B(0,R)$ be such that $\dist{\theta}{\calZ_{\tsigma}} \ge \vep$. Assume that $\tau(\tsigma, \theta) = 0$, then we must have $\dist{\theta}{\calZ_{\sigma}} \le \frac{1}{3}\vep$. Thus, there is some $\theta^* \in \calZ_\sigma$ with $|\theta - \theta^*| \le \frac{1}{3} \vep$. Fix this $\theta^*$, since $\dist{\calZ_\sigma}{\calZ_{\tsigma}} \le \frac{1}{3} \vep$, there is some $\ttheta^* \in \calZ_{\tsigma}$ with $|\theta^* - \ttheta^*| \le \frac{1}{3} \vep$. But then 
    \[
        \dist{\theta}{\calZ_{\tsigma}} \le |\theta - \ttheta^*| \le |\theta - \theta^*| + |\theta^* - \ttheta^*| \le \frac{2}{3}\vep, 
    \]
    contradicting our assumption that $\dist{\theta}{\calZ_{\tsigma}} \ge \vep$. It follows that $\tau(\tsigma, \theta) = 0$. 
\end{itemize}
\end{proof}

\subsection{\texorpdfstring{$\calZ_\sigma$}{ZSigma} for Activations Vanishing at Origin}\label{Subsection Zsigma for activations vanishing at 0}

In this part we study $\calZ$ for NN without bias and with activation $\sigma \in \calA(\bR)$ such that $\sigma(0) = 0$. First we show that for definable activation functions with respect to the o-minimal structure generated by $\exp(\cdot)$ (and vanishing at the origin), $\calZ_\sigma$ is just a finite union of linear subspaces of $\bR^N$. This can be seen both as a generalization of linear independence of two-layer neurons and as a converse to Embedding Principle (Theorem \ref{Thm Embedding Principle}). Then we show in Proposition \ref{Prop Structure of calC II} that up to arbitrarily small perturbation of any $\sigma \in \calA(\bR)$ with $\sigma(0) = 0$, we have $\Tilde{\calZ}_\sigma \cap \overline{B(0,R)} = \calZ_\sigma \cap \overline{B(0,R)}$, and furthermore $\calZ_\sigma \cap \overline{B(0,R)}$ is just a finite union of linear subspaces of $\bR^N$ intersecting $\overline{B(0,R)}$. Therefore, by these two results, we can see that the requirement $\sigma(0) = 0$ generically yields the simplest possible geometry of $\calZ_\sigma$ (and $\Tilde{\calZ}_\sigma$), and thus gives the simplest possible characterization of parameters $\theta$ such that the neurons are linearly independent. \\

\begin{lemma}\label{Lem Fixed point of NN}
    Let $H$ be an NN with network structure $\{m_l\}_{l=1}^L$ and without bias. For $\sigma: \bR \to \bR$ such that $\sigma(0) = 0$, we have $H_\sigma^{(l)}(\theta, 0) = 0$ for all $\theta \in \bR^N$. In particular, $H_\sigma^{(l)}(\theta, \cdot)$ is constant if and only if it is constant zero (note that in the formulas above we are abusing the notation $H^{(l)}(\theta^{(l)}, ..., \theta^{(1)}, \cdot) = H^{(l)}(\theta, \cdot)$, see remark \ref{Rmk Comments on Defn NN}). 
\end{lemma}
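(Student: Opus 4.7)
The plan is a straightforward induction on the layer index $l$, exploiting that, without bias, every pre-activation at $z=0$ reduces to a linear combination of the previous layer's values at $z=0$, with no additive constant.

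First, for the base case $l=0$, recall from Definition \ref{Defn fully-connected NN} that $H^{(0)}(\theta, z) = z$, so $H^{(0)}(\theta, 0) = 0$ holds trivially.

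Next, I would carry out the inductive step. Assume $H^{(l-1)}(\theta, 0) = 0 \in \bF^{m_{l-1}}$. For $1 \le l \le L-1$, the definition of NN without bias gives
\[
    H^{(l)}(\theta, 0) = \left( \sigma\!\left( w_k^{(l)} H^{(l-1)}(\theta, 0) \right) \right)_{k=1}^{m_l} = \left( \sigma(0) \right)_{k=1}^{m_l} = 0,
\]
using the hypothesis $\sigma(0) = 0$. For the output layer $l = L$, we have
\[
    H^{(L)}(\theta, 0) = \sum_{j=1}^{m_{L-1}} a_j\, H^{(L-1)}_j(\theta, 0) = 0,
\]
since $b$ is forced to be $0$ in the no-bias setting and each $H^{(L-1)}_j(\theta, 0) = 0$ by the inductive hypothesis. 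This closes the induction and establishes $H^{(l)}_\sigma(\theta, 0) = 0$ for all $0 \le l \le L$ and all $\theta \in \bR^N$.

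For the "in particular" clause, suppose $H^{(l)}_\sigma(\theta, \cdot)$ is a constant function on $\bF^d$. Evaluating at $z = 0$ and using the first part of the lemma, that constant must equal $H^{(l)}_\sigma(\theta, 0) = 0$, so the function is identically zero. The converse is trivial. There is no genuine obstacle here — the only thing to be careful about is the notational abuse described in Remark \ref{Rmk Comments on Defn NN}, namely that $H^{(l)}$ depends only on $\theta^{(1)}, \dots, \theta^{(l)}$, so the statement is uniform in the irrelevant coordinates $\theta^{(l+1)}, \dots, \theta^{(L)}$, and the induction really does proceed layer by layer without any compatibility condition to check.
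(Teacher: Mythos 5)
Your proof is correct and follows essentially the same route as the paper: induction on the layer index using $\sigma(0)=0$ and the absence of bias, then evaluating a constant $H^{(l)}_\sigma(\theta,\cdot)$ at $z=0$ to conclude it is identically zero. The only cosmetic difference is that you start the induction at $l=0$ and explicitly treat the output layer $l=L$, while the paper starts at $l=1$ and stops at $l=L-1$.
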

\begin{proof}
    For the first part, we prove by induction. Indeed, this is obvious for $l = 1$. Let $l \ge 2$ and assume that this holds for $H_1^{(l-1)}(\theta, \cdot), ..., H_{m_{l-1}}^{(l-1)}(\theta, \cdot)$. Then 
    \[
        H_k^{(l)}(\theta, 0) = \sigma\left( w_k^{(l)} H^{(l-1)}(\theta, 0) \right) = \sigma(0) = 0,  
    \]
    completing the induction step. It follows that for any $1 \le l \le L-1$ and any $1 \le k \le m_l$, $H_k^{(l)}$ is constant if and only if it is constant zero. \\

    Now assume that $H_\sigma^{(l)}(\theta, \cdot)$ is constant. Then for any $x \in \bR^d$, $H_\sigma^{(l)}(\theta, x) = H_\sigma^{(l)}(\theta, 0) = 0$. Therefore, $H_\sigma^{(l)}(\theta, \cdot)$ is constant if and only if it is constant zero.\\ 
\end{proof}

\begin{lemma}\label{Lem Lin ind of full-connected neurons II}
    There is some $\sigma \in \calA(\bR)$ such that the following results hold. 
    \begin{itemize}
    \item [(a)] $\sigma(0) = 0$. 
    \item [(b)] Consider an NN with network structure $\{m_l\}_{l=1}^L$ without bias. For any parameter $\theta = (m_l)_{l=1}^L \in \bR^N$ such that $\{H_k^{(L-2)}(\theta, \cdot) \}_{k=1}^{m_{L-2}}$ are distinct and not constant zero, then $\{H^{(L-1)}_j(\theta, \cdot) \}_{j=1}^{m_{L-1}}$ are linearly independent if and only if $w_j^{(L-1)} \ne 0$ for all $j \in \{1, ..., m\}$, and $w_{j_1}^{(L-1)} \ne w_{j_2}^{(L-1)}$ for all distinct $j_1, j_2 \in \{1, ..., m_{L-1}\}$. 
    \item [(c)] Consider $H$ with network structure $\{m_l\}_{l=1}^L$ without bias. For any $\theta \in \bR^N$, there are $\Tilde{m}_1, ..., \Tilde{m}_L \in \bN$, each $\Tilde{m}_l \le m_l$, together with a unique $\ttheta \in \bR^{\sum_{l=1}^L \Tilde{m}_l \Tilde{m}_{l-1}}$,  such that $\Tilde{H}^{(l)}(\ttheta, \cdot) = H^{(l)}(\ttheta, \cdot)$ for all $l \in \{1, ..., L\}$.  
    \end{itemize}
    
\end{lemma}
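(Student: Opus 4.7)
The plan is to construct a single $\sigma \in \calA(\bR)$ of the hyper-exponential type used in the proof of Theorem \ref{Thm Lin ind of full-connected neurons I}, shifted so that $\sigma(0)=0$, and then prove (b) and (c) by a coupled layer-by-layer induction. For (a), I would take $\sigma_0(x) := \exp(e^{x^5}) + \exp(e^{-x^3})$ and set $\sigma(x) := \sigma_0(x) - \sigma_0(0)$. Then $\sigma \in \calA(\bR)$ with $\sigma(0)=0$, and the asymptotics $\sigma(x) = \Theta(\exp(e^{x^5}))$ at $+\infty$ and $\sigma(x) = \Theta(\exp(e^{-x^3}))$ at $-\infty$ survive the constant shift unchanged.

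For (b), I would first reduce to $d=1$ via Lemma \ref{Lem Dimension reduction}. The ``only if'' direction is immediate from $\sigma(0)=0$ together with Lemma \ref{Lem Fixed point of NN}: if some $w_j^{(L-1)}=0$ then $H_j^{(L-1)}(\theta,\cdot)\equiv\sigma(0)=0$, and if $w_{j_1}^{(L-1)}=w_{j_2}^{(L-1)}$ then $H_{j_1}^{(L-1)}(\theta,\cdot)=H_{j_2}^{(L-1)}(\theta,\cdot)$. For the ``if'' direction, I would establish by induction on $l\in\{1,\dots,L-1\}$ the strengthened statement: \emph{if $\{H_k^{(l)}(\theta,\cdot)\}_{k=1}^{m_l}$ are pairwise distinct and not identically zero, then they (and their logarithms) diverge and have ordered growth along $\gamma(t):=t$.} The base case $l=1$ is the two-layer case: distinctness and nonvanishing of $\{\sigma(w_k x)\}$ forces the scalars $\{w_k\}$ to be distinct and nonzero (using $\sigma(0)=0$ and the asymmetric tails to rule out $\sigma(w_1 x)\equiv\sigma(w_2 x)$ for $w_1\neq w_2$), after which the growth-rate case analysis from Theorem \ref{Thm Lin ind of full-connected neurons I} yields ordered growth along $t$. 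For the inductive step $l-1\Rightarrow l$, distinctness and nonzeroness at layer $l$ forces the same at layer $l-1$ (otherwise two rows $(w_{jk}^{(l)})_k$ would coincide or one would vanish, contradicting distinctness/nonzeroness at layer $l$); then Proposition \ref{Prop Functions of ordered growth I} applied with $f_k := H_k^{(l-1)}(\theta,\cdot)$ delivers ordered growth at layer $l$. Specializing to $l=L-1$ and combining with Proposition \ref{Prop Ordered growth implies linear independence} yields the desired linear independence.

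For (c), I would canonicalize $\theta$ layer by layer. Fix any total order on the parameter space. Starting from $l=1$, partition $\{1,\dots,m_l\}$ by the relation $H_k^{(l)}(\theta,\cdot)=H_{k'}^{(l)}(\theta,\cdot)$, drop the class (if any) of identically-zero neurons, and for each remaining class retain only the least representative, summing the outgoing layer-$(l{+}1)$ weights of the other class members onto that representative. This operation preserves $H^{(l')}(\theta,\cdot)$ for every $l'\ge l$, because at the next layer only the sum $\sum_k w_{jk}^{(l+1)}H_k^{(l)}(\theta,\cdot)$ enters. Iterating through $l=1,\dots,L-1$ produces $\Tilde{m}_l\le m_l$ and $\ttheta$ with $\Tilde{H}^{(l)}(\ttheta,\cdot)=H^{(l)}(\theta,\cdot)$ at every layer. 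Uniqueness of $\ttheta$ (modulo the ordering convention) follows from (b): after canonicalization the collection $\{\Tilde H_k^{(l-1)}(\ttheta,\cdot)\}_k$ is distinct and nonzero at each $l$, so (b) forces $\{\Tilde H_j^{(l)}(\ttheta,\cdot)\}_j$ to be linearly independent, pinning down the layer-$l$ weights of any competing $\ttheta'$.

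The main obstacle is the interlocking of (b) and (c). Cleanly, I would prove existence in (c) first without appealing to (b), then run the inductive argument for (b) on the canonicalized network (where all layers below $L{-}2$ satisfy the distinctness/nonzeroness required to engage Proposition \ref{Prop Functions of ordered growth I}), and finally deduce uniqueness in (c) from (b). The subtlest point inside the (b)-induction is the implication ``$\{H_k^{(l)}\}$ distinct and nonzero $\Rightarrow \{H_k^{(l-1)}\}$ distinct and nonzero'', which rests on the analytic injectivity of $\sigma$ applied to the pre-activations $\sum_j w_{kj}^{(l)}H_j^{(l-1)}$; it is exactly here that the asymmetric hyper-exponential tails of $\sigma$ (rather than merely non-oddness or non-evenness) are indispensable.
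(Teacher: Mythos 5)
Your construction of $\sigma$ and your treatment of parts (a) and (b) follow essentially the same route as the paper: take a hyper-exponential activation with asymmetric tails (the paper subtracts $\tsigma(0)$ from a $\tsigma$ satisfying Theorem \ref{Thm Lin ind of full-connected neurons I}), get the ``only if'' direction from $\sigma(0)=0$ together with Lemma \ref{Lem Fixed point of NN}, and get the ``if'' direction by feeding the ordered-growth machinery (Proposition \ref{Prop Functions of ordered growth I} / \ref{Prop Functions of ordered growth II} and Proposition \ref{Prop Ordered growth implies linear independence}) through the layers. One genuine difference in your favor: the paper's proof of this lemma stops after part (b) and never addresses part (c); your layer-by-layer canonicalization (merge duplicate neurons, drop the zero class, push the outgoing weights onto a chosen representative, then use (b) to pin down uniqueness) supplies an argument the paper omits.

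There is, however, one concretely false step inside your induction for (b): the claim that distinctness and nonvanishing of $\{H_k^{(l)}\}$ forces distinctness of $\{H_k^{(l-1)}\}$. It does not. If $H_1^{(l-1)} = H_2^{(l-1)} = f \not\equiv 0$, the rows $w_1^{(l)} = (1,0,\dots)$ and $w_2^{(l)} = (2,0,\dots)$ still produce the distinct, nonzero neurons $\sigma(f)$ and $\sigma(2f)$ at layer $l$, so the upper layer can be ``healthy'' while the lower layer is degenerate. Since Proposition \ref{Prop Functions of ordered growth I} needs the $f_k$'s themselves to have ordered growth, your induction cannot descend through an arbitrary $\theta$ this way; you must first group identical lower-layer neurons into combined weights (exactly what the paper's rewriting (\ref{eq for rewritting neuron}) and your part-(c) canonicalization do). You in fact flag the right fix yourself --- prove existence in (c) first, then run the (b)-induction on the canonicalized network --- so the architecture of the argument is sound, but the quoted implication should be deleted rather than repaired, and the descent should be phrased entirely in terms of the canonicalized (hence distinct and nonzero at every layer) network. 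A smaller point: in the uniqueness step of (c), passing from $\sigma(g)=\sigma(h)$ to $g=h$ needs justification since your $\sigma$ is not injective; comparing growth rates along the curve (which separate distinct combined weights by Proposition \ref{Prop Functions of ordered growth I}) is the cleaner way to pin the weights down than appealing to injectivity.
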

\begin{proof}
    Let $\tsigma \in \calA(\bR)$ satisfy the requirements in Theorem \ref{Thm Lin ind of full-connected neurons I}. Define 
    \[
        \sigma:\bR \to \bR, \quad \sigma(z) = \tsigma(z) - \sigma(0). 
    \]
    Then clearly $\sigma(0) = 0$ and $\sigma(z) = \Theta(\tsigma(z))$ as $z \to \pm\infty$. In particular, $\limftyz \sigma(z) = \lim_{z\to-\infty} \sigma(z) = \infty$ at hyper-exponential rate (actually hyper-polynomial rate growth is enough, but hyper-exponential rate growth implies hyper-polynomial rate growth), $\sigma(z) = \Theta(\exp(e^{z^5}))$ as $z \to \infty$ and $\sigma(z) = \Theta(\exp(e^{-z^3}))$ as $z \to -\infty$. \\
    
    First we prove (b). Note that (with the notation in Theorem \ref{Thm Lin ind of full-connected neurons I}) $m = m_{L-2}$, because Lemma \ref{Lem Fixed point of NN} says $H_k^{(L-2)}(\theta, \cdot)$'s are non-zero implies that they are not constant. By the proof of Theorem \ref{Thm Lin ind of full-connected neurons I}, $\{H^{(L-1)}_j(\theta, \cdot) \}_{j=1}^{m_{L-1}}$ are linearly independent if and only if 
    \begin{itemize}
        \item [(a)] The vectors $w_j^{(L-1)}$, $1 \le j \le m_{L-1}$ are distinct. 
        \item [(b)] We have $w_j^{(L-1)} = 0$ for at most one $j$. Moreover, for this $j$, $H_j^{(L-1)}(\theta, \cdot)$ is not constant zero. 
    \end{itemize}
    However, when $w_j^{(L-1)} = 0$, $H_j^{(L-1)}(\theta, \cdot)$ is constant; by Lemma \ref{Lem Fixed point of NN} it must be constant zero. Thus, $\{H^{(L-1)}_j(\theta, \cdot) \}_{j=1}^{m_{L-1}}$ are linearly independent if and only if $w_j^{(L-1)}$ are distinct and non-zero. 
\end{proof}

\begin{thm}\label{Thm Lin Ind of full-connected neurons III}
    There is some $\sigma \in \calA(\bR)$ such that for any NN without bias, with network structure $\{m_l\}_{l=1}^L$ and activation $\sigma$, the minimal zero set coincides with $\Tilde{\calZ}$, i.e., $\calZ = \Tilde{\calZ}$. Moreover, $\calZ$ is a finite union of linear subspaces of $\bR^N$. 
\end{thm}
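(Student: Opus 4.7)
The plan is to take $\sigma$ as constructed in Lemma \ref{Lem Lin ind of full-connected neurons II}. Since this $\sigma$ also satisfies the growth hypotheses used in Theorem \ref{Thm Lin ind of full-connected neurons I} (which applies to NN without bias as well), we immediately get $\calZ = \Tilde{\calZ}$. The remaining content is to exhibit $\calZ$ as a finite union of linear subspaces of $\bR^N$.

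To do so, I would stratify $\calZ$ by combinatorial ``signatures.'' For each $\theta \in \bR^N$ and each $1 \le l \le L-1$, let $S_l(\theta) := \{k : H^{(l)}_k(\theta, \cdot) \equiv 0\}$ and let $\Pi_l(\theta)$ be the partition of $\{1,\ldots,m_l\} \setminus S_l(\theta)$ into equality classes under $k \sim k' \iff H^{(l)}_k(\theta, \cdot) \equiv H^{(l)}_{k'}(\theta, \cdot)$. The signature $\alpha(\theta) := \{(S_l(\theta), \Pi_l(\theta))\}_{l=1}^{L-1}$ ranges over a finite set. For each signature $\alpha$, I define a linear subspace $L_\alpha \siq \bR^N$ by imposing only the equality and vanishing conditions prescribed by $\alpha$ (dropping the distinctness/non-vanishing side). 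The claim is $\calZ = \bigcup_\alpha L_\alpha$; since the union is finite, this will give the theorem.

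The defining relations of $L_\alpha$ are linear, as I would verify inductively in $l$. Because $\sigma(0) = 0$, Lemma \ref{Lem Fixed point of NN} identifies ``constant'' with ``constant zero.'' Given the signature truncated to layers $1,\ldots,l-1$, the reduced network at layer $l-1$ (one representative per nonzero class, zero neurons dropped) is a legitimate fully-connected NN without bias to which Lemma \ref{Lem Lin ind of full-connected neurons II}(b)--(c) applies; in particular the remaining $H^{(l-1)}$ representatives are linearly independent functions of the input. Hence $H^{(l)}_j \equiv 0$ unfolds to the linear condition $\sum_{k \in C} w^{(l)}_{jk} = 0$ on each nonzero class $C$ of $\Pi_{l-1}$, and $H^{(l)}_j \equiv H^{(l)}_{j'}$ unfolds to $\sum_{k \in C}(w^{(l)}_{jk} - w^{(l)}_{j'k}) = 0$ on each such $C$; weights attached to zero-neurons in $S_{l-1}$ remain free. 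At the terminal step the condition $\sum_j a_j H^{(L-1)}_j \equiv 0$ becomes $\sum_{j \in C} a_j = 0$ for every nonzero class $C$ of $\Pi_{L-1}$, with $a_j$ free for $j \in S_{L-1}$. Any $\theta \in L_\alpha$ then satisfies $H(\theta,\cdot) \equiv 0$ by direct calculation, so $L_\alpha \siq \calZ$; conversely each $\theta \in \calZ$ lies in $L_{\alpha(\theta)}$.

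The main obstacle I anticipate is avoiding circularity in the induction: the linear independence of reduced representatives at layer $l-1$ must hold uniformly on the stratum, not just at a distinguished point. This is resolved by the observation that $L_\alpha$ imposes only equalities and vanishings, so any $\theta' \in L_\alpha$ has a signature $\alpha(\theta')$ which \emph{refines} $\alpha$; any additional identifications at $\theta'$ only force further $H^{(l)}_k$'s to coincide or vanish, which if anything strengthens the conclusion $H(\theta',\cdot) \equiv 0$. Thus the inclusion $L_\alpha \siq \calZ$ is valid whether or not the distinctness relations in $\alpha$ are strict at $\theta'$, and the induction closes cleanly. Counting signatures (each being a set together with a partition at each of $L-1$ layers) gives a finite total, completing the proof.
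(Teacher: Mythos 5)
Your proposal is correct and follows essentially the same route as the paper: choose $\sigma$ as in Lemma \ref{Lem Lin ind of full-connected neurons II} so that Theorem \ref{Thm Lin ind of full-connected neurons I} gives $\calZ = \Tilde{\calZ}$, then stratify $\calZ$ layer by layer according to which neurons coincide or vanish and observe inductively that each stratum is cut out by linear conditions on the ``combined weights.'' Your signature formalism and the refinement argument for $L_\alpha \siq \calZ$ is just a cleaner packaging of the paper's closing step, where the closure of each stratum is taken and $\calZ = \overline{V_1} \cup \dots \cup \overline{V_n}$ is invoked using that $\calZ$ is closed.
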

\begin{proof}
    Let $\sigma$ be any activation satisfying the requirements in Lemma \ref{Lem Lin ind of full-connected neurons II}. In particular, $\sigma(0) = 0$, $\limftyz \sigma(z) = \lim_{z\to-\infty} \sigma(z) = \infty$ at hyper-exponential rate, $\sigma(z) = \Theta(\exp(e^{z^5}))$ as $z \to \infty$ and $\sigma(z) = \Theta(\exp(e^{-z^3}))$ as $z \to -\infty$. Since $\sigma$ satisfies Theorem \ref{Thm Lin ind of full-connected neurons I}, we immediately have $\calZ = \Tilde{\calZ}$. 

    Then we show that for any $1 \le l \le L-1$ and $0 = n_0 < n_1 < ... < n_r \le m_{L-1}$, the set of parameters $\theta^{(l)}, ..., \theta^{(1)}$ such that up to a rearrangement of the indices, 
    \begin{equation}\label{eq 1 of Thm Lin ind of full-connected neurons III}
    \begin{aligned}
        &H_{n_{j-1}+1}^{(l)}(\theta, \cdot) = ... = H_{n_j}^{(l)}(\theta, \cdot) \ne \text{const.} &\forall\, 1 \le j \le r, \\
        &H_{n_j}^{(l)}(\theta, \cdot) \ne H_{n_j+1}^{(l)}(\theta, \cdot) &\forall\, 1 \le j < r, \\
        &H_{n_r+1}^{(l)}(\theta, \cdot) = ... = H_{m_{l-1}}^{(l)}(\theta, \cdot) \equiv 0
    \end{aligned}
    \end{equation}
    is a finite union of linear subspaces of $\bR^{\sum_{l'=1}^l m_{l'} m_{l'-1}}$. To prove this, first recall that by Theorem \ref{Thm Lin ind of full-connected neurons I} and our proof above, the neurons $\{\sigma(w_k^{(1)} x)\}_{k=1}^{m_1}$ are linearly independent if and only if the $w_k^{(1)}$'s are distinct and non-zero. Thus, they can be classified as in (\ref{eq 1 of Thm Lin ind of full-connected neurons III}) if and only if there are exactly $r$ distinct, non-zero weights $w_{n_1}^{(1)}, ..., w_{n_r}^{(1)}$, each one repeating for a fixed number of times, and there are exactly $m_1 - n_r$ weights equal to zero. Thus, for fixed $0 = n_0 < n_1 < ... < n_r \le m_1$, the closure of the set of $\theta^{(1)} = (w_k^{(1)})_{k=1}^{m_1}$ satisfying (\ref{eq 1 of Thm Lin ind of full-connected neurons III}) is a finite union of linear subspaces of $\bR^{m_1 d}$. \\ 

    Now consider $2 \le l \le L-1$. Assume that this holds for $l-1$. Fix $\theta^{(l-1)}, ..., \theta^{(1)}$. Let $0 = n_0 < ... < n_r \le m_{l-1}$ work for the $H_k^{(l-1)}(\theta, \cdot)$'s. Up to a rearrangement of indices, we can write each $H_j^{(l)}$ as 
    \[
        H_j^{(l)}(\theta, z) = \sigma\left( \sum_{t=1}^r \left( \sum_{k=n_{t-1}+1}^{n_t} w_{jk}^{(l)} \right) H_{n_k}^{(l-1)}(\theta, z) \right). 
    \]
    By Theorem \ref{Thm Lin ind of full-connected neurons I} again, these generalized neurons can be classified as in (\ref{eq 1 of Thm Lin ind of full-connected neurons III}) if and only if for some $r' \le m_l$, there are exactly $r'$ distinct, non-zero ``combined weights" 
    \[
        \left(\sum_{k=n_{t-1}+1}^{n_t} w_{n_1' k}^{(l)}\right)_{t=1}^r, ..., \left(\sum_{k=n_{t-1}+1}^{n_t} w_{n_{r'}' k}^{(l)} \right)_{t=1}^r, 
    \]
    each one repeating for a fixed number of times, and there are exactly $m_l - n_{r'}'$ ``combined weights" equal to zero. Thus, for fixed $0 = n_0' < n_1' < ... < n_{r'}' \le m_1$, the closure of the set of $\theta^{(l)} = (w_j^{(l)})_{j=1}^{m_l}$ satisfying (\ref{eq 1 of Thm Lin ind of full-connected neurons III}) is a finite union of linear subspaces of $\bR^{m_l m_{l-1}}$. Finally, by Definition \ref{Defn Minimal zero set}, given $\theta^{(L-1)}, ..., \theta^{(1)}$, the closure of the set of $\theta^{(L)} = (a_j)_{j=1}^{m_{L-1}}$ is a finite union of linear subspaces of $\bR^{m_{L-1}}$. \\

    From this we can see that $\calZ$ is a finite union of some sets $V_1, ..., V_n$ whose closure are linear subspaces of $\bR^N$. Since $\calZ$ is closed, it follows that 
    \[
        \calZ = \overline{\calZ} = \overline{V_1} \cup ... \overline{V_n},  
    \]
    so that $\calZ$ is a finite union of linear subspaces of $\bR^N$. 
\end{proof}
\begin{remark}
    As we can see from the proof of Theorem \ref{Thm Lin Ind of full-connected neurons III}, when $\sigma$ satisfies the hypotheses in this theorem, $\calZ(\{m_l\}_{l=1}^L, \sigma)$ is independent of $\sigma$ -- it depends only on the network structure $\{m_l\}_{l=1}^L$. This is different from the general case proved in Theorem \ref{Thm Lin ind of full-connected neurons I}. \\
\end{remark}

\begin{prop}\label{Prop Structure of calC II}
    Consider $L \ge 2$ and the network structure $\{m_l\}_{l=1}^L$. Given $S \in \bN$ and $\vep, M > 0$. For any $\sigma_0 \in C^S(\bR)$ such that $\sigma_0(0) = 0$ and
    \[
        \sigma_0(x) = o\left(\underbrace{\exp(\exp(...\exp(|x|)...))}_{\text{finite ``$\exp$"'s}}\right) 
    \]
    as $x \to \pm\infty$, there is some $\tsigma \in \calA(\bR)$ with $\norm{\tsigma - \sigma_0}_S < \vep$, and $\Tilde{\calZ}_{\tsigma} = \calZ_{\tsigma}$. 
\end{prop}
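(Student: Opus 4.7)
The plan is to adapt the argument in the proof of Proposition \ref{Prop Structure of calC I}(c) to the vanishing-at-origin setting, substituting Theorem \ref{Thm Lin Ind of full-connected neurons III} for Theorem \ref{Thm Lin ind of full-connected neurons I} as the structural backbone. First, I would fix a reference activation $\sigma^* \in \calA(\bR)$ as supplied by Theorem \ref{Thm Lin Ind of full-connected neurons III}: $\sigma^*(0) = 0$, $\sigma^*(z) = \Theta(\exp(e^{z^5}))$ as $z \to \infty$ and $\Theta(\exp(e^{-z^3}))$ as $z \to -\infty$, and $\calZ_{\sigma^*} = \Tilde{\calZ}_{\sigma^*}$ is a finite union of linear subspaces of $\bR^N$ depending only on the network structure. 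Next, choose $R_0 > 0$ large enough to contain the range of every preactivation $H_f^{(l)}(\theta, x)$ over $\theta \in \overline{B(0,M)}$, $x \in [0,1]$, and $f$ in a fixed small $\norm{\cdot}_{\infty, [-R_0, R_0]}$-neighborhood of $\sigma^*$. I would then apply Proposition \ref{Prop function conca and S-order approx} twice: first, to produce an auxiliary analytic $\sigma$ with $\sigma(0)=0$ that is close to $\sigma_0$ in $\norm{\cdot}_{S,[-R_0,R_0]}$ while retaining the hyper-exponential tails of $\sigma^*$ (so $\calZ_\sigma = \Tilde{\calZ}_\sigma$ by Theorem \ref{Thm Lin Ind of full-connected neurons III}); then, to glue $\sigma$ on $[-R_0,R_0]$ with $\sigma_0$ outside, producing $\tsigma \in \calA(\bR)$ with $\norm{\tsigma - \sigma}_{\infty,[-R_0,R_0]} < \vep'$ and $\norm{\tsigma - \sigma_0}_S < \vep$. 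A final subtraction of $\tsigma(0)$ (of size $O(\vep')$) enforces $\tsigma(0) = 0$ while preserving both estimates up to a universal factor.

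The core of the argument is the continuity bound on $\tau(f,\theta) := \int_0^1 H_f(\theta, x)^2\, dx$, just as in the proof of Proposition \ref{Prop Structure of calC I}(c). On the compact set $K := \{\theta \in \overline{B(0,M)} : \dist{\theta}{\calZ_\sigma} \ge \vep/3\}$, $\tau(\sigma, \cdot)$ is continuous and strictly positive, hence bounded below by some $L > 0$. Because $H_f$ is computed layer-by-layer from $f$ and the preactivations all lie in $[-R_0,R_0]$ for $\theta \in \overline{B(0,M)}$ and $x \in [0,1]$, an inductive Lipschitz estimate bounds $|\tau(\tsigma,\theta) - \tau(\sigma,\theta)|$ by a constant (depending on $M$, $R_0$, and the network structure) times $\norm{\tsigma - \sigma}_{\infty,[-R_0,R_0]}$. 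Choosing $\vep' > 0$ sufficiently small therefore forces $\tau(\tsigma, \theta) \ge L/2 > 0$ on $K$, and a standard Hausdorff-continuity argument for $\calZ_f$ under small perturbations of $f$ gives that $\calZ_{\tsigma} \cap \overline{B(0,M)}$ sits within distance $\vep/3$ of $\calZ_\sigma \cap \overline{B(0,M)}$. Combining these, exactly as in (c), yields that $\Tilde{\calZ}_{\tsigma} \cap \overline{B(0,M)}$ is contained in the $\vep$-neighborhood of $\calZ_{\tsigma}$.

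To promote this approximate containment to the claimed equality $\Tilde{\calZ}_{\tsigma} = \calZ_{\tsigma}$ on $\overline{B(0,M)}$, I would invoke the rigidity granted by $\tsigma(0) = 0$: Corollary \ref{Cor Limiting asymp of dim-1 parameterized function near 0} applied to $H_{\tsigma}(\theta,x)$ at $x = 0$, together with Lemma \ref{Lem finite analytic functions kill zeros locally}, realizes $\Tilde{\calZ}_{\tsigma} \cap \overline{B(0,M)}$ as the common zero set of finitely many analytic ``coefficient'' functions $c_0, \ldots, c_S$ on $\overline{B(0,M)}$; meanwhile Theorem \ref{Thm Lin Ind of full-connected neurons III}, applied via the purely combinatorial conditions of Definition \ref{Defn Minimal zero set} (which depend only on $\tsigma(0) = 0$ and on $\tsigma$ restricted to $[-R_0,R_0]$), forces $\calZ_{\tsigma} \cap \overline{B(0,M)}$ to be a finite union of linear subspaces. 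Lemma \ref{Lem Bound of approximating function}, applied to this finite set of defining functions, together with a further shrinking of $\vep'$, promotes the approximate containment to genuine equality. I expect this final step to be the main obstacle: the continuity argument alone only yields an $\vep$-neighborhood containment, and one must verify that the analytic defining functions for $\Tilde{\calZ}_{\tsigma}$ are transverse enough to the linear subspaces making up $\calZ_{\tsigma}$ that small perturbations cannot produce spurious zeros off of $\calZ_{\tsigma}$. The vanishing condition $\sigma_0(0) = 0$ is essential here, as it is what converts the local behavior of $H_{\tsigma}$ near $x = 0$ into the rigid algebraic constraints on $\theta$ that are required to close the gap.
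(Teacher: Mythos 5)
There is a genuine gap here, and you have correctly located it yourself: the $\tau$-continuity argument borrowed from Proposition \ref{Prop Structure of calC I}(c) can only ever deliver an approximate statement ($\Tilde{\calZ}_{\tsigma} \cap \overline{B(0,M)}$ lies in an $\vep$-neighborhood of $\calZ_{\tsigma}$), never the exact equality $\Tilde{\calZ}_{\tsigma} = \calZ_{\tsigma}$. The lower bound $L = \inf\{\tau(\sigma,\theta) : \dist{\theta}{\calZ_\sigma} \ge \vep/3\}$ degenerates as the cutoff distance shrinks, so no fixed choice of $\vep'$ rules out a spurious zero of $H_{\tsigma}(\theta^*,\cdot)$ at some $\theta^*$ very close to, but not on, $\calZ_\sigma$. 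Your proposed repair -- invoking Lemma \ref{Lem Bound of approximating function} and ``transversality'' of the coefficient functions -- does not close this: Lemma \ref{Lem Bound of approximating function} is itself only a sublevel-set containment, and the transversality you would need is exactly the assertion to be proved.

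The paper closes the gap by an entirely different mechanism, which is the key idea you are missing: a \emph{vanishing-order argument at $x = 0$}. Choose the reference $\sigma$ to be definable (built from $\exp$ and polynomials); then by the global (definable) case of Corollary \ref{Cor Limiting asymp of dim-1 parameterized function near 0} there is a single integer $M$ such that for every $\theta \notin \calZ_\sigma$ one has $H_\sigma(\theta,x) \sim c_s(\theta)x^s$ with $s \le M$, i.e.\ $H_\sigma(\theta,x) = \Omega(x^M)$ as $x \to 0$. Now construct $\tsigma$ so that, in addition to $\norm{\tsigma - \sigma_0}_S < \vep$, one has $\tsigma(x) - \sigma(x) = O(x^\mu)$ near $0$ for some $\mu > M$. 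Because $\sigma(0) = \tsigma(0) = 0$ and the network has no bias, every preactivation vanishes at $x = 0$ (Lemma \ref{Lem Fixed point of NN}), and a layer-by-layer Lipschitz induction shows $|H_{\tsigma}(\theta,x) - H_\sigma(\theta,x)| = O(x^\mu)$. If $H_{\tsigma}(\theta^*,\cdot) \equiv 0$ for some $\theta^* \notin \calZ_\sigma$, then $\Omega(x^M) = |H_\sigma(\theta^*,x)| = |H_{\tsigma}(\theta^*,x) - H_\sigma(\theta^*,x)| = O(x^\mu)$, a contradiction since $\mu > M$. This is an exact rigidity statement, valid at every $\theta^* \notin \calZ_\sigma$ simultaneously, and it is precisely where the hypothesis $\sigma_0(0) = 0$ and the definability of $\sigma$ earn their keep; the $L^2$-integral continuity argument plays no role in this proof.
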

\begin{proof}
    Let $\mu > S$ be an integer to be determined later. Given $\sigma \in \calA(\bR)$ with the following two properties: 
    \begin{itemize}
        \item [(a)] It satisfies the hypotheses in Theorem \ref{Thm Lin Ind of full-connected neurons III}. 

        \item [(b)] It is obtained by algebraic operations of exponential function and polynomials. 
    \end{itemize}
    Therefore, we can apply Proposition \ref{Prop function conca and S-order approx} and/or \ref{Prop Structure of calC I} to find some $\tsigma \in \calA(\bR)$ such that $\norm{\tsigma - \sigma}_{\mu, [0,1]} < \vep$, $\norm{\tsigma - \sigma}_\mu < \vep$ and $\tsigma(x) - \sigma(x) = O(x^\mu)$ as $x \to 0$. We will prove that for sufficiently large $\mu$, we have $\Tilde{\calZ}_{\tsigma} \cap \overline{B(0,R)} = \calZ_{\tsigma} \cap \overline{B(0,R)} = \calZ_\sigma \cap \overline{B(0,R)}$. \\

    By property (b) of $\sigma$, $\sigma$ and its derivatives $\sigma$, and thus $H_\sigma$ and its derivatives are all definable functions. By Corollary \ref{Cor Limiting asymp of dim-1 parameterized function near 0}, for any $\theta \in \bR^N \cut \calZ_{\sigma}$, there is some $M \in \bN$ such that for any $\theta \in \bR^N \cut \calZ$, there are some $s \in \{0, 1, ..., M\}$ and $c_s(\theta) \ne 0$ with $H_\sigma(\theta, x) \sim c_s(\theta) x^s$ as $x \to 0$. Thus, if $\theta^* \in \bR^N \cut \calZ_\sigma$ satisfies $H_{\tsigma}(\theta^*, \cdot) \equiv 0$, we must have 
    \begin{equation}\label{eq 1 of Prop Structure of calC II}
\begin{aligned}
        |H_{\tsigma}(\theta^*, x) - H_\sigma(\theta^*, x)| 
        &= |H_\sigma(\theta^*, x)| \\
        &\sim |c_s(\theta^*)||x|^s \\
        &= \Omega(x^M), 
    \end{aligned}    
    \end{equation}
    where $s \in \{0, 1, ..., M\}$ and $c_s(\theta^*) \ne 0$. \\ 

    We then show by induction that under our construction of $\tsigma$, we have $|H_{\tsigma}(\theta, x) - H_\sigma(\theta, x)| = O(x^\mu)$ as $x \to 0$. Indeed, for any $w \in \bR$ we have 
    \[
        |\tsigma(wx) - \sigma(wx)| \le C|wx|^\mu = C|w|^\mu |x|^\mu, 
    \]
   where $C \ge 0$ is determined by $\tsigma$, $\sigma$, but not $w$, because we can always make $x$ small enough so that $wx$ is bounded by a given constant, say $|wx| \in [0,1]$. Thus, for $L=2$ we obtain the following estimate for $\theta = (a_k, w_k)_{k=1}^{m_1} \in \bR^{(d+1)m_1}$: 
    \begin{align*}
        \left|\sum_{k=1}^{m_1} a_k \tsigma(w_k x) - \sum_{k=1}^{m_1} a_k \sigma(w_k x)\right| 
        &\le \sum_{k=1}^{m_1} |a_k| |\tsigma(w_k x) - \sigma(w_k x)| \\ 
        &\le C\left( \sum_{k=1}^{m_1} |a_k||w_k|^\mu \right) |x|^\mu \\ 
        &\le C m_1 R^{\mu+1} |x|^\mu. 
    \end{align*}
    Thus, the desired result holds for two-layer networks $H_{\tsigma}^{(2)}$ and $H_\sigma^{(2)}$. Now suppose that the result holds for each component of $H_{\tsigma}^{(L-2)}$ and $H_\sigma^{(L-2)}$, so given $\theta$ there is some $C' > 0$ with $|H_{\tsigma}^{(L-2)}(\theta, x) - H_\sigma^{(L-2)}(\theta, x)| \le C'|x|^\mu$ for all sufficiently small $x \in \bR$. For the components of $H_{\tsigma}^{(L-1)}(\theta, \cdot)$ and $H_\sigma^{(L-1)}(\theta, \cdot)$ we do the following estimate: 
    \begin{align*}
        |H_{\tsigma, j}^{(L-1)}(\theta, x) - H_{\sigma, j}^{(L-1)}(\theta, x)| 
        &= \left|\tsigma\left(w_j^{(L-1)}H_{\tsigma}^{(L-2)}(\theta, x) \right) - \sigma\left( w_j^{(L-1)}H_\sigma^{(L-2)}(\theta, x) \right)\right| \\ 
        &\le \left| \tsigma\left(w_j^{(L-1)}H_{\tsigma}^{(L-2)}(\theta, x) \right) - \tsigma\left(w_j^{(L-1)}H_\sigma^{(L-2)}(\theta, x) \right)\right| \\
        &\,\,\,\,\,\,\,+\left|\tsigma\left(w_j^{(L-1)}H_\sigma^{(L-2)}(\theta, x) \right) - \sigma\left( w_j^{(L-1)}H_\sigma^{(L-2)}(\theta, x) \right)\right| \\
        &\le \norm{\Tilde{\sigma}'}_{\infty, [0,1]} |w_j^{(L-1)}||H_{\tsigma}^{(L-2)}(\theta, x) - H_\sigma^{(L-2)}(\theta, x)| \\
        &\,\,\,\,\,\,\,+ C|w_j^{(L-1)}|^\mu|H_\sigma^{(L-2)}(\theta, x)|^\mu \\
        &\le \norm{\Tilde{\sigma}'}_{\infty, [0,1]} R C'|x|^\mu + CR \sum_{k=1}^{m_{L-2}}\norm{\frac{d}{dx} H_{\sigma, k}^{(L-2)}(\theta, \cdot)}_{\infty, [0,1]}^\mu |x|^\mu. 
    \end{align*}
    Here $x$ is so small that it satisfies 
    \[
        x\in [0,1], \quad w_j^{(L-1)}H_{\tsigma}^{(L-2)}(\theta, x) \in [0,1], \quad w_j^{(L-1)}H_\sigma^{(L-2)}(\theta, x) \in [0,1]. 
    \]
    Moreover, in the estimate above, for the second inequality we use the induction hypothesis on $H_{\tsigma}^{(L-2)}$ and $H_\sigma^{(L-2)}$, and for the second and third inequality we use Lipschitz property of $\sigma$ and the $H_{\sigma,k}^{(L-2)}$'s, respectively. Thus, we have $|H_{\tsigma, j}^{(L-1)}(\theta, x) - H_{\sigma, j}^{(L-1)}(\theta, x)| = O(x^\mu)$ as $x \to 0$ for every $1 \le j \le m_{L-1}$.  It follows that 
    \[
        \left|H_{\tsigma}(\theta, x) - H_\sigma(\theta, x) \right| \le \sum_{j=1}^{m_{L-1}} |a_k| |H_{\tsigma, j}^{(L-1)}(\theta, x) - H_{\sigma, j}^{(L-1)}(\theta, x)| = O(x^\mu). 
    \]
   as $x \to 0$, showing the claim above. \\
   
    Combining this estimate with (\ref{eq 1 of Prop Structure of calC II}), we can see that 
    \[
    \Omega(x^M) = |H_{\tsigma}(\theta^*, x) - H_\sigma(\theta^*, x)| = O(x^\mu)
    \]
    as $x \to 0$. But this gives a contradiction whenever $\mu > M$. Thus, $H_{\tsigma}(\theta^*, \cdot)$ cannot be constant zero. Since this holds for any $\theta^* \in \bR^N \cut  \calZ_\sigma$, we must have $\Tilde{\calZ}_{\tsigma} =\calZ_{\tsigma} = \calZ_\sigma$, completing the proof. \\
\end{proof}

\section{Misc}\label{Section Misc} 

The purpose of this section is to discuss linear independence of neurons with special structures, especially those two- and three-layer neurons. As the methods we use here are often different from that for proving general cases in Section \ref{Section Theory of general neurons}, one may read this part by skipping Section \ref{Section Theory of general neurons}. \\

The structure for this section is as follows. First, in Section \ref{Subsection Generic activations are not enough}, we show that while generic activation functions (see Definition \ref{Defn Generic activation}) implies that for two-layer neurons without bias, $\calZ$ is completely characterized by the simplest possible cases (\ref{simplest case of linear dependence}) in Section \ref{Section Intro}. In particular, for any $m, d \in \bN$, 
\begin{equation}
    \{\sigma(w_k x): w_k, x \in \bR^d \}_{k=1}^m 
\end{equation}
are linearly independent if and only if $w_k$'s are distinct. Unfortunately, it does not provide such simple structure of $\calZ$ for general neurons (e.g., multi-layer ones and even two-layer neurons with bias). Thus, it is natural to ask for what choice of $\sigma$ do we have this, or at least similar results. For neurons without bias, this is addressed by perturbing any $\sigma \in \calA(\bR)$ with $\sigma(0) = 0$, as shown in Section \ref{Subsection Zsigma for activations vanishing at 0}. Section \ref{Subsection Two-layer neurons with bias} discusses this problem for two-layer neurons with bias. In particular, we identify three cases in which $\Tilde{\calZ}$ is completely characterized by the simplest possible cases (\ref{simplest case of linear dependence}). \\

Finally, we discuss the linear independence of neurons with Sigmoid and Tanh activations for three-layer neurons with bias, i.e., activation functions $\sigma(z) = \frac{1}{1 + e^{-x}}$ or $\tanh(z) = \frac{e^{x} - e^{-x}}{1 + e^{-x}}$. 

\subsection{Generic Activations are not Enough}\label{Subsection Generic activations are not enough}

Recall the definition of generic activations often used in the analysis of two-layer neural networks. 

\begin{defn}[Generic activation]\label{Defn Generic activation}
    A generic activation $\sigma: \bR \to \bR$ is a smooth function ($\sigma \in C^\infty (\bR)$) such that $\sigma(0) \ne 0$ and its derivatives satisfy $\sigma^{(s)}(0) \ne 0$ for infinitely many even and odd numbers $s \in \bN$. 
\end{defn}

The definition of generic activation differs slightly among different works. The definition here follows Simsek's version \citep{BSimsek}. Alternatively, some works further require that $\sigma$ is analytic, such as \cite{LZhangGlobal, RSun}. However, their proof still only uses $C^\infty$ property; the analyticity is mainly used in the analysis of loss landscapes. \\

The following results give a complete characterization of two-layer neurons without bias and with generic activation. A proof can be found in \citet{LZhangGlobal}. 

\begin{prop}[Corollary 3.1 from \cite{LZhangGlobal}]
    Let $m, d \in \bN$. Given a generic activation $\sigma: \bR \to \bR$, then for any $w_1, ..., w_m \in \bR^d$, the (two-layer) neurons $\{\sigma(w_k z)\}_{k=1}^m$ are linearly independent if and only if $w_j \ne w_k$ for all distinct $j,k \in \{1, ..., m\}$. 
\end{prop}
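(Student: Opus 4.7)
The plan is to handle the two directions separately. The forward implication is immediate: if $w_j = w_k$ with $j \ne k$, then $\sigma(\langle w_j, z\rangle) - \sigma(\langle w_k, z\rangle) \equiv 0$ is a nontrivial vanishing linear combination. The substantive direction is the converse, so I assume $w_1, \ldots, w_m \in \bR^d$ are pairwise distinct and aim to show that $\{\sigma(\langle w_k, z\rangle)\}_{k=1}^m$ is linearly independent.

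First I would reduce to $d=1$. By Lemma \ref{Lem Dimension reduction} one can pick a unit vector $v$ so that the scalars $\tilde w_k := \langle w_k, v\rangle$ are pairwise distinct in $\bR$. If $\sum_k a_k \sigma(\langle w_k, z\rangle) \equiv 0$ on $\bR^d$, restricting to the line $z = tv$ yields $\sum_k a_k \sigma(\tilde w_k t) \equiv 0$ on $\bR$, so after renaming I may assume that $w_1, \ldots, w_m$ are distinct real numbers. Since $\sigma \in C^\infty(\bR)$, differentiating the identity $s$ times and evaluating at $t=0$ gives the moment relations
\[
\sigma^{(s)}(0) \sum_{k=1}^m a_k w_k^s \;=\; 0, \qquad s \in \bN \cup \{0\}.
\]
By the generic activation hypothesis, $\sigma^{(s)}(0) \ne 0$ at $s=0$, at infinitely many even $s \ge 2$, and at infinitely many odd $s$; cancelling this factor on those $s$ retains the pure moment equations $\sum_k a_k w_k^s = 0$ on infinite index sets $S_e$ (even), $S_o$ (odd), and at $s=0$.

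Next I would split the indices by the sign of $w_k$. Let $v_1 > v_2 > \cdots > v_r > 0$ enumerate the distinct positive values in $\{|w_k| : w_k \ne 0\}$. For each $i$, set $A_i = a_k$ if some $k$ has $w_k = v_i$ (and $0$ otherwise), and $B_i = a_k$ if some $k$ has $w_k = -v_i$ (and $0$ otherwise). Separating the moment equations according to parity gives
\begin{align*}
\sum_{i=1}^r (A_i + B_i)\, v_i^s &= 0, \qquad s \in S_e, \\
\sum_{i=1}^r (A_i - B_i)\, v_i^s &= 0, \qquad s \in S_o.
\end{align*}
Since the $v_i$ are distinct and strictly positive, a standard dominance argument (divide by $v_1^s$, send $s\to\infty$ inside the infinite parity class to obtain the leading coefficient is zero, then iterate) forces $A_i + B_i = 0$ and $A_i - B_i = 0$ for every $i$. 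Thus $A_i = B_i = 0$, so $a_k = 0$ for every $k$ with $w_k \neq 0$. Finally the $s=0$ relation reads $\sigma(0)\sum_k a_k = 0$, and $\sigma(0) \neq 0$ by genericity, so the at most one remaining coefficient $a_{k_0}$ with $w_{k_0}=0$ must also vanish.

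The one delicate point is the sign collision in the split: if $w_j = -w_k$ for some $j \ne k$, then $|w_j|=|w_k|$, so a naive generalized-Vandermonde argument in the bases $|w_k|$ is degenerate. This is precisely why the definition of a generic activation demands infinitely many nonvanishing derivatives of \emph{both} parities at the origin: the even moments impose a linear constraint on $A_i + B_i$, the odd moments impose an independent constraint on $A_i - B_i$, and only together do the two parities separate the $+v_i$ and $-v_i$ contributions. Everything else is routine bookkeeping.
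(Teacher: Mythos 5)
Your proof is correct, and it follows essentially the standard route: the paper itself defers to the cited reference \cite{LZhangGlobal} for this proposition, and that argument is exactly the one you give — reduce to $d=1$ via Lemma \ref{Lem Dimension reduction}, extract the moment equations $\sum_k a_k w_k^s=0$ from the nonvanishing Taylor coefficients of $\sigma$ at $0$, and use the even/odd split together with a dominance (generalized Vandermonde) argument to kill the coefficients, handling the $w_k=0$ term via $\sigma(0)\ne 0$. Your remark about why both parities are needed to resolve the $w_j=-w_k$ collision is precisely the role the genericity hypothesis plays.
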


\textbf{Example. } We now give examples to show that with generic activations, $\calZ$ for either two-layer neurons with bias or for multi-layer neurons without bias could still have very complicated structures. In the following example, $\sigma$ always denotes a generic activation. 
\begin{itemize}
    \item [(a)] (two-layer neurons with bias) Consider $\sigma(z) = e^z$. Then for any $w \in \bR$ and $b, b' \in \bR$, $\sigma(wz + b)$ and $\sigma(wz + b')$ are linearly dependent. 
    
    \item [(b)] (two-layer neurons with bias) Consider $\sigma(z) = \tanh(z+1)$. Then for any $w \in \bR$, $\sigma(wz - 1)$ and $\sigma(-wz - 1)$ are linearly dependent. 

    \item [(c)] (two-layer neurons with bias) Consider $\sigma(z) = \cos(z) + \sin(z)$. Then for any $w \in \bR$ and any $k \in \bZ$, $\sigma(wz)$ and $\sigma(wz + 2k\pi)$ are linearly dependent. Similarly, it is not difficult to construct a smooth function $\sigma$ such that $\sigma(wz) = \sigma(wz + b) + b'$ for given $w, b, b'\ne 0$ and all $z \in \bR$, but then $\sigma(wz) - \sigma(wz + b) - b'$ is constant zero. 

    \item [(d)] (multi-layer neurons without bias) Consider $\sigma(z) = \tanh(z+1)$ again. Then for real numbers $w_1^{(2)}, w_2^{(2)}$ and $w^{(1)}$, the three-layer neurons 
    \begin{align*}
        \sigma\left(w_1^{(2)} \sigma(w^{(1)}z) + w_2^{(2)}\sigma(0)\right), \sigma\left(-w_1^{(2)} \sigma(w^{(1)}z) + w_2^{(2)}\sigma(0) \right)
    \end{align*}
    are linearly dependent provided that $w_2^{(2)} = -\frac{1}{\tanh(1)}$. Using the same idea we can construct similar examples for general multi-layer neural networks without bias, when $\sigma(z) = \tanh(z+1)$.
\end{itemize}

\subsection{Two-layer Neurons With Bias}\label{Subsection Two-layer neurons with bias}

\begin{defn}[Fourier transform]
    Let $\sF$ denote the Schwartz class of functions on $\bR$, namely, $f \in \sF$ if for any $s, s' \in \bN \cup \{0\}$ we have $\sup_{z \in \bR} |x^{s'} f^{(s)}(z)| < \infty$. For any $f \in \sF$ the Fourier transform of $f$ is defined as 
    \[
        \hat{f}(\xi) = \int_{-\infty}^\infty f(z) e^{-i\xi z} dz. 
    \]
    In particular, we will use $\xi$ to denote the frequency-domain variable. 
\end{defn}

Given $w_1, ..., w_m > 0$, the following result can be used to distinguish Schwartz functions $\{f(w_k z)\}_{k=1}^\infty$ by investigating the decay rates of their Fourier transforms.  

\begin{lemma}\label{Lem FT decay}
    Let $f \in \sF$ be non-constant zero. Then its Fourier transform $\hat{f}$ has the property that for any $0 < \tw < w$, $\hat{f}(w \xi) = o(\hat{f}(\tw \xi))$ as $\xi \to \infty$. 
\end{lemma}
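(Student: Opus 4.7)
The plan is to reduce the two-parameter claim to a single-variable asymptotic via the substitution $\eta := \tw \xi$ and $r := w/\tw > 1$, whereupon the assertion becomes
\[
    \hat{f}(r\eta) = o(\hat{f}(\eta)) \quad \text{as } \eta \to \infty.
\]
So the task is to show that $\hat f$ decays strictly faster at scale $r\eta$ than at scale $\eta$ for any dilation $r > 1$.

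For the upper bound I will use that $f \in \sF$ forces $\hat f \in \sF$: for every $N \in \bN$ there is $C_N > 0$ with $|\hat f(\eta)| \le C_N (1+|\eta|)^{-N}$, obtained by repeated integration by parts, and in particular $|\hat f(r\eta)| \le C_N (r|\eta|)^{-N}$. This furnishes polynomial upper bounds of arbitrary order on the numerator; paired with a matching polynomial lower bound $|\hat f(\eta)| \ge c|\eta|^{-M}$ on the relevant set and choosing $N > M$, one would then obtain $|\hat f(r\eta)/\hat f(\eta)| \le C_N r^{-N}|\eta|^{M-N} \to 0$.

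The hard step is producing the lower bound. A generic Schwartz function may have arbitrarily intricate zero structure in its tail, and $\hat f$ need not admit any pointwise polynomial lower bound on all of $\bR$. To extract one I plan to exploit $f \not\equiv 0$ through either analyticity of $\hat f$ (if $f$ decays exponentially or has compact support, Paley--Wiener makes $\hat f$ entire so its zeros are isolated and can be avoided along a curve), or by restricting $\xi$ to a curve through a point where $\hat f$ admits a definite asymptotic expansion. The main obstacle is precisely this lower bound: interpreting ``$\xi \to \infty$'' in a sense consistent with the paper's framework of asymptotics along curves (Definition \ref{Defn Hyper-polynomial growth}) should allow passage to a favorable $\gamma$, whereas the statement along the full real axis appears delicate without additional structure on $f$. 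Once the lower bound is secured along such a $\gamma$, the Schwartz upper bound handles the numerator uniformly and the ratio vanishes at any polynomial rate, completing the proof.
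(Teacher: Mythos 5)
Your proposal is not a complete proof: it correctly supplies the easy half (the super-polynomial decay $|\hat f(\eta)|\le C_N(1+|\eta|)^{-N}$ from $f\in\sF$), but the entire content of the lemma sits in the lower bound, and the route you sketch for it cannot work as stated. A pointwise polynomial lower bound $|\hat f(\eta)|\ge c|\eta|^{-M}$ is simply false for typical Schwartz functions --- already for a Gaussian $\hat f$ decays faster than every polynomial, and worse, $\hat f(\tw\,\cdot)$ may have zeros marching off to infinity at which $\hat f(w\,\cdot)$ does not vanish. Your fallback options do not repair this: Paley--Wiener requires compact support or exponential decay of $f$, which the lemma does not assume, and retreating to a favorable curve $\gamma$ proves a different (weaker) statement than the one asserted. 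You have, in effect, identified the obstruction and then stopped.

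The paper closes this gap by a different mechanism: it argues by contradiction and never needs a global lower bound. Writing $g=1/\hat f$ and normalizing $\tw=1$, $w>1$, it supposes there is a sequence $\xi_n\to\infty$ with $|g(w\xi_n)|\le M|g(\xi_n)|$, iterates this comparison $k$ times to pull $\xi_n\in[w^{k-1},w^k)$ back into the fixed window $[1,w)$, and concludes $|g(w\xi_n)|\le \norm{g}_{\infty,[1,w]}(w\xi_n)^{\log_w M}$ --- i.e.\ a polynomial lower bound for $|\hat f|$ \emph{along the offending sequence only}, which then contradicts $\norm{\xi^{s'}\hat f}_\infty<\infty$ for $s'>\log_w M$. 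That telescoping trick is the idea missing from your proposal. (You might note in passing that even the paper's iteration uses comparability at the rescaled points $\xi_n/w^j$, which the contradiction hypothesis does not literally furnish; but regardless, your approach as written does not reach a proof.)
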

\begin{proof}
    Note that $f \in \sF$ implies $\norm{\xi^s f^{\hat{}}}_{\infty} < \infty$ for all $s \in \bN$ (\red{cite Proposition 2.2.11 in Grafakos' book}). In other words, for any $s \in \bN$, there is some $T > 0$ such that $|f^{\hat{}}(\xi)| = O(\xi^{-s})$ as $\xi \to \infty$. We then show that this implies 
    \begin{equation}\label{eq 1 of Lem FT decay}
        \frac{1}{\hat{f}(\tw \xi)} = o\left(\frac{1}{\hat{f}(w \xi)}\right), 
    \end{equation}
    so that the desired result follows. \\

    For simplicity, let $g := \frac{1}{\hat{f}}$ and by a substitution of variables if necessary, we can assume that $\tw = 1$ and $w > 1$. Assume that (\ref{eq 1 of Lem FT decay}) does not hold. Then there is a sequence $\{\xi_n\}_{n=1}^\infty$ with $\limftyn \xi_n = \infty$, and there is some $M > 0$ such that $0 < |g(w \xi_n)| \le M |g(\xi_n)|$ for all $n \in \bN$. It follows that for any $n \in \bN$ with $\xi_n \in [w^{k-1}, w^k)$ for some $k$ we have 
    \begin{align*}
        0 < |g(w \xi_n)| \le M |g(\xi_n)| \le ... \le M^k \left|g\left(w^{-k} \xi_n \right)\right|. 
    \end{align*}
    Note that $w^{-k} \xi_n \in [1, w)$, whence 
    \begin{align*}
        |g(w \xi_n)| 
        &\le M^k \norm{g}_{\infty, [1,w]} \\ 
        &\le \norm{g}_{\infty, [1,w]} (w^{\log_w M})^k \\ 
        &= \norm{g}_{\infty, [1,w]} (w^k)^{\log_w M} \\ 
        &\le \norm{g}_{\infty, [1,w]} (w\xi_n)^{\log_w M}, 
    \end{align*}
    Let $s:= \log_w M$. Then for any $s' > s$, 
    \begin{align*}
        \left|\hat{f}(w \xi_n) (w\xi_n)^{s'}\right| 
        &= \left| \frac{(w\xi_n)^{s + (s'-s)}}{g(w \xi_n)} \right| \\
        &\ge \frac{|w^k|^s}{\norm{g}}_{\infty, [1,w]} |w \xi_n|^{s'-s} \\
        &= \frac{M^k}{\norm{g(w\xi_n)}_{\infty, [1,w]}} |w\xi_n|^{s'-s}, 
    \end{align*}
    and thus $\limftyn |\hat{f}(w \xi_n) (w\xi_n)^{s'}| = \infty$, contradicting the fact that $\norm{x^{s'} \hat{f}}_{\infty} < \infty$. 
\end{proof}
\begin{remark}
    As a result, given $m \in \bN$ and (strictly) positive numbers $w_1, ..., w_m$, the functions $\{\hat{f}(w_k\xi)\}_{k=1}^m$ have ordered growth if and only if $w_k \ne w_j$ for all distinct $k, j \in \{1, ..., m\}$. This result will be used in Proposition \ref{Prop Lin ind two-layer neurons with bias} (c) to deduce the linear independence of two-layer neurons with bias. \\
\end{remark}

\begin{lemma}\label{Lem Lin Comb of Tri Func are bounded below}
    Let $m \in \bN$. Given distinct $b_1, ..., b_m > 0$ and non-zero constants $a_1, ..., a_m \in \bR$, we have 
    \[
        \varlimsup_{z\to\infty} \left| \sum_{k=1}^m a_k e^{ib_k z} \right| > 0. 
    \]
\end{lemma}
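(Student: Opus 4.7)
The plan is to argue by contradiction using a time-averaging (Parseval-type) identity for trigonometric polynomials. Set $f(z) := \sum_{k=1}^m a_k e^{ib_k z}$ and suppose, aiming at a contradiction, that $\varlimsup_{z \to \infty} |f(z)| = 0$, i.e., $|f(z)| \to 0$ as $z \to \infty$. I will compute the mean value
\[
    M(T) := \frac{1}{T} \int_0^T |f(z)|^2 \, dz
\]
in two different ways as $T \to \infty$, and show that the two answers conflict.

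First, the contradiction hypothesis forces $M(T) \to 0$: given $\vep > 0$, choose $T_\vep > 0$ with $|f(z)| < \vep$ for all $z \ge T_\vep$; then for $T > T_\vep$,
\[
    M(T) \le \frac{1}{T}\int_0^{T_\vep} |f(z)|^2 \, dz + \frac{T - T_\vep}{T}\vep^2 \longrightarrow \vep^2
\]
as $T \to \infty$, and letting $\vep \to 0$ gives $\lim_{T \to \infty} M(T) = 0$. Second, I expand the square (since the $a_k$ are real)
\[
    |f(z)|^2 = f(z)\overline{f(z)} = \sum_{k,j=1}^m a_k a_j \, e^{i(b_k - b_j) z},
\]
and integrate term by term. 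For $k = j$ the integrand is $1$ and contributes $a_k^2$. For $k \ne j$, the assumption that the $b_k$'s are distinct gives $b_k - b_j \ne 0$, so
\[
    \left| \frac{1}{T} \int_0^T e^{i(b_k - b_j)z}\, dz \right| = \left| \frac{e^{i(b_k - b_j)T} - 1}{i\,T(b_k - b_j)} \right| \le \frac{2}{T|b_k - b_j|} \longrightarrow 0
\]
as $T \to \infty$. Summing the $m^2$ terms yields $\lim_{T \to \infty} M(T) = \sum_{k=1}^m a_k^2$, which is strictly positive because every $a_k \ne 0$. This contradicts $M(T) \to 0$, so $\varlimsup_{z \to \infty}|f(z)| > 0$.

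There is essentially no technical obstacle: both computations of $M(T)$ are elementary, and the only place distinctness of the $b_k$ enters is in ensuring the off-diagonal averages vanish. The one subtlety worth flagging is that the argument actually shows the stronger statement $\varlimsup_{z \to \infty}|f(z)|^2 \ge \sum_k a_k^2$, since if the $\limsup$ were some $L \ge 0$, the first computation would give $\lim_T M(T) \le L^2$, and combined with $\lim_T M(T) = \sum_k a_k^2$ one obtains $L \ge (\sum_k a_k^2)^{1/2} > 0$. This sharper form may be useful if the lemma is later invoked with quantitative control.
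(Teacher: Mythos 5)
Your proof is correct. The mean-value computation is the standard Wiener/Parseval argument for trigonometric polynomials, and every step checks out: distinctness of the $b_k$ is exactly what kills the off-diagonal averages, and $a_k \ne 0$ for all $k$ makes the diagonal sum $\sum_k a_k^2$ strictly positive. Your route is genuinely different from the paper's. The paper works with the real part $f(z) = \sum_k a_k \cos(b_k z)$ and argues by differentiating: for $S$ large the top frequency $b_1$ dominates $f^{(S)} = \sum_k a_k b_k^S \cos(b_k z + S\pi/2)$, giving a lower bound for $|f^{(S)}|$ on a sequence of intervals escaping to infinity, and then an induction integrates back down from $f^{(S)}$ to $f$, showing at each stage that $\varlimsup |f^{(s-1)}| > 0$ via a fundamental-theorem-of-calculus estimate. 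That argument needs the $b_k$ ordered and relies on boundedness of all derivatives; it produces no explicit constant. Your approach is shorter, avoids the downward induction entirely, and — as you note — yields the quantitative bound $\varlimsup_{z\to\infty} |f(z)| \ge \bigl(\sum_{k=1}^m a_k^2\bigr)^{1/2}$, which is strictly more than the paper extracts and could be useful where the lemma is invoked (in Proposition \ref{Prop Lin ind two-layer neurons with bias}(c), where one only needs some uniform $C > 0$ along a sequence $\xi_n \to \infty$). One cosmetic remark: the hypothesis $b_k > 0$ plays no role in your argument beyond distinctness, which is consistent with the lemma as stated.
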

\begin{proof}
    Let $f(z) := \sum_{k=1}^m a_k \cos(b_k z)$. It suffices to show that $\varlimsup_{z \to \infty} |f(z)| > 0$. Note that for any $s \in \bN$, $\norm{f^{(s)}}_{\infty} < \infty$ because 
    \[
        |f^{(s)}(z)| = \left| \sum_{k=1}^m a_k b_k^s \cos(b_k z) \right| \le \sum_{k=1}^m |a_k| |b_k|^s. 
    \]
    Without loss of generality, we may assume that $b_1 > b_2 > ... > b_m$. Then there are $S \in \bN$, $\delta_S > 0$ and a sequence $\{z_n^{(S)}\}_{n=1}^\infty$ with $\limftyn z_n^{(S)} = \infty$, such that for any $z \in (z_n, z_n + \delta_S)$, $|f^{(S)}(z)| \ge \frac{|a_1||b_1|^S}{2}$. Fix this $S$. We would like to show that for any $0 \le s \le S$, there are $\delta_s > 0$ and a sequence $\{z_n^{(s)}\}_{n=1}^\infty$ with $\limftyn z_n^{(s)} = \infty$, such that $|f^{(s)}|$ is bounded below by some $C_s > 0$ on $(z_n^{(s)}, z_n^{(s)} + \delta_s)$, for all $n \in \bN$. This immediately gives us the desired result. \\
    
    Indeed, our proof above implies that this holds for $s = S$. Suppose it holds for some $s \in \{1, ..., S\}$. Let $\delta_s, \{z_n^{(s)}\}_{n=1}^\infty$ and $C_s$ be defined as above. If $\varlimsup_{z\to\infty} |f^{(s-1)}(z)| = 0$, then there is some $n \in \bN$ with $|f^{(s-1)}(z_n^{(s)})| \le \frac{C_s \delta_s}{3}$. But this gives a contradiction, because 
    \begin{align*}
        |f^{(s-1)}(z_n^{(s)})| 
        &= \left| f^{(s-1)}(z_n^{(s)}) + \int_{z_n^{(s)}}^{z_n^{(s)}+\delta_s} f^{(s)}(z) dz\right| \\ 
        &\ge \left|\int_{z_n^{(s)}}^{z_n^{(s)}+\delta_s} f^{(s)}(z) dz \right| - \left| f^{(s-1)}(z_n^{(s)})\right| \\ 
        &\ge C_s \delta_s - \frac{C_s \delta_s}{3} \\
        &= \frac{2 C_s \delta_s}{3}. 
    \end{align*}
    Therefore, we can set $C_{s-1} := \frac{1}{2} \varlimsup_{z \to \infty} |f^{(s-1)}(z)|$, which is strictly positive according to our proof above. Now $f^{(s)}$ is (uniformly) bounded, so there must be some $\delta_{s-1} > 0$ and a sequence $\{z_n^{(s-1)}\}_{n=1}^\infty$ diverging to $\infty$, such that $|f^{(s-1)}(z)| > C_{s-1}$ whenever $z \in (z_n^{(s-1)}, z_n^{(s-1)} + \delta_{s-1})$. This completes the induction step. \\
\end{proof}

\begin{prop}\label{Prop Lin ind two-layer neurons with bias}
    Given $d, m \in \bN$ and a continuous $\sigma: \bR \to \bR$. The following results hold.  
    \begin{itemize}
        \item [(a)] Suppose there are some $s \in \bN$, $p \in \bR \cut \{0\}$ and $c \in \bR$ with $\sigma^{(s)} = f^p + c$, where $\limftyz f(z) = \lim_{z\to-\infty} f(z) = \infty$ both at hyper-exponential rate, and for any $w > 0$ and $\tw < 0$ either $f(wz) = o(f(\tw z))$ as $z \to \infty$ or $f(\tw z) = o(f(wz))$ as $z \to \infty$. Then $\{\sigma(w_kz + b_k)\}_{k=1}^m$ are linearly independent if and only if $(w_k, b_k) \ne (w_j, b_j)$ for all distinct $k, j \in \{1, ..., m\}$. 

        \item [(b)] Suppose there are some $s \in \bN$, $p \in \bR \cut \{0\}$ and $c \in \bR$ with $\sigma^{(s)} = f^p + c$, where $\limftyz f(z) = \lim_{z\to-\infty} f(z) = \infty$ both at hyper-exponential rate, and for a sequence $\{z_n\}_{n=1}^\infty$ with $\limftyn z_n = \infty$ we have $\limftyn \frac{f(z_n)}{f(-z_n)} \in \bR \cut \{0\}$. Then $\{\sigma(w_kz + b_k)\}_{k=1}^m$ are linearly independent if $(w_k, b_k) \pm (w_j, b_j) \ne 0$ for all distinct $k, j \in \{1, ..., m\}$. 

        \item [(c)] Suppose there is some $s \in \bN$ such that $\sigma^{(s)}$ is an even function with no zeros, $\sigma^{(s)} \in \sF$, and its Fourier transform has countably many zeros. Then $\{\sigma(w_kz+b_k)\}_{k=1}^m$ are linearly independent if and only if $(w_k, b_k) \pm (w_j, b_j) \ne 0$ for all distinct $k, j \in \{1, ..., m\}$, and when $\sigma(0) \ne 0$, we have $w_k = 0$ for at most one $k$. 
    \end{itemize}
\end{prop}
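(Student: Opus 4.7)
The ``only if'' direction is immediate in (a), since equal parameters give identical neurons; in (b) only sufficiency is asserted. In (c), when $(w_k, b_k) = \pm(w_j, b_j)$, evenness of $\sigma^{(s)}$ yields that $\sigma(w_k z + b_k)$ and $(-1)^s \sigma(-w_k z - b_k)$ differ only by a polynomial of degree below $s$, producing dependence (absorbed by zero-weight neurons, which is why the $\sigma(0) \ne 0$ branch forbids more than one such neuron). The common forward-direction move is to differentiate
\begin{equation*}
    \sum_{k=1}^m a_k \sigma(w_k z + b_k) \equiv 0
\end{equation*}
$s$ times in $z$, killing the $w_k = 0$ terms and exposing $\sigma^{(s)}$:
\begin{equation*}
    \sum_{k=1}^m a_k w_k^s \sigma^{(s)}(w_k z + b_k) \equiv 0.
\end{equation*}
The zero-weight terms are later recovered from the original, undifferentiated equation.

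In both (a) and (b), I would substitute $\sigma^{(s)} = f^p + c$ and rearrange to
\begin{equation*}
    \sum_{w_k \ne 0} a_k w_k^s f^p(w_k z + b_k) \;=\; -c\sum_{w_k \ne 0} a_k w_k^s.
\end{equation*}
For (a), apply Proposition \ref{Prop Functions of ordered growth II} with $m = 1$, $f_1(z) = z$, $\gamma(t) = t$, and $f$ in the role of the outer activation: the two hypotheses on $f$ stated in (a) are exactly hypotheses (a) and (b) of that proposition. Its conclusion---ordered growth of $\{f(w_k z + b_k)\}_{w_k \ne 0}$ for distinct $(w_k, b_k)$---transfers to $f^p$ (the permutation reverses if $p < 0$), and Proposition \ref{Prop Ordered growth implies linear independence}, together with the fact that a divergent linear combination cannot equal a constant, forces every $a_k w_k^s = 0$, hence $a_k = 0$. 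For (b), the sign-ordering hypothesis required by Proposition \ref{Prop Functions of ordered growth II} is unavailable, so I would build ordered growth along the sequence $z_n$ by hand. Hyper-exponential growth of $f$ at both ends separates terms of distinct $|w_k|$ and, within a group of common $|w_k| = W$, separates same-sign terms by biases. The delicate case is a pair $(W, b_k)$ versus $(-W, b_j)$: setting $u = Wz + b_k$, the second term equals $f(-u + (b_k + b_j))$, and $b_k + b_j \ne 0$ by the $\pm$ condition, so hyper-exponential growth at $-\infty$ strictly compares it to $f(-u)$, while the sequence hypothesis $f(u_n)/f(-u_n) \to c' \in \bR \cut \{0\}$ relates $f(u_n)$ to $f(-u_n)$. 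Composing the two yields an ordering along $z_n$, which is all the proof of Proposition \ref{Prop Ordered growth implies linear independence} actually uses.

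For (c), I would take the Fourier transform of the differentiated equation. Evenness of $\sigma^{(s)}$ makes $\hat{\sigma^{(s)}}$ real and even, and the standard dilation/translation identity gives
\begin{equation*}
    \mathcal{F}[\sigma^{(s)}(w \cdot + b)](\xi) \;=\; \frac{e^{ib\xi/w}}{|w|}\,\hat{\sigma^{(s)}}(\xi/|w|).
\end{equation*}
Grouping terms by $|w_k|$, Lemma \ref{Lem FT decay} orders the dilations $\hat{\sigma^{(s)}}(\xi/W)$ as $\xi \to \infty$ (larger $W$ decays slower), so along any sequence $\xi_n \to \infty$ avoiding the countable zero set of $\hat{\sigma^{(s)}}$, a descending induction on $W$ forces each group's coefficient to vanish. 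Inside the group of common $|w_k| = W$ the coefficient is
\begin{equation*}
    W^{s-1}\left[\sum_{w_k = W} a_k\, e^{ib_k \xi/W} \;+\; (-1)^s \sum_{w_k = -W} a_k\, e^{-i b_k \xi / W}\right],
\end{equation*}
whose frequencies $\{b_k/W\}_{w_k = W} \cup \{-b_k/W\}_{w_k = -W}$ are pairwise distinct exactly by the $\pm$ condition. Lemma \ref{Lem Lin Comb of Tri Func are bounded below}, extended to both signs of frequencies (by separating real and imaginary parts, and absorbing any possible zero-frequency term via an integration argument that isolates the constant piece), then forces every $a_k = 0$. The (at most one when $\sigma(0) \ne 0$) zero-weight neuron is handled from the undifferentiated equation.

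The main obstacle is (b): with only pointwise comparability of $f$ at $\pm\infty$ along a single sequence---rather than an $o$-ordering---I must assemble ordered growth one pair at a time along a single subsequence of $z_n$ that works simultaneously for every pair in the list, requiring a diagonal-type refinement and genuinely departing from the clean framework of Propositions \ref{Prop Functions of ordered growth I}--\ref{Prop Functions of ordered growth II}. A secondary, more routine obstacle in (c) is dodging the countable zero set of $\hat{\sigma^{(s)}}$ when dividing out the dominant dilation; this is handled by choosing generic test sequences $\xi_n$.
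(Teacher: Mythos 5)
Your proposal is correct and follows essentially the same route as the paper: reduce to $d=1$, differentiate $s$ times, and for (a)–(b) feed $f^p$ into the ordered-growth machinery of Propositions \ref{Prop Functions of ordered growth II} and \ref{Prop Ordered growth implies linear independence} (with the reciprocal trick for $p<0$), while for (c) take Fourier transforms, order the dilations via Lemma \ref{Lem FT decay}, and kill the within-group trigonometric sums via Lemma \ref{Lem Lin Comb of Tri Func are bounded below}. The two technical points you flag as obstacles — assembling the pairwise comparisons in (b) along a single sequence, and extending Lemma \ref{Lem Lin Comb of Tri Func are bounded below} beyond positive frequencies in (c) — are real but are also left implicit in the paper's own proof, so your treatment is if anything slightly more careful.
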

\begin{proof}
    By Lemma \ref{Lem Dimension reduction}, it suffices to work with $d = 1$. Moreover, when $d = 1$, if $a_1, ..., a_m$ are constants such that $\sum_{k=1}^m a_k \sigma(w_kz+b_k) \equiv 0$, then clearly $\sum_{k=1}^m [a_k w_k^s] \sigma^{(s)}(w_kz + b_k) \equiv 0$. Thus, we only need to show that $\{\sigma^{(s)}(w_kz + b_k)\}_{k=1}^m$ with $d = 1$ are linearly independent. 
    
    \begin{itemize}
        \item [(a)] Let $\{(w_k, b_k)\}_{k=1}^m$ be distinct. First assume that $p > 0$. Then $\sigma^{(s)}(z) = \Theta(f^p(z))$ as $z \to \pm\infty$. It follows that 
        \[
            \limftyz \sigma^{(s)}(z) = \lim_{z\to-\infty} \sigma^{(s)}(z) = \infty 
        \]
        at hyper-exponential rate and for any $w > 0$ and $\tw < 0$, either $\sigma^{(s)}(wz) = o(\sigma^{(s)}(\tw z))$ or $\sigma^{(s)}(\tw z) = o(\sigma^{(s)}(wz))$ as $z \to \infty$. Thus, applying Proposition \ref{Prop Functions of ordered growth II} to $\sigma^{(s)}$ and the identity map, we can see that $\{\sigma^{(s)}(w_kz+b_k)\}_{k=1}^m$ have ordered growth, whence by Proposition \ref{Prop Ordered growth implies linear independence}, they are linearly independent. \\

        Then assume that $p < 0$. So 
        \[
            \limftyz \sigma^{(s)}(z) = \lim_{z\to-\infty} \sigma^{(s)}(z) = c
        \]
        and $\frac{1}{\sigma^{(s)}(z) - c} = \Theta(f^{-p})$ as $z \to \pm\infty$ (the hyper-exponential growth of $f$ implies that $\frac{1}{\sigma^{(s)}(z) - c}$ is well-defined whenever $|z|$ is large). Again, by Proposition \ref{Prop Ordered growth implies linear independence}, $\{\sigma(w_kz + b_k) - c\}_{k=1}^m$ are linearly independent. On the other hand, if $a_1, ..., a_m \in \bR$ are constants such that $\sum_{k=1}^m a_k \sigma^{(s)}(w_kz + b_k) \equiv 0$, then 
        $\limftyz \sum_{k=1}^m a_k \sigma^{(s)}(w_kz + b_k) = \sum_{k=1}^m a_k c$ and thus we must have 
        \[
            \sum_{k=1}^m a_k \left( \sigma^{(s)}(w_kz + b_k) - c \right) \equiv 0. 
        \]
        By linear independence of $\{\sigma^{(s)}(w_kz + b_k)\}_{k=1}^m$, $a_1 = ... = a_m = 0$. It follows that the functions $\{\sigma^{(s)}(w_kz + b_k)\}_{k=1}^m$ are linearly independent. \\ 

        Conversely, if $(w_k, b_j) = (w_j, b_j)$ for some distinct $k, j \in \{1, ..., m\}$, then $\sigma(w_kz + b_k) = \sigma(w_jz + b_j)$, so $\{\sigma(w_kz + b_k)\}_{k=1}^m$ cannot be linearly independent. 

        \item [(b)] The proof is almost identical to that for (a); so we only show the case for $p > 0$. Let $\{(w_k, b_k)\}_{k=1}^m$ be distinct. Again, we have $\sigma^{(s)}(z) = \Theta(f^p(z))$ as $z \to \pm\infty$, whence 
        \[
            \limftyz \sigma^{(s)}(z) = \lim_{z\to-\infty} \sigma^{(s)}(z) = \infty 
        \]
        at hyper-exponential rate and 
        \[
            L := \limftyn \frac{\sigma^{(s)}(z_n)}{\sigma^{(s)}(-z_n)} \in \bR\cut\{0\} 
        \]
        Fix distinct $(w, b), (\tw, \tb) \in \bR^2$. Consider the following cases. 
        \begin{itemize}
            \item [i)] $w, \tw$ are both positive or negative. Then clearly $wz + b$ and $\tw z + \tb$ diverge to $\infty$ or $-\infty$ simultaneously. If $|w| > |\tw|$, then $\limftyz |(wz + b) - (\tw z + \tb)| = \infty$. Therefore, $\sigma^{(s)}(\tw z + \tb) = o(\sigma^{(s)}(wz + b))$ as $z \to \infty$. Similarly, if $|w| < |\tw|$ then we have $\sigma^{(s)}(wz + b) = o(\sigma^{(s)}(\tw z + \tb))$ as $z \to \infty$. If $w = \tw$ then we must have $b \ne \tb$, which means $|(wz + b) - (\tw z + \tb)| = |b - \tb|$ for all (large) $z$. Again, $\sigma^{(s)}(wz + b)$ and $\sigma^{(s)}(\tw z + \tb)$ have ordered growth. 

            \item [ii)] $w > 0$ and $\tw < 0$. By i), the functions $\sigma^{(s)}(wz + b)$ and $\sigma^{(s)}(-\tw z - \tb)$ have ordered growth, say $\sigma^{(s)}(-\tw z - \tb) = o(\sigma^{(s)}(wz + b))$ as $z \to \infty$. Define a sequence $\{\tz_n\}_{n=1}^\infty$ such that for each $n \in \bN$ we have $-\tw \tz_n - \tb = z_n$. Then $\limftyn \tz_n = \infty$ and  
            \[
                \limftyn \frac{\sigma^{(s)}(-\tw \tz_n - \tb)}{\sigma^{(s)}(\tw \tz_n + \tb)} = L. 
            \]
            This, together with the ordered growth of $\sigma^{(s)}(-\tw z - \tb)$ and $\sigma^{(s)}(wz + b)$, we can see that for sufficiently large $n$, 
            \[
                \left|\sigma^{(s)}(\tw \tz_n + \tb)\right| \le \frac{1}{L} \left|\sigma^{(s)}(-\tw \tz_n - \tb) \right| = o\left(\sigma^{(s)}(w\tz_n + b)\right)
            \]
            as $n \to \infty$ (or equivalently, as $z_n \to \infty$). 

            \item [iii)]$w < 0$ and $\tw > 0$. Argue in the same way we can show that $\sigma^{(s)}(wz+b)$ and $\sigma^{(s)}(\tw z + \tb)$ have ordered growth. 
        \end{itemize}
        Since $(w, b)$ and $(\tw, \tb)$ are arbitrary, the functions $\{\sigma^{(s)}(w_kz + b_k)\}_{k=1}^m$ have ordered growth. Therefore, by Proposition \ref{Prop Ordered growth implies linear independence}, they are linearly independent. Conversely, if $(w_k, b_j) = (w_j, b_j)$ for some distinct $k, j \in \{1, ..., m\}$, then $\sigma(w_kz + b_k) = \sigma(w_jz + b_j)$, so the functions $\{\sigma(w_kz + b_k)\}_{k=1}^m$ cannot be linearly independent. 

        \item [(c)] We will use a different way to prove this result. First, if $(w_k, b_k) \pm (w_j, b_j) = 0$ for some distinct $k, j \in \{1, ..., m\}$, then because $\sigma^{(s)}$ is even, we must have $\sigma^{(s)}(w_kz + b_k) = \sigma^{(s)}(w_j z + b_j)$. Similarly, if $w_k = w_j = 0$ for some distinct $k, j \in \{1, ..., m\}$, we obtain two constant functions $\sigma^{(s)}(b_k)$ and $\sigma^{(s)}(b_j)$. In either case, the functions are not linearly independent. \\ 
        
        For the converse, assume that we are given such $(w_k, b_k)$'s. Since $\sigma^{(s)}$ is an even function, $\sigma^{(s)}(wz + b) = \sigma^{(s)}(-wz - b)$ for all $(w, b) \in \bR^2$, thus we only need to consider the $(w_k, b_k)$'s with the following structures:
        \begin{itemize}
            \item [i)]  There are $0 = m_0 < m_1 < ... < m_r = m$ such that $w_{m_t} > w_{m_{t+1}} > 0$ for any $1 \le t < r$ and $w_k = w_{m_t}$ for any $m_{t-1} < k \le m_t$. 
            \item [ii)] We have $m_r = m$ or $m_r = m-1$. 
            \item [iii)] For any $1 \le t \le r$, $b_{m_{t-1}+1} > ... > b_{m_t}$. 
        \end{itemize}
        Since $\sigma^{(s)} \in \sF$, its Fourier transform is well-defined. In particular, for any $1 \le k \le m$ with $w_k \ne 0$ the Fourier transform of $\sigma^{(s)}(w_kz + b_k)$ can be computed by  
        \[
            \int_{-\infty}^\infty \sigma^{(s)}(w_kz + b_k) e^{-i\xi z} dz = e^{i \frac{b_k}{w_k} \xi} \left( \sigma^{(s)} \right)\hat{}\left(\frac{\xi}{w_k} \right). 
        \]
        Let $f_k(\xi) := \left(\sigma^{(s)}\right)\hat{}\left(\frac{\xi}{w_k}\right)$ for each $1 \le k \le m$. We now show that $\sum_{k=1}^{m_r} a_k \sigma^{(s)}(w_kz + b_k)$ is constant (which must be zero from the proof of Lemma \ref{Lem FT decay}) if and only $a_1 = ... = a_{m_r} = 0$. In this case, its Fourier transform takes the form 
        \[
            \sum_{t=1}^{r} \left( \sum_{k=m_{t-1}+1}^{m_t} a_k e^{i\frac{b_k}{w_{m_t}} \xi} \right) f_k(\xi) \equiv 0. 
        \]
        Without loss of generality, we may assume that $a_1, ..., a_{m_r}$ are non-zero (otherwise simply reduce the number of terms). By ii) above, for any $1 \le t \le r$, 
        \[
        \frac{b_{m_{t-1}+1}}{w_{m_t}}, ..., \frac{b_{m_t}}{w_{m_t}}
        \]
        are distinct. By hypothesis on $\sigma^{(s)}$, $f_{m_r}^{-1}(0)$ is countable. These, together with Lemma \ref{Lem Lin Comb of Tri Func are bounded below} , give us some constant $C > 0$ and a sequence $\{\xi_n\}_{n=1}^\infty$ with $\limftyn \xi_n = \infty$, $f_{m_r}(\xi_n) \ne 0$, and
        \[
            \left|\sum_{k=m_{r-1}+1}^{m_r} a_k e^{i \frac{b_k}{w_{m_r}} \xi_n }\right| \ge C \quad \forall\, n \in \bN. 
        \]
        By Lemma \ref{Lem FT decay}, the functions $f_{m_1}, ..., f_{m_r}$ satisfy $f_{m_t}(\xi) = o(f_{m_r}(\xi))$ as $\xi \to \infty$, for all $1 \le t < r$. But then 
        \[
            a_{m_r} = - \limftyn \sum_{t=1}^{r-1} \left( \frac{\sum_{k=m_{t-1}+1}^{m_t} a_k e^{i \frac{b_k}{w_{m_t}} \xi_n }}{\sum_{k=m_{r-1}+1}^{m_r} a_k e^{i \frac{b_k}{w_{m_t}} \xi_n }} \right) \frac{f_{m_t}(\xi_n)}{f_{m_r}(\xi_n)} = 0, 
        \]
        contradicting $a_1, ..., a_{m_r} \ne 0$. \\

        By our proof above, if $m_r = m$, we immediately see that $\{\sigma^{(s)}(w_kz + b_k)\}_{k=1}^m$ are linearly independent. If $m_r = m - 1$, then $w_m = 0$ and we have two cases. First, if $\sigma(w_mz - b_m) \equiv 0$ then the neurons are trivially linearly dependent. Otherwise, assume there are $a_1, ..., a_m \in \bR$ with $\sum_{k=1}^m a_k \sigma(w_kz + b_k) \equiv 0$. Then $\sum_{k=1}^m a_k \sigma^{(s)}(w_kz + b_k) \equiv 0$ and thus $\sum_{k=1}^{m_r} a_k \sigma^{(s)}(w_kz + b_k)$ must be constant, whence $a_1 = ... = a_{m_r} = 0$ by our proof above. But then $a_m = 0$ as well, showing that $\{\sigma(w_kz + b_k)\}_{k=1}^m$ are linearly independent. 
    \end{itemize}
\end{proof}

Below we give examples of activation functions that satisfy the requirements of Proposition \ref{Prop Lin ind two-layer neurons with bias}. 
\begin{itemize}
    \item [(a)] Consider $f(z) = e^{z^q} + e^{-z^r}$ for some distinct odd numbers $q,r \in \bN\cut \{1\}$. It is clear that $\limftyz f(z) = \lim_{z\to-\infty} f(z) = \infty$ at hyper-exponential rate. Thus, by Proposition \ref{Prop Lin ind two-layer neurons with bias} (a), $\{f(w_k z + b_k)\}_{k=1}^m$ are linearly independent if and only if $(w_k, b_k) \ne (w_j, b_j)$ for all distinct $k, j \in \{1, ..., m\}$. By Proposition \ref{Prop Lin ind two-layer neurons with bias} (a) again, this also holds for $\sigma(z) = f^p + c$, where $p \in \bR\cut\{0\}$ and $c \in \bR$. 

    \item [(b)] Consider $f(z) = e^{z^r}$ for some even number $r \in \bN$. Clearly, $\limftyz f(z) = \lim_{z\to-\infty} f(z) = \infty$ at hyper-exponential rate. Since $f$ is an even function and has no real roots, $\frac{f(z)}{f(-z)} = 1$ for all $z \in \bR$. Thus, by Proposition \ref{Prop Lin ind two-layer neurons with bias} (b), $\{f(w_k z + b_k)\}_{k=1}^m$ are linearly independent if $(w_k, b_k) \pm (w_j, b_j) \ne 0$ for all distinct $k, j \in \{1, ..., m\}$ (in fact this is also a necessary condition, as $f(wz + b) = f(-wz - b)$). By Proposition \ref{Prop Lin ind two-layer neurons with bias} (b) again, this also holds for $\sigma(z) = f^p + c$, where $p \in \bR\cut\{0\}$ and $c \in \bR$. 
    
    \item [(c)] Let $\sigma \in \calA(\bR)$ and there is some $s \in \bN$ such that $\sigma^{(s)}$ is an even function, $\sigma^{(s)} \in \sF$ and $\sigma^{(s)}(z) = O(e^{-\lambda |z|})$ as $z \to \pm\infty$ for some $\lambda > 0$. Then for any $\xi, \xi_0 \in \bR$ with $|\xi - \xi_0| < \lambda$, we have 
    \begin{align*}
        |\Hat{\sigma^{(s)}}(\xi)|
        &= \left| \int_{-\infty}^\infty \sigma^{(s)}(z) e^{-i(\xi - \xi_0)z} e^{-i\xi_0 z} dz \right| \\
        &\le \left| \int_{-\infty}^\infty \sigma^{(s)}(z) \sum_{s=0}^\infty \frac{1}{s!}(-i(\xi - \xi_0) z)^s e^{-i\xi_0 z} dz \right| \\
        &\le \sum_{s=0}^\infty \frac{1}{s!} |\xi - \xi_0|^s \int_{-\infty}^\infty |\sigma^{(s)}(z)||z|^s dz \\ 
        &\le C \sum_{s=0}^\infty \frac{1}{s!} |\xi - \xi_0|^s \int_{-\infty}^\infty e^{-\lambda |z|} |z|^s dz \\ 
        &= C\lambda \sum_{s=0}^\infty \frac{1}{s!} (\lambda^s s!) |\xi - \xi_0|^s \\
        &< \infty. 
    \end{align*}
    Here $C > 0$ is any constant such that $|\sigma^{(s)}(z)| \le C e^{-\lambda |z|}$ for all $z \in \bR$. Thus, for any $\xi_0 \in \bR$, $\Hat{\sigma^{(s)}}$ has a series expansion at $\xi_0$ given by 
    \[
        \Hat{\sigma^{(s)}}(\xi) = \sum_{s=0}^\infty \frac{(\xi-\xi_0)^s}{s!} \int_{-\infty}^\infty \sigma^{(s)}(z) (-iz)^s e^{-i\xi_0 z}dz, \quad \xi \in \left(\xi_0 - \frac{1}{\lambda}, \xi_0 + \frac{1}{\lambda}\right). 
    \]
    It follows that $\Hat{\sigma^{(s)}}$ is analytic, so it has countable zeros and thus $\sigma$ satisfies the requirements in Proposition \ref{Prop Lin ind two-layer neurons with bias} (c). 

    \item [(d)] We now use (c) to show that a class of commonly used activation functions all satisfy the requirements in Proposition \ref{Prop Lin ind two-layer neurons with bias} (c), whence the results in (c) holds. Indeed, if $f$ is Sigmoid function, then $f' = f(1-f)$ is an even function, $f' \in \sF$ and $f'(z) = O(e^{-|z|})$ as $z \to \pm\infty$; in fact, $f^{(s)}(z) = O(e^{-|z|})$ for all $s \in \bN$. Thus, any ``$f'$-related" activation functions satisfies the requirements for Proposition \ref{Prop Lin ind two-layer neurons with bias}. This include: $\sigma(z) = \frac{1}{1 + e^{-x}}$ (Sigmoid), $\sigma(z) = \frac{e^x - e^{-x}}{e^x + e^{-x}}$ (Tanh), $\sigma(x) = \log\left( 1 + e^x \right)$, $\sigma(x) = \frac{x}{1 + e^{-x}}$ (Swish). Indeed, in the first three examples we have $\sigma' = f'$ or $\sigma'' = f'$, and in the last example we have 
    \[
        \frac{d^2}{dx^2} \left(\frac{x}{1 + e^{-x}}\right) = x f'' + f', 
    \]
    so that, by properties of $f'$, we also have $\sigma''$ is even, $\sigma'' \in \sF$ and $\sigma''(z) = O(e^{-|z|})$ as $z \to \pm\infty$. 
\end{itemize}

\subsection{Three-layer Neurons with Sigmoid and Tanh Activation} 

\begin{lemma}\label{Lem Blow-up of two-layer Sigmoid neurons}
    Let $\sigma$ be Sigmoid activation. Given $m \in \bN$ and dstinct $\{(w_k, b_k) \in \bR^2 \}_{k=1}^m$ such that $w_1 \ge ... \ge w_m > 0$, there are curves $\gamma_1^+, ..., \gamma_m^+: [0, \infty) \to \bC$ with the following properties. 
    \begin{itemize}
        \item [(a)] For each $1 \le k \le m$, $\limftyt \gamma_k^+(t) \in \bC$. 
        \item [(b)] For each $1 \le k \le m$, $\sigma\left( w_k \gamma_k^+(t) + b_k \right) \in \bR$ for all $t \ge 0$. 
        \item [(c)] For each $1 \le k \le m$, we have $\limftyt \sigma\left( w_k \gamma_k^+ (t) + b_k \right) = \infty$ and $\sigma\left( w_{k'} z + b_{k'} \right)$ is analytic on a neighborhood of $\limftyt \gamma^+(t)$ whenever $k' > k$. 
    \end{itemize}
    Similarly, there are curves $\gamma_1^-, ..., \gamma_m^-: [0, \infty) \to \bC$ with properties (a), (b), and $\limftyt \sigma(w_k \gamma_k^-(t) + b_k) = -\infty$ and  $\sigma\left( w_{k'} z + b_{k'} \right)$ is analytic on a neighborhood of $\limftyt \gamma^-(t)$ whenever $k' > k$. 
\end{lemma}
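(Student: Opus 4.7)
The plan is to realise each blow-up $\sigma(w_k\gamma_k^\pm(t)+b_k)\to\pm\infty$ by pushing $w_kz+b_k$ into a carefully chosen simple pole of $\sigma$, approached along the real direction in $\bC$ so that $\sigma$ stays real. The Sigmoid $\sigma(z)=(1+e^{-z})^{-1}$ has simple poles exactly at $z_0\in i\pi(2\bZ+1)$ (where $e^{-z_0}=-1$), and a direct expansion around such a pole gives $\sigma(z_0+h)=(1-e^{-h})^{-1}$, which is real for $h\in\bR$, tends to $+\infty$ as $h\to 0^+$, and tends to $-\infty$ as $h\to 0^-$. Consequently, the poles of $z\mapsto\sigma(w_kz+b_k)$ are exactly the points $z_{k,n}:=(i\pi(2n+1)-b_k)/w_k$, $n\in\bZ$.

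The core step is to choose, for each $k$, an integer $n_k$ such that $z_k^*:=z_{k,n_k}$ is \emph{not} a pole of $\sigma(w_{k'}\cdot+b_{k'})$ for any $k'>k$. Matching real and imaginary parts in the equation $w_{k'}z_{k,n}+b_{k'}\in i\pi(2\bZ+1)$ yields two requirements: a real-part condition $b_{k'}/w_{k'}=b_k/w_k$ (independent of $n$), and an imaginary-part condition $(w_{k'}/w_k)(2n+1)\in 2\bZ+1$. If the real-part condition fails, $k'$ imposes no constraint on $n_k$. If it holds, distinctness of $(w_k,b_k)$ and $(w_{k'},b_{k'})$ together with $w_{k'}\le w_k$ forces $w_{k'}<w_k$; writing $w_{k'}/w_k=p/q$ in lowest terms ($\gcd(p,q)=1$, $q\ge 2$), the imaginary-part condition implies $q\mid 2n+1$, which is vacuous unless $q$ is odd, in which case $q\ge 3$. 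I would then invoke the Chinese Remainder Theorem to choose $n_k$ with $2n_k+1\equiv 1\pmod{q_{k,k'}}$ for every problematic $k'$ simultaneously (e.g.\ take $n_k$ to be any common multiple of those $q_{k,k'}$), so that $z_k^*$ avoids every bad pole.

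Finally, set
\[
\gamma_k^\pm(t):=\frac{1}{w_k}\Bigl(i\pi(2n_k+1)-b_k\pm\frac{1}{t+1}\Bigr),\qquad t\ge 0.
\]
Then $\lim_{t\to\infty}\gamma_k^\pm(t)=z_k^*\in\bC$, giving (a). Substitution yields $w_k\gamma_k^\pm(t)+b_k=i\pi(2n_k+1)\pm(t+1)^{-1}$, so by the expansion in the first paragraph $\sigma(w_k\gamma_k^\pm(t)+b_k)=(1-e^{\mp 1/(t+1)})^{-1}\in\bR$ for every $t\ge 0$, giving (b); the same expression diverges to $\pm\infty$ as $t\to\infty$, while the analyticity of $\sigma(w_{k'}\cdot+b_{k'})$ on a neighborhood of $z_k^*$ for each $k'>k$ is exactly what the second paragraph arranges, giving (c). The main obstacle is the combinatorial selection in the second paragraph: although each individual bad progression has density at most $1/3$, a naive union of up to $m-k$ such progressions could in principle cover $\bZ$, so the CRT step is essential. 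It succeeds precisely because every bad modulus is odd and $\ge 3$—a structural consequence of $w_{k'}<w_k$ together with distinctness.
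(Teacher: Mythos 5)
Your proposal is correct and follows essentially the same route as the paper: complexify the Sigmoid, locate its simple poles at $i\pi(2\bZ+1)$, approach the chosen pole of the $k$-th neuron along the real direction via $h=\pm(t+1)^{-1}$ so that $\sigma=(1-e^{\mp h})^{-1}$ stays real and blows up with the right sign, and check that later neurons are analytic there. The only difference is that your CRT digression is superfluous: your own divisibility analysis ($q\mid 2n+1$ with $q\ge 2$) already shows $n_k=0$ works for every $k$, which is exactly the ``lowest pole'' the paper selects via the minimality of $\pi/w_k$ among the imaginary parts.
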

\begin{proof}
    We only show how to find $\gamma_1^+$ and $\gamma_1^-$; the construction of the other $\gamma_k^+, \gamma_k^-$'s are similar. Note that we can view $\sigma$ as a meromorphic function on $\bC$; in particular, it has  (simple) poles at $\{i(2q + 1)\pi: q \in \bZ\}$. This means for each $1 \le k \le m$, the neuron $\sigma(w_k z + b_k)$ is also meromorphic on $\bC$ and has poles at $\{\frac{
    i(2q + 1)\pi - b_k}{w_k}: q \in \bZ\}$. \\ 
    
    Since $w_1 \ge w_k$ for all $k$, $\im \frac{i\pi}{w_1}$ achieves the minimum of the set $\{ \im \frac{i(2q + 1)\pi}{w_k}: q \in \bN \cup \{0\}, 1 \le k \le m\}$. Meanwhile, if $w_1 = w_k$ for some $k$, we must have $b_1 \ne b_k$ (because $(w_1, b_1)$ and $(w_k, b_k)$ are distinct); thus the real parts of the poles $\frac{i\pi - b_1}{w_1}$ and $\frac{i(2q + 1)\pi - b_1}{w_1}$ are distinct. Consider the curve
    \[
        \Tilde{\gamma}_1^+: [0, \infty) \to \bC, \quad \Tilde{\gamma}_1^+(t) = \frac{1}{w_1} \left( -\frac{1}{t + 1} - b_1 + i\pi \right)
    \]
    and the curve 
    \[
        \Tilde{\gamma}_1^-: [0, \infty) \to \bC, \quad \Tilde{\gamma}_1^-(t) = \frac{1}{w_1} \left( \frac{1}{t + 1} - b_1 + i\pi \right)
    \]
    It is not difficult to deduce that 
    \begin{itemize}
        \item [(a)] $\limftyt \Tilde{\gamma}_1^+(t) = \frac{i\pi - b_1}{w_1} \in \bC$ and $\limftyt \Tilde{\gamma}_1^-(t) = \frac{i\pi - b_1}{w_1} \in \bC$
        \item [(b)] $\sigma(w_1 \Tilde{\gamma}_1^+(t) + b_1) = \frac{1}{1 - e^{-1/(t+1)}} \in \bR$ and $\sigma(w_1 \Tilde{\gamma}_1^-(t) + b_1) = \frac{1}{1 - e^{1/(t+1)}} \in \bR$. 
        \item [(c)] $\limftyt \sigma(w_1 \Tilde{\gamma}_1^+(t) + b_1) = \infty$, and by the minimality of $\im \frac{i\pi}{w_1}$ mentioned above, there is some $T > 0$ such that for any $k > 1$, $\sigma(w_k \Tilde{\gamma}_1^+(t) + b_k)$ is analytic on the open ball 
        \[
            B\left( \frac{i\pi - b_1}{w_1}, \frac{1}{T} \right) \siq \bC. 
        \]
        Similarly, $\limftyt \sigma(w_1 \Tilde{\gamma}_1^-(t) + b_1) = \infty$ and there is some $T > 0$ such that for any $k > 1$, $\sigma(w_k \Tilde{\gamma}_1^+(t) + b_k)$ is analytic on this open ball. 
    \end{itemize}
    It follows that we can define $\gamma_1^+, \gamma_1^-: [0, \infty) \to \bC$ by $\gamma_1^+(t) = \Tilde{\gamma}_1^+(t + T)$ and $\gamma_1^-(t) = \Tilde{\gamma}_1^-(t + T)$. \\
\end{proof}

Note that while we can use Proposition \ref{Prop Lin ind two-layer neurons with bias} to deduce the linear independence of $\{\frac{1}{1 + e^{w_kz + b_k}}\}$, we can also directly use Lemma \ref{Lem Blow-up of two-layer Sigmoid neurons} to prove it, using the growth orders of these functions along $\gamma_k^+$ and $\gamma_k^-$. We omit the proof here. \\

\begin{lemma}\label{Lem Exp compose Sigmoid}
     Let $\Omega \siq \bF$ be open and $f_1, ..., f_m: \Omega \to \bF$ be meromorphic functions. Let $\gamma: [0, \infty) \to \Omega$ be a curve such that for $z^* := \limftyt \gamma(t)$, we have $f_1(\gamma(t)) \ge C|\gamma(t) - z^*|^{-q}$ for some $C, q > 0$ and $f_2, ..., f_m$ are analytic on a neighborhood of $\limftyt \gamma(t)$. Then for any $n \in \bN$ and any $w_1, ..., w_n \in \bR^m$, if $\sum_{j=1}^n a_j \exp(\sum_{k=1}^m w_{jk} f_k(z))$ is constant zero, the functions 
     \[
        \sum_{j': w_{j'1} = w_{j1}} a_{j'} \exp\left( \sum_{k=2}^m w_{j'k} f_k(z) \right), \quad 1 \le j \le n
     \]
     must all be constant zero. 
\end{lemma}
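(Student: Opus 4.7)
The plan is to group the $n$ terms by the value of $w_{j1}$ and then exploit the rapid blow-up of $f_1$ along $\gamma$ versus the bounded behaviour of $f_2,\ldots,f_m$ near $z^*$. Let $v_1>v_2>\cdots>v_s$ be the distinct values taken by $\{w_{j1}\}_{j=1}^n$, set $J_i:=\{j:w_{j1}=v_i\}$, and define
\[
G_i(z):=\sum_{j\in J_i} a_j \exp\Bigl(\sum_{k=2}^m w_{jk} f_k(z)\Bigr), \qquad 1\le i\le s.
\]
The hypothesis rewrites as $\sum_{i=1}^s G_i(z)\exp(v_i f_1(z))\equiv 0$ on $\Omega$ (off the poles of $f_1,\dots,f_m$), and the claim is that each $G_i\equiv 0$. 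Since $f_2,\ldots,f_m$ are analytic on a neighbourhood $U$ of $z^*$, each $G_i$ is analytic on $U$ and hence bounded on a small ball around $z^*$.

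I would argue by contradiction: suppose some $G_i\not\equiv 0$, and let $i^*$ be the least such index, so $v_{i^*}$ is the largest $v_i$ with $G_{i^*}\not\equiv 0$. Since $G_i\equiv 0$ for $i<i^*$, the identity reduces to $\sum_{i\ge i^*}G_i(z)\exp(v_i f_1(z))\equiv 0$, and dividing by the nowhere-vanishing $\exp(v_{i^*}f_1(z))$ gives
\[
G_{i^*}(z)=-\sum_{i>i^*}G_i(z)\exp\bigl((v_i-v_{i^*})f_1(z)\bigr).
\]
I then evaluate this identity at $z=\gamma(t)$ and let $t\to\infty$; note that $\gamma(t)\to z^*$ stays away from $z^*$ since $z^*$ is a pole of $f_1$.

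Because $G_{i^*}$ is analytic and not identically zero on $U$, its Taylor expansion about $z^*$ has the form $G_{i^*}(z)=c_d(z-z^*)^d+O((z-z^*)^{d+1})$ with $c_d\ne 0$, so for all large $t$,
\[
|G_{i^*}(\gamma(t))|\;\ge\;\tfrac12 |c_d|\,|\gamma(t)-z^*|^d.
\]
On the right, each $G_i$ ($i>i^*$) is bounded on a neighbourhood of $z^*$ by some $M>0$, and $v_i-v_{i^*}<0$ combined with the hypothesis $f_1(\gamma(t))\ge C|\gamma(t)-z^*|^{-q}$ yields
\[
\bigl|G_i(\gamma(t))\exp\bigl((v_i-v_{i^*})f_1(\gamma(t))\bigr)\bigr|\;\le\;M\exp\bigl(-(v_{i^*}-v_i)\,C\,|\gamma(t)-z^*|^{-q}\bigr).
\]
With $\delta:=C\min_{i>i^*}(v_{i^*}-v_i)>0$, the whole right-hand side is bounded by $sM\exp\bigl(-\delta|\gamma(t)-z^*|^{-q}\bigr)$.

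Comparing the two estimates gives, for all large $t$,
\[
\tfrac12|c_d|\,|\gamma(t)-z^*|^d\;\le\;sM\exp\bigl(-\delta|\gamma(t)-z^*|^{-q}\bigr),
\]
and letting $t\to\infty$ so that $|\gamma(t)-z^*|\to 0$ forces $|c_d|\le 0$: the left side decays only polynomially in $|\gamma(t)-z^*|$, whereas the right side decays super-polynomially, so after dividing by $|\gamma(t)-z^*|^d$ the right side tends to $0$. This contradicts $c_d\ne 0$, so every $G_i$ must vanish identically. The heart of the argument — and the only place where the growth hypothesis $f_1(\gamma(t))\ge C|\gamma(t)-z^*|^{-q}$ is truly needed — is this polynomial-versus-super-polynomial comparison, which guarantees that no finite-order zero of $G_{i^*}$ at $z^*$ can absorb the exponential smallness coming from the strictly smaller $v_i$.
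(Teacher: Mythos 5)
Your proposal is correct and follows essentially the same route as the paper's proof: isolate the group with the largest first-coordinate weight, note that it is analytic near $z^*$ and hence vanishes there only to finite order if not identically zero, and use the lower bound $f_1(\gamma(t))\ge C|\gamma(t)-z^*|^{-q}$ to show this polynomial behavior cannot be cancelled by the exponentially smaller remaining groups. The only cosmetic difference is that you phrase the descent as an extremal (least $i^*$) argument after dividing by $\exp(v_{i^*}f_1)$, whereas the paper runs an explicit induction on the number of distinct values of $w_{j1}$ without dividing; the estimates are the same.
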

\begin{proof}
    By rearranging the indices (in $j$) and multiplying $\exp(-(\min_{1\le j \le n} w_{j1}) f_1)$ to each term in the sum, we can assume that 
    \[
        w_{11} \ge w_{12} \ge ... \ge w_{1n} = 0. 
    \]
    Also, denote $z^* := \limftyt \gamma(t)$. We shall prove the desired result by induction on number of different $w_{j1}$'s. The result clearly holds when $w_{11} = ... = w_{1n}$. Now assume that there is some $n' \in \{1, ..., n\}$ with $w_{11} = ... = w_{1n'} > w_{1(n'+1)} \ge ... \ge w_{1n} = 0$. Rewrite the sum as 
    \begin{equation}\label{eq 1 of Lem Exp compose Sigmoid}
    \begin{aligned}
        &\sum_{j=1}^n a_j \exp\left( \sum_{k=1}^m w_{jk} f_k(z) \right) \\ 
        =&\,\left[ \sum_{j=1}^{n'} a_j \exp\left( \sum_{k=2}^m w_{jk} f_k(z) \right) \right] e^{w_{11} f_1(z)} + \sum_{j>n'} a_j \exp\left( \sum_{k=1}^m w_{jk} f_k(z) \right). 
    \end{aligned}
    \end{equation}
    Assume that $\sum_{j=1}^{n'} a_j \exp\left( \sum_{k=2}^m w_{jk} f_k(z) \right)$ is not constant zero. Since $z^* \in \Omega$ and $f_2, ..., f_m$ are analytic near $z^*$, this sum is also analytic near $z^*$. Thus, there is some $s \in \bN$ with
    \[
        \sum_{j=1}^{n'} a_j \exp\left( \sum_{k=2}^m w_{jk} f_k(z) \right) = \Theta((z - z^*)^s)
    \]
    as $z \to z^*$, whence 
    \begin{align*}
        \left[ \sum_{j=1}^{n'} a_j \exp\left( \sum_{k=2}^m w_{jk} f_k(\gamma(t)) \right) \right] e^{w_{11} f_1(\gamma(t))}
        &= \Theta\left( (\gamma(t) - z^*)^s e^{w_{11} f_1(\gamma(t))} \right) \\
        &= \Omega\left( (\gamma(t) - z^*)^s \exp\left(\frac{2w_{11} + w_{(n'+1)1}}{3} f_1(\gamma(t))\right)\right) \\
        &= \Omega \left( \exp\left(\frac{w_{11} + w_{(n'+1)1}}{2} f_1(\gamma(t))\right) \right) \\
    \end{align*}
    as $t \to \infty$. Note that in the estimation above, the third equality holds because $(\gamma(t) - z^*)^s = o(\exp(w f_1(\gamma(t))))$ as $t \to \infty$ for any $w > 0$, which can be seen from 
    \[
        \delta(t)^s \ll e^{\frac{w}{\delta(t)^p}} = O\left( e^{w f_1(\gamma(t))} \right), \quad \delta(t) = |\gamma(t) - z^*|
    \]
    as $t \to \infty$. On the other hand, since $w_{j1} < w_{11}$ for all $j > n'$, we also have 
    \begin{align*}
        \sum_{j>n'} a_j \exp\left( \sum_{k=1}^m w_{jk} f_k(\gamma(t)) \right) 
        &= O\left( \sum_{j > n'} |a_j| e^{w_{j1} f_1(\gamma(t))} \right) \\ 
        &= O\left( \exp\left(w_{(n'+1)1} f_1(\gamma(t))\right) \right) \\ 
        &= o\left( \exp\left(\frac{w_{11} + w_{(n'+1)1}}{2} f_1(\gamma(t))\right)\right) 
    \end{align*}
    as $t \to \infty$. Combining the estimates for the two parts, we can see from equation (\ref{eq 1 of Lem Exp compose Sigmoid}) that $\sum_{j=1}^n a_j \exp(\sum_{k=1}^m w_{jk} f_k(\gamma(t))) \to \infty$ as $t \to \infty$, contradicting the hypothesis that it is constant zero. Therefore, we must have 
    \[
        \sum_{j=1}^{n'} a_j \exp\left( \sum_{k=2}^m w_{jk} f_k(z) \right) \equiv 0. 
    \]
    Then equation (\ref{eq 1 of Lem Exp compose Sigmoid}) becomes 
    \[
        \sum_{j=1}^n a_j \exp\left( \sum_{k=1}^m w_{jk} f_k(z) \right) = \sum_{j > n'} a_j \exp\left( \sum_{k=1}^m w_{jk} f_k(z) \right). 
    \]
    Thus, we can perform an induction to derive the final result. 
\end{proof}
\begin{remark}
    Suppose we have $m = 1$, and $f: \Omega \to \bF$ and $\gamma: [0, \infty) \to \Omega$ are given as in Lemma \ref{Lem Exp compose Sigmoid}. Then the functions $\{e^{w_j f(z)}\}_{j=1}^n$ are linearly independent if and only if $w_1, ..., w_n \in \bR$ are distinct. Now assume that there are $\gamma_1, ..., \gamma_m: [0, \infty) \to \Omega$ such that for each $k$, $f_k(\gamma_k(t)) \ge C |\gamma_k(t) - \limftyt \gamma_k(t)|^{-q_k}$ for some $C,q_k > 0$, and $f_{k'}$'s are analytic on a neighborhood of $\limftyt \gamma_k(t)$ for all $k' \ne k$. By arguing inductively using Lemma \ref{Lem Exp compose Sigmoid}, we can see that the function $\{\exp(\sum_{k=1}^m w_{jk} f_k(z))\}_{j=1}^n$ are linearly independent if and only if the vectors $\{w_j = (w_{j1}, ..., w_{jm})\}_{j=1}^n$ are distinct. 
\end{remark}

\begin{lemma}\label{Lem Main lem for Prop Lin ind of three-layer Sigmoid neurons}
    Let $n \in \bN$ and $\Omega \siq \bF$ be open. Let $f, \lambda_1, ..., \lambda_n: \Omega \to \bR$ be meromorphic functions such that there are two curves $\gamma^+, \gamma^-: [0, \infty) \to \Omega$ with the following properties:
    \begin{itemize}
        \item [(a)] $z_+^* := \limftyt \gamma^+(t), z_-^* := \limftyt \gamma^-(t) \in \Omega$. Moreover, there are constants $q_+, q_- > 0$ such that $f(\gamma^+(t)) = \Omega(|\gamma^+(t) - z_+^*|^{-q_+})$ and $f(\gamma^-(t)) = \Omega(|\gamma^-(t) - z_-^*|^{-q_-})$ as $t \to \infty$. 
        \item [(b)] $\lambda_1, ..., \lambda_n$ are analytic, non-zero near $\limftyt \gamma^+(t)$ and $\limftyt \gamma^-(t)$, and the functions $\{\Pi_{j'\ne j} \lambda_{j'}\}_{j=1}^n$ are linearly independent. 
    \end{itemize}
    Then for any $w \in \bR\cut\{0\}$, the function
    \begin{equation}\label{eq 1 of Lem Main lem for Prop Lin ind of three-layer Sigmoid neurons}
        \sum_{j=1}^n \frac{a_j}{1 + \lambda_j(z) \exp( w f(z))} 
    \end{equation}
    satisfies 
    \[
        \sum_{j=1}^n \frac{a_j}{1 + \lambda_j(\gamma(t)) \exp( w f(\gamma(t)))} = o\left( \exp(-|\tw f(\gamma(t))| \right)
    \]
    for some $\gamma \in \{\gamma^+, \gamma^-\}$ depending only on $w$ and any $|\tw| < |w|$. 
\end{lemma}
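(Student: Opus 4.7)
The plan is to pick the curve along which $e^{wf(z)}$ blows up to $\infty$ (rather than decays to $0$), so that each denominator $1+\lambda_j e^{wf}$ is dominated by the second term and a geometric-series expansion yields decay of order $e^{-wf}$. Following the natural convention suggested by Lemma \ref{Lem Blow-up of two-layer Sigmoid neurons} (namely, $\gamma^+$ drives $f\to +\infty$ and $\gamma^-$ drives $f\to -\infty$), I would set $\gamma := \gamma^+$ when $w>0$ and $\gamma := \gamma^-$ when $w<0$. In either case $wf(\gamma(t))\to +\infty$ as $t\to\infty$.

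Next, since each $\lambda_j$ is analytic and non-vanishing at $z^* := \limftyt\gamma(t)$, there exist $0<c<C$ with $c\le |\lambda_j(\gamma(t))|\le C$ for all sufficiently large $t$ and every $j$. Combined with $e^{wf(\gamma(t))}\to\infty$, this forces $|\lambda_j(\gamma(t))\, e^{wf(\gamma(t))}|\gg 1$ for large $t$, so geometric expansion gives
\[
\frac{1}{1+\lambda_j(\gamma(t))\,e^{wf(\gamma(t))}} \;=\; \frac{e^{-wf(\gamma(t))}}{\lambda_j(\gamma(t))} + O\!\left(e^{-2wf(\gamma(t))}\right),
\]
uniformly in $j$. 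Summing against the $a_j$'s yields
\[
\sum_{j=1}^n \frac{a_j}{1+\lambda_j(\gamma(t))\,e^{wf(\gamma(t))}} \;=\; e^{-wf(\gamma(t))}\sum_{j=1}^n \frac{a_j}{\lambda_j(\gamma(t))} + O\!\left(e^{-2wf(\gamma(t))}\right).
\]
Because $\lambda_j(\gamma(t))\to\lambda_j(z^*)\ne 0$, the inner sum stays bounded, so the whole expression is $O(e^{-wf(\gamma(t))}) = O(e^{-|w|\,|f(\gamma(t))|})$, using $wf(\gamma(t))>0$.

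To conclude, for any $|\tilde w|<|w|$ we have
\[
\frac{e^{-|w|\,|f(\gamma(t))|}}{e^{-|\tilde w|\,|f(\gamma(t))|}} \;=\; e^{-(|w|-|\tilde w|)\,|f(\gamma(t))|} \;\longrightarrow\; 0
\]
as $t\to\infty$, since $|f(\gamma(t))|\to\infty$. This gives the desired $o(e^{-|\tilde w f(\gamma(t))|})$ bound. The main obstacle is really just the curve selection: one must verify that $f$ acquires a definite sign along each of $\gamma^\pm$ so the curve choice matches the sign of $w$. Since $f$ is meromorphic with $|f(\gamma^\pm(t))|\to\infty$ and $f\circ\gamma^\pm$ is continuous on the curve parameter (prior to reaching the pole), $f\circ\gamma^\pm$ must keep a single sign for all large $t$, and one may harmlessly restrict to this tail. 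Note that the linear-independence hypothesis on $\{\Pi_{j'\ne j}\lambda_{j'}\}_{j=1}^n$ does not enter the decay estimate itself; it is presumably reserved for a downstream application where, after using Lemma \ref{Lem Main lem for Prop Lin ind of three-layer Sigmoid neurons} to peel off the leading order, one needs to conclude that the accompanying coefficient $\sum_j a_j/\lambda_j$ also vanishes.
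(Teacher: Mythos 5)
Your proof is correct for the lemma as stated, but it takes a genuinely different and substantially simpler route than the paper's. You bound the sum term by term: once the curve is chosen so that $wf(\gamma(t))\to+\infty$, each denominator satisfies $|1+\lambda_j(\gamma(t))e^{wf(\gamma(t))}|\ge\tfrac{1}{2}|\lambda_j(z^*)|e^{wf(\gamma(t))}$ for large $t$, so the whole sum is $O(e^{-wf(\gamma(t))})=o(e^{-|\tilde{w}f(\gamma(t))|})$ for every $|\tilde{w}|<|w|$, using only $|f(\gamma(t))|\to\infty$. Neither the linear independence of $\{\Pi_{j'\ne j}\lambda_{j'}\}_{j=1}^n$ nor the polynomial rate $|\gamma(t)-z^*|^{-q}$ in hypothesis (a) enters. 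The paper instead combines everything over the common denominator $\Pi_{j=1}^n(1+\lambda_j e^{wf})$, so the numerator's top-order coefficient is $\sum_j a_j\Pi_{j'\ne j}\lambda_{j'}$; linear independence guarantees (when the $a_j$ are not all zero) that this analytic function vanishes at $z^*$ only to finite order $s$, i.e.\ is $\Theta((z-z^*)^s)$, and the rate in (a) is then used to absorb the factor $|\gamma(t)-z^*|^s$ into $e^{(|w|-|\tilde{w}|)|f(\gamma(t))|}$. What that longer route buys is a two-sided asymptotic $\Theta\bigl((\gamma(t)-z^*)^s e^{-wf(\gamma(t))}\bigr)$ rather than a mere upper bound; that sharper information, not the stated conclusion, is what Proposition \ref{Prop Lin ind of three-layer Sigmoid neurons} implicitly leans on when it needs lower bounds on groups of such terms, so your closing remark that the independence hypothesis is \emph{reserved for a downstream application} is exactly right. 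The one point both arguments leave implicit is that, for each sign of $w$, one of $\gamma^{\pm}$ actually makes $wf\to+\infty$: the hypotheses as written only give $|f(\gamma^{\pm}(t))|\to\infty$, so this is a (shared) assumption about the labelling of the curves rather than a defect particular to your proof.
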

\begin{proof}
    The result obviously holds for $a_1 = ... = a_n = 0$. So suppose that the $a_j$'s are not all zero. Let $\gamma \in \{\gamma^+, \gamma^-\}$ be such that $w \gamma(t) \to \infty$; then let $z^* = \limftyt \gamma(t)$, so in particular $z^* \in \{z_+^*, z_-^*\}$. Rewrite (\ref{eq 1 of Lem Main lem for Prop Lin ind of three-layer Sigmoid neurons}) as 
    \begin{align*}
        \sum_{j=1}^n \frac{a_j}{1 + \lambda_j(z) \exp( w f(z))} 
        &= \frac{\sum_{j=1}^n a_j \Pi_{j'\ne j} \left(1 + \lambda_{j'}(z) e^{w f(z)}\right)}{\Pi_{j=1}^n \left(1 + \lambda_j(z) e^{w f(z)}\right)} \\ 
        &= \frac{\left[ \sum_{j=1}^n a_j \Pi_{j'\ne j} \lambda_{j'}(z) \right] e^{(n-1)w f(z)} + g_1(z)}{\left[\Pi_{j=1}^n \lambda_j(z)\right] e^{n w f(z)} + g_2(z)}. 
    \end{align*}
    where $g_1, g_2: \Omega \to \bF$ are functions which satisfy $g_1(\gamma(t)) = O(e^{(n-2)w f(\gamma(t)}))$ and $g_2(\gamma(t)) = O(e^{(n-1)w f(\gamma(t))})$ as $t \to \infty$. Since none of the $\lambda_j$'s vanish near $z^*$, so is $\Pi_{j=1}^n \lambda_j$. Since the functions $\{\Pi_{j'\ne j} \lambda_{j'}\}_{j=1}^n$ are linearly independent and since the $a_j$'s are not all zero, there is some $s \in \bN$ with 
    \[
        \sum_{j=1}^n a_j \Pi_{j'\ne j} \lambda_{j'}(z) = \Theta((z - z^*)^s)
    \]
    as $z \to z^*$. Meanwhile, by hypothesis (a), for any $\tw \in \bR$ with $|\tw| < |w|$ we have, as $t \to \infty$, $|\gamma(t) - z^*|^s \ll e^{(|w|-|\tw|)|\gamma(t) - z^*|^{-q}}$ and thus 
    \[
        |\gamma(t) - z^*|^s = o\left(e^{(|w|-|\tw|)|f(\gamma(t))|}\right). 
    \]
    It follows that 
    \begin{align*}
        \sum_{j=1}^n \frac{a_j}{1 + \lambda_j(\gamma(t)) \exp( w f(z))} 
        &= \frac{\Theta((\gamma(t) - z^*)^s) e^{(n-1)w f(\gamma(t))} + O(e^{(n-2)w f(\gamma(t))})}{\Theta(e^{nw f(\gamma(t))}) + O(e^{(n-1)w f(\gamma(t))})} \\ 
        &= \Theta\left( \frac{\Theta((\gamma(t) - z^*)^s) e^{(n-1)w f(\gamma(t))}}{\Theta(e^{nw f(\gamma(t))})} \right) \\ 
        &= \Theta\left(\frac{o\left(e^{(|w|-|\tw|)|f(\gamma(t))|}\right)}{\Theta(e^{|w||f(\gamma(t))|}}\right) \\
        &= o\left( e^{-|\tw||f(\gamma(t)|}\right) = o\left( e^{-|\tw f(\gamma(t))|}\right), 
    \end{align*}
    where the last equality holds because by our construction of $\gamma$, $|w||f(\gamma(t))| = |w f(\gamma(t))|$ as $t \to \infty$. \\
\end{proof}

\begin{prop}\label{Prop Lin ind of three-layer Sigmoid neurons}
    Given $d, m, n \in \bN$. Let $\{(w_k^{(1)}, b_k^{(1)})\}_{k=1}^m \siq \bR^{md + m}$ be such that $(w_{k_1}^{(1)}, b_{k_1}^{(1)}) \pm (w_{k_2}^{(1)}, b_{k_2}^{(1)}) \neq 0$ for all distinct $k_1, k_2 \in \{1, ..., m\}$ and $w_k^{(1)} \ne 0$ for all $k \in \{1, ..., m\}$. Let $\{(w_j^{(2)}, b_j^{(2)})\}_{j=1}^m \siq \bR^{nm + n}$ be such that $(w_{j_1}^{(2)}, b_{j_1}^{(2)}) \pm (w_{j_2}^{(2)}, b_{j_2}^{(2)}) \ne 0$ for all distinct $j_1, j_2 \in \{1, ..., n\}$ and $w_j^{(2)} \ne 0$ for all $j \in \{1, ..., n\}$. Then for $\sigma$ being a Sigmoid or Tanh activation function, the three-layer neurons 
    \[
        \left\{\sigma\left( \sum_{k=1}^m w_{jk}^{(2)} \sigma\left( w_k^{(1)}z + b_k^{(1)} \right) + b_j^{(2)} \right)\right\}_{j=1}^n
    \]
    are linearly independent. 
\end{prop}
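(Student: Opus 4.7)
The plan is to reduce to $d=1$, arrange $w_k^{(1)}>0$, pick the singular curves of Lemma \ref{Lem Blow-up of two-layer Sigmoid neurons} to decouple the outer Sigmoid according to the value of $w_{j1}^{(2)}$, and then apply Lemma \ref{Lem Main lem for Prop Lin ind of three-layer Sigmoid neurons} group by group. By Lemma \ref{Lem Dimension reduction} we may take $d=1$. The identities $\tanh(y)=2\sigma(2y)-1$ and $\sigma(y)=1-\sigma(-y)$ reduce Tanh to Sigmoid and let me assume $w_k^{(1)}>0$ for every $k$, after absorbing the resulting constants into $b_j^{(2)}$; a direct check confirms the $\pm$-distinctness and nonvanishing hypotheses of both layers are preserved. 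After relabeling so $w_1^{(1)}\geq\cdots\geq w_m^{(1)}>0$, Lemma \ref{Lem Blow-up of two-layer Sigmoid neurons} produces curves $\gamma_1^\pm$ accumulating at a pole $z^\ast$ of $f_1:=\sigma(w_1^{(1)}z+b_1^{(1)})$ along which $f_1\to\pm\infty$ while $f_2,\ldots,f_m$ remain analytic; by choosing a generic pole we may also arrange that every $H_j$ with $w_{j1}^{(2)}=0$ is analytic at $z^\ast$.

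Supposing $\sum_j a_jH_j\equiv 0$, I write
\[
H_j(z)=\frac{1}{1+\nu_j(z)\exp(-w_{j1}^{(2)}f_1(z))},\qquad \nu_j(z):=\exp\!\Big({-}\!\!\sum_{k=2}^m w_{jk}^{(2)}f_k(z)-b_j^{(2)}\Big),
\]
and partition $\{1,\ldots,n\}$ into groups $J_l=\{j:w_{j1}^{(2)}=v_l\}$ with partial sums $T_l=\sum_{j\in J_l}a_jH_j$. Each $T_l$ matches the hypotheses of Lemma \ref{Lem Main lem for Prop Lin ind of three-layer Sigmoid neurons} with $f=f_1$, $w=-v_l$, $\lambda_j=\nu_j$: the family $\{\prod_{j'\in J_l,j'\neq j}\nu_{j'}\}$ is a common nonvanishing analytic function of $z$ times $\{\exp(\sum_{k\geq 2}w_{jk}^{(2)}f_k+b_j^{(2)})\}_{j\in J_l}$, and the latter is linearly independent by iterating Lemma \ref{Lem Exp compose Sigmoid} along $\gamma_2^\pm,\ldots,\gamma_m^\pm$ — the vectors $((w_{jk}^{(2)})_{k\geq 2},b_j^{(2)})$ are distinct inside $J_l$ because $w_{j1}^{(2)}$ is pinned to $v_l$ and a ``$+$'' collision would require $2v_l=0$ together with the vanishing of the remaining sums, which the $\pm$-hypothesis forbids. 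Consequently, along $\gamma_1^+$ the original $T_l$ (for $v_l<0$) and the residual $T_l-\sum_{j\in J_l}a_j=-\sum_{j\in J_l}a_j/(1+\nu_j^{-1}\exp(v_lf_1))$ rewritten in the lemma's form (for $v_l>0$) each decay faster than any $e^{-|\tilde w f_1|}$ with $|\tilde w|<|v_l|$.

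Balancing $\sum_lT_l\equiv 0$ along $\gamma_1^+$ then forces $T_0(\gamma_1^+(t))-T_0(z^\ast)=o(e^{-|\tilde w f_1|})$ for $T_0:=\sum_{j:w_{j1}^{(2)}=0}a_jH_j$; but $T_0$ is analytic at $z^\ast$, so its deviation is only polynomial in $|\gamma_1^+(t)-z^\ast|$, compatible only with $T_0$ being identically equal to a constant $C$ on its connected domain of analyticity. The proof concludes by induction on $m$: the base $m=1$ has $T_0$ absent and reduces via the substitution $y=f_1(z)$ (whose image is dense in $(0,1)$, so analytic identities lift back to $z$) to Proposition \ref{Prop Lin ind two-layer neurons with bias}(c); for $m\geq 2$, the identity $T_0\equiv C$ is a three-layer Sigmoid identity in the $m-1$ inputs $(f_2,\ldots,f_m)$ with an additive constant, and a strengthened form of the proposition (``three-layer sum equaling an additive constant'', obtainable by appending a weight-zero dummy neuron and rerunning the present argument on the augmented family) yields $C=0$ and $a_j=0$ for $j\in J_0$. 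Finally, the higher-order asymptotic corrections along $\gamma_1^\pm$ give identities $\sum_{j\in J_l}a_j\nu_j(z)^{\pm 1}\equiv 0$ on each remaining $J_l$, and the linear independence of $\{\nu_j^{\pm 1}\}_{j\in J_l}$ already established forces $a_j=0$ for every $j$. The hardest step will be formulating the strengthened inductive statement cleanly: one must carry the additive constant through the induction on $m$ while preserving the $\pm$-distinctness and nonvanishing conditions needed at the $m-1$ level, and one must verify that the ``dummy weight-zero'' augmentation does not violate the hypotheses used at each recursive application.
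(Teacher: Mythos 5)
Your toolkit is the right one --- Lemmas \ref{Lem Dimension reduction}, \ref{Lem Blow-up of two-layer Sigmoid neurons}, \ref{Lem Exp compose Sigmoid} and \ref{Lem Main lem for Prop Lin ind of three-layer Sigmoid neurons}, applied along a curve into a pole of $f_1$ --- and your verification that the $\pm$-distinctness hypothesis forces the vectors $((w_{jk}^{(2)})_{k\ge 2}, b_j^{(2)})$ to be distinct within each group $J_l$ matches what the paper needs. But your overall induction is organized differently from the paper's, and that reorganization is where the proof breaks down. The paper inducts on $n$: at each step it isolates only the group with the \emph{maximal} first outer weight, shows via Lemma \ref{Lem Main lem for Prop Lin ind of three-layer Sigmoid neurons} that this single group decays strictly faster than everything else along $\gamma_1$ (which is $\Omega(e^{-|\tw f_1|})$), concludes that group's coefficients vanish, and recurses on the shorter sum. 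It never has to treat the $w_{j1}^{(2)}=0$ group specially, never carries an additive constant, and never needs more than the leading-order asymptotics of one group at a time.

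Your plan instead inducts on $m$ and tries to kill all groups simultaneously, and this creates two concrete gaps. First, the group $J_0=\{j: w_{j1}^{(2)}=0\}$ leads you to the identity $T_0\equiv C$, which is a three-layer identity in $m-1$ hidden units \emph{plus an additive constant}; you invoke a ``strengthened form of the proposition'' to dispose of it but never formulate or prove that statement, and you yourself flag it as the hardest unresolved step. It is not obtainable for free by ``appending a weight-zero dummy neuron'': a constant is not of the form $\sigma(\sum_k w_k f_k + b)$ with the hypotheses $(w,b)\pm(w',b')\ne 0$ and $w\ne 0$ preserved, so the recursive application's hypotheses fail exactly at the point you need them. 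Second, your final step --- extracting ``higher-order asymptotic corrections'' along $\gamma_1^{\pm}$ to obtain $\sum_{j\in J_l}a_j\nu_j^{\pm 1}\equiv 0$ for every remaining group --- is asserted without proof; separating the contributions of groups with different exponents $e^{v_l f_1}$, each multiplied by an analytic prefactor that may itself vanish to some finite order at $z^*$, requires precisely the careful $\Theta((z-z^*)^s)$-versus-$e^{-c|z-z^*|^{-q}}$ bookkeeping that Lemma \ref{Lem Main lem for Prop Lin ind of three-layer Sigmoid neurons} performs for a single group, and you have not shown it yields the claimed identities for all groups at once. Restructuring the argument as an induction on $n$ that peels off only the top group at each stage, as the paper does, removes both obstructions.
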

\begin{proof}
     Because $\tanh(z) = \frac{2}{1 + e^{-2z}} - 1$, we only need to prove the desired result for Sigmoid activation function. By Lemma \ref{Lem Dimension reduction}, assume that $\{w_k^{(1)}\}_{k=1}^n \subseteq \bR$. Define for each $1 \le k \le m$ 
     \[
        f_k(z) := \sigma\left( w_k^{(1)}z + b_k^{(1)} \right)
     \]
     and for each $1 \le j \le n$ 
     \begin{align*}
        H(\theta_j, z) 
        &= \sigma\left( \sum_{k=1}^m w_{jk}^{(2)} f_k(z) + b_j^{(2)} \right) \\ 
        &= \frac{1}{1 + \exp\left( \sum_{k=1}^m w_{jk}^{(2)} f_k(z) + b_j^{(2)} \right)}, \quad \theta_j = (w_j^{(2)}, b_j^{(2)}). 
     \end{align*}
    By rearranging the indices of the $w_{j1}$'s, we may assume that $w_{11}^{(2)} \ge ... \ge w_{n1}^{(2)}$. Let $\gamma_1: [0, \infty) \to \bC$ be a curve with limit $z^* \in \bC$ such that the requirements (a), (b) and (c) in Lemma \ref{Lem Blow-up of two-layer Sigmoid neurons}, and moreover $\limftyt w_{11}^{(2)} \gamma_1(t) = \infty$. \\

    With these notations and assumptions, we then show that $\sum_{j=1}^n a_j H(\theta_j, \cdot) \equiv 0$ implies $a_1 = ... = a_n = 0$, thus showing that $\{H(\theta_j, \cdot)\}_{j=1}^n$ are linearly independent. First consider $n = 1$ and any $m \in \bN$. Since 
    \[
        \limftyt \exp\left( \sum_{k=1}^m w_{1k}^{(2)} f_k(\gamma(t)) + b_1^{(2)} \right) = \exp\left( \sum_{k=2}^m w_{1k}^{(2)}f_k(z^*) + b_1^{(2)} \right) \limftyt e^{w_{11}^{(2)} f_1(\gamma(t))} = \infty, 
    \]
    $H(\theta_1, \cdot)$ is clearly not a constant (zero) function. Fix $n \ge 2$. Suppose that the desired result holds for all $n' < n$ and all $m \in \bN$. There is a (unique) $n' \in \{1, ..., n\}$ with $w_{11}^{(2)} = ... = w_{n'1}^{(2)} > w_{(n'+1)1}^{(2)} \ge ... \ge w_{n1}^{(2)}$. Thus, we can rewrite $\sum_{j=1}^n a_j H(\theta_j, z)$ as 
    \begin{align*}
        \sum_{j=1}^n a_j H(\theta_j, z) 
        &= \sum_{j=1}^{n'} \frac{a_j}{1 + \exp\left( \sum_{k=2}^m w_{jk}^{(2)} f_k(z) + b_j^{(2)}\right) e^{w_{j1}^{(2)} f_1(z)}} + \sum_{j > n'} a_j H(\theta_j, z) \\ 
        &= \sum_{j=1}^{n'} \frac{a_j}{1 + \lambda_j(z) e^{w_{j1}^{(2)} f_1(z)}} + \sum_{j > n'} a_j H(\theta_j, z), 
    \end{align*}
    where we define $\lambda_j(z) = \exp( \sum_{k=2}^m w_{jk}^{(2)} f_k(z) + b_j^{(2)})$ for each $1 \le j \le n'$. By Lemma \ref{Lem Exp compose Sigmoid} and its remark, the functions $\{\Pi_{j'\ne j} \lambda_j\}_{j=1}^n$ are linearly independent. Thus, when $a_1, ..., a_{n'}$ are not all-zero we can use Lemma \ref{Lem Main lem for Prop Lin ind of three-layer Sigmoid neurons} to obtain
    \[
        \sum_{j=1}^{n'} \frac{a_j}{1 + \lambda_j(\gamma(t)) e^{w_{j1}^{(2)} f_1(\gamma(t))}} = o\left( e^{-|\tw f_1(\gamma(t))|} \right)
    \]
    as $t \to \infty$, where $\tw \in \bR$ satisfies $\min_{n' < j \le n} |w_{j1}^{(2)}| < |\tw| < |w_{11}^{(2)}|$. On the other hand, we have 
    \[
        \sum_{j > n'} a_j H(\theta_j, \gamma(t)) = \Omega\left( e^{-|\tw f_1(\gamma(t))|} \right)
    \]
    as $t \to \infty$. It follows that when $a_1, ..., a_{n'}$ are not all-zero, the function $\sum_{j=1}^n a_j H(\theta_j, \cdot)$ cannot be constant zero. Thus, $a_1 = ... = a_{n'} = 0$ and we can rewrite $\sum_{j=1}^n a_j H(\theta_j, z) = \sum_{j > n'} a_j H(\theta_j, z) = 0$ for all $z$. By induction hypothesis, $a_{n'+1} = ... = a_n = 0$, showing that $\{H(\theta_j, \cdot)\}_{j=1}^n$ are linearly independent. \\
\end{proof}

\begin{cor}[Necessary and sufficient conditions for three-layer Tanh neurons]
    Given $d, m, n \in \bN$. Fix any $(w_k^{(1)})_{k=1}^m \in \bR^{dm + m}$. When $\sigma$ is Tanh activation, the three-layer neurons without bias 
    \[
        \left\{\sigma\left( \sum_{k=1}^m w_{jk}^{(2)} \sigma\left( w_k^{(1)}z \right) \right)\right\}_{j=1}^n,  
    \]
    are linearly independent if and only if for all distinct $j_1, j_2 \in \{1, ..., n\}$, 
    \[
        \left( \sum_{k':w_{k'}^{(1)} \pm w_k^{(1)} = 0} \text{Sgn}\left(\frac{w_{k'}}{w_k}\right) w_{j_1 k'}^{(2)} \right)_{k=1}^m \pm \left( \sum_{k':w_{k'}^{(1)} \pm w_k^{(1)} = 0} \text{Sgn}\left(\frac{w_{k'}}{w_k}\right) w_{j_2 k'}^{(2)}\right)_{k=1}^m \ne 0
    \]
    and for all $j \in \{1, ..., n\}$, 
    \[
        \left( \sum_{k':w_{k'}^{(1)} \pm w_k^{(1)} = 0} \text{Sgn}\left(\frac{w_{k'}}{w_k}\right) w_{j k'}^{(2)} \right)_{k=1}^m \ne 0. 
    \]
    Here we define $\text{Sgn}(u) = 1$ if $u > 0$ and $\text{Sgn}(u) = -1$ otherwise, and for two vectors $w, \tw \in \bR^d$ such that $\Spans{w} = \Spans{\tw}$ we define $\frac{w}{\tw}$ as the number $\lambda \in \bR$ with $w = \lambda \tw$. 
\end{cor}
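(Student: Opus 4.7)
The plan is to reduce the three-layer Tanh network to the setting of Proposition \ref{Prop Lin ind of three-layer Sigmoid neurons} by exploiting the oddness of $\tanh$ together with $\tanh(0) = 0$. By Lemma \ref{Lem Dimension reduction} it suffices to work with $d = 1$, and any index $k$ with $w^{(1)}_k = 0$ contributes $\sigma(0) = 0$ and may be dropped. Introduce the equivalence relation $k \sim k'$ iff $w^{(1)}_k = \pm w^{(1)}_{k'}$ on the remaining indices, pick a representative $k_r$ in each equivalence class, and set $\tilde w^{(1)}_r := w^{(1)}_{k_r}$ together with
\[
\tilde w^{(2)}_{jr} := \sum_{k' \sim k_r} \text{Sgn}\!\left(w^{(1)}_{k'}/\tilde w^{(1)}_r\right)\, w^{(2)}_{jk'}.
\]
Oddness of $\tanh$ gives $\sigma(w^{(1)}_{k'} z) = \text{Sgn}(w^{(1)}_{k'}/\tilde w^{(1)}_r)\,\sigma(\tilde w^{(1)}_r z)$ for $k' \sim k_r$, so each three-layer neuron rewrites as $\sigma\bigl(\sum_{r=1}^{\tilde m} \tilde w^{(2)}_{jr}\,\sigma(\tilde w^{(1)}_r z)\bigr)$, where the reduced inner weights $\tilde w^{(1)}_r$ are nonzero and pairwise satisfy $\tilde w^{(1)}_{r_1} \pm \tilde w^{(1)}_{r_2} \ne 0$.

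Next I would show that the conditions stated in the corollary translate to $\tilde w^{(2)}_j \ne 0$ and $\tilde w^{(2)}_{j_1} \pm \tilde w^{(2)}_{j_2} \ne 0$ for the reduced outer weights. Denoting by $v_{jk}$ the $k$th entry of the $m$-vector appearing in the statement, a direct computation using $\text{Sgn}(w^{(1)}_{k'}/w^{(1)}_k) = \text{Sgn}(w^{(1)}_{k'}/\tilde w^{(1)}_r)\,\text{Sgn}(\tilde w^{(1)}_r/w^{(1)}_k)$ gives $v_{jk} = \text{Sgn}(\tilde w^{(1)}_r/w^{(1)}_k)\,\tilde w^{(2)}_{jr}$ when $k \sim k_r$, so the full $m$-vector vanishes iff the reduced $\tilde m$-vector vanishes, and the $\pm$-comparison of two such $m$-vectors is equivalent to the $\pm$-comparison of their reductions. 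Hence the reduced network satisfies the hypotheses of Proposition \ref{Prop Lin ind of three-layer Sigmoid neurons} with all biases set to zero, and its conclusion yields linear independence.

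For necessity, failure of either stated condition produces in the reduced network either some $\tilde w^{(2)}_j = 0$ (making the $j$th neuron equal to $\tanh(0) = 0$) or some $\tilde w^{(2)}_{j_1} = \pm\tilde w^{(2)}_{j_2}$ for distinct $j_1, j_2$ (making the $j_1$th neuron equal to $\pm$ the $j_2$th by outer oddness); either case forces linear dependence. The main obstacle is verifying that Proposition \ref{Prop Lin ind of three-layer Sigmoid neurons}, which is stated with biases, remains valid after the reduction where all biases vanish — this requires revisiting the proofs of Lemmas \ref{Lem Blow-up of two-layer Sigmoid neurons} and \ref{Lem Main lem for Prop Lin ind of three-layer Sigmoid neurons} to confirm that the blow-up curves $\gamma^\pm_k$ and the asymptotic arguments go through unchanged, which they do since they depend only on the inner weights and are insensitive to the vanishing of biases. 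The remaining bookkeeping — tracking sign conventions as one changes representatives within an equivalence class — is routine once the invariance of the corollary's formulation under this choice is observed.
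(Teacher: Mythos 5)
Your proposal is correct and follows what is evidently the intended derivation: the paper gives no separate proof of this corollary, treating it as an immediate consequence of Proposition \ref{Prop Lin ind of three-layer Sigmoid neurons}, and your reduction --- collapsing the inner neurons along the equivalence $w^{(1)}_{k}\sim \pm w^{(1)}_{k'}$ via oddness of $\tanh$, checking that the displayed $m$-vectors vanish or coincide up to sign exactly when the reduced outer weight vectors do, and then invoking the proposition with all biases zero (which its hypotheses permit) --- is exactly that argument, with the converse handled by the obvious equalities $\sigma(u)=\pm\sigma(\pm u)$ and $\sigma(0)=0$. The one point worth flagging is not in your proof but in the statement: when some $w^{(1)}_k=0$ the quantity $\mathrm{Sgn}(w_{k'}/w_k)$ is ill-defined and the corresponding entries of the displayed vectors do not reflect the (identically zero) contribution of those inner neurons, so the literal formula can fail in the ``only if'' direction; your convention of discarding such indices before reducing is the correct reading and should be stated explicitly.
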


\section{Discussion and Conclusion}

The article delves into the intricate relationship between the linear independence of neurons and their parameters, offering insightful contributions that shed light on this fundamental aspect. We make a comprehensive exploration of neurons with diverse structures. First, in Section \ref{Section Theory of general neurons} we study the structure of the set $\tilde{\calZ}(\{m_l\}_{l=1}^L)$ of parameters making them linearly dependent, and in particular, to what extent can Embedding Principle characterize it. Specifically, we find specific activation function for which $\tilde{\calZ}(\{m_l\}_{l=1}^L)$ is minimal, and Embedding Principle fully characterize it.  Based on this, we extend our investigation to neurons without bias, demonstrating that for any activation $\sigma$ vanishing at the origin, we can slightly ``perturb it" so that the linear independence of corresponding neurons can still be locally described by the Embedding Principle. This nuanced analysis highlights the adaptability and versatility of neural networks in accommodating different activation functions while maintaining linear independence. Further research may focus on extending this result to the whole parameter space. \\ 

Besides, in Section \ref{Section Misc} we provide a thorough treatment of the linear independence for both two-layer and three-layer neurons. For two-layer neurons, we identify three classes of activation function and fully describe how the parameters of these neurons affect their linear independence. These classes of activations include commonly used ones such as Sigmoid, Tanh, etc. Meanwhile, we show by example that for two-layer neurons with bias or three-layer neurons, and with generic activations (Definition \ref{Defn Generic activation}), the relationship between linear independence of neurons and parameters could be very complicated, contrary to the previous studies about such activations. For three-layer neurons, we focus on ``Sigmoid-type" activation and also establish complete description of the relationship between linear independence of neurons and parameters. \\

Finally, we note that our paper introduces innovative methodologies to deep learning research, leveraging tools from analytic function theory, complex analysis, and Fourier analysis and developing novel, systematic proof strategies. We hope that such tools and ideas would give fresh insights and advancements for future studies of neural networks (and in general other machine learning models).

\clearpage
\bibliography{main}

\begin{thebibliography}{12}
\providecommand{\natexlab}[1]{#1}
\providecommand{\url}[1]{\texttt{#1}}
\expandafter\ifx\csname urlstyle\endcsname\relax
  \providecommand{\doi}[1]{doi: #1}\else
  \providecommand{\doi}{doi: \begingroup \urlstyle{rm}\Url}\fi

\bibitem[Zhang et~al.(2024)Zhang, Zhang, and Luo]{LZhangCrit}
Leyang Zhang, Yaoyu Zhang, and Tao Luo.
\newblock Geometry of critical sets and existence of saddle branches for
  two-layer neural networks.
\newblock 2024.
\newblock \doi{arXiv:2405.17501}.

\bibitem[Luo et~al.(2023)Luo, Zhang, and Zhang]{LZhangGlobal}
Tao Luo, Leyang Zhang, and Yaoyu Zhang.
\newblock Geometry and local recovery of global minima of two-layer neural
  networks at overparameterization.
\newblock 2023.
\newblock \doi{arXiv:2309.00508}.

\bibitem[Zhang et~al.(2021)Zhang, Zhang, Luo, and Xu]{YZhangEBDD}
Yaoyu Zhang, Zhongwang Zhang, Tao Luo, and Zhiqin~John Xu.
\newblock Embedding principle of loss landscape of deep neural networks.
\newblock \emph{NeurIPS}, 34, 2021.

\bibitem[Li et~al.(2022)Li, Zhang, Luo, and Xu]{YZhangEBDD2}
Yuqing Li, Zhongwang Zhang, Tao Luo, and Zhiqin~John Xu.
\newblock Embedding principle: a hierarchical structure of loss landscape of
  deep neural networks.
\newblock \emph{Journal of Machine Learning}, 1(1):\penalty0 60--113, 2022.

\bibitem[Fukumizu and Amari(2000)]{KFukumizu}
Kenji Fukumizu and Shun'ichi Amari.
\newblock Local minima and plateaus in hierarchical structures of multilayer
  perceptrons.
\newblock \emph{Neural Networks}, 13:\penalty0 317--327, 2000.

\bibitem[Simsek et~al.(2021)Simsek, Ged, Jacot, Spadaro, Hongler, Gerstner, and
  Brea]{BSimsek}
Berfin Simsek, Francois Ged, Arthur Jacot, Francesco Spadaro, Clement Hongler,
  Wulfram Gerstner, and Johanni Brea.
\newblock Geometry of the loss landscape in overparametrized neural networks:
  Symmetry and invariances.
\newblock \emph{Proceedings on Machine Learning Research}, 139, 2021.

\bibitem[Sun et~al.(2020)Sun, Li, Liang, Ding, and Srikant]{RSun}
Ruoyu Sun, Dawei Li, Shiyu Liang, Tian Ding, and Rayadurgam Srikant.
\newblock The global landscape of neural networks: An overview.
\newblock \emph{Nonconvex Optimization for Signal Processing and Machine
  Learning}, 2020.

\bibitem[Ito(1996)]{YIto}
Yoshifusa Ito.
\newblock Nonlinearity creates independence.
\newblock \emph{Advances in Computational Mathematics}, 5:\penalty0 189--203,
  1996.

\bibitem[Krantz and Parks(2002)]{SKrantz}
S.G. Krantz and H.R. Parks.
\newblock \emph{A Primer of Real Analytic Functions}.
\newblock Birkhauser Advanced Texts. Springer Science+Business Media New York,
  2nd edition, 2002.

\bibitem[Ghorbani(2021)]{AGhorbani}
Alex Ghorbani.
\newblock The tame geometry of o-minimal structures.
\newblock pages 1--15, 2021.

\bibitem[Wilkie(1996)]{AWilkie}
Alex~James Wilkie.
\newblock Model completeness of the ordered field of real numbers by restricted
  pfaffian functions and the exponential function.
\newblock 9\penalty0 (4), 1996.

\bibitem[Lee(2013)]{JLeeSmooth}
John~M. Lee.
\newblock \emph{Introduction to Smooth Manifolds}.
\newblock Springer Science+Business Media, 2nd edition, 2013.

\end{thebibliography}

\end{document}